\renewcommand{\fnum@algorithm}{\fname@algorithm}
\newcommand{\smallminus}{\scalebox{0.7}[1.0]{$-$}}
\newcommand{\smallequality}{\scalebox{0.75}[1.0]{$=$}}
\DeclareSymbolFont{AMSb}{U}{msb}{m}{n}
\DeclareMathSymbol{\N}{\mathbin}{AMSb}{"4E}
\DeclareMathSymbol{\Z}{\mathbin}{AMSb}{"5A}
\DeclareMathSymbol{\R}{\mathbin}{AMSb}{"52}
\DeclareMathSymbol{\Q}{\mathbin}{AMSb}{"51}
\DeclareMathSymbol{\erert}{\mathbin}{AMSb}{"50}
\DeclareMathSymbol{\I}{\mathbin}{AMSb}{"49}
\DeclareMathSymbol{\C}{\mathbin}{AMSb}{"43}
\def\argmax{\mbox{\rm argmax}}
\def\argmin{\mbox{\rm argmin}}
\newcommand{\mynote}[2]{{\textcolor{#1}{ #2}}}
\definecolor{gray}{gray}{0.4}
\newcommand{\gray}[1]{\mynote{gray}{{\footnotesize #1}}}
\newcommand{\remove}[1]{}
\newtheorem{mytheorem}{Theorem}[section]
\newtheorem{mylemma}[mytheorem]{Lemma}
\newtheorem{mydefinition}[mytheorem]{Definition}
\newtheorem{myremark}[mytheorem]{Remark}
\newtheorem{notation}[mytheorem]{Notation}
\newtheorem{claim}[mytheorem]{Claim}
\newtheorem{observation}[mytheorem]{Observation}
\newcommand{\AAA}{\mathcal A}
\newcommand{\BBB}{\mathcal B}
\newcommand{\HHH}{\mathcal H}
\newcommand{\III}{\mathcal I}
\newcommand{\eps}{\varepsilon}
\newcommand{\cost}{{\rm cost}}
\newcommand{\diam}{{\rm diam}}
\newcommand{\GAPTR}{\operatorname{\rm GAP-TR}}
\def\view{{\rm{View}}}
\newcommand{\poly}{\mathop{\rm poly}}
\def\E{\operatorname*{\mathbb{E}}}
\def\Q{\operatorname*{\mathbb{Q}}}
\def\poly{\mathop{\rm{poly}}\nolimits}
\def\OPT{\mathop{\rm{OPT}}\nolimits}
\def\opt{\mathop{\scriptscriptstyle \rm opt}}
\newcommand{\thickhline}{%
    \noalign {\ifnum 0=`}\fi \hrule height 1pt
    \futurelet \reserved@a \@xhline
}
\newcolumntype{"}{@{\hskip\tabcolsep\vrule width 1pt\hskip\tabcolsep}}
\newlength{\fboxhsep}
\newlength{\fboxvsep}
\newlength{\fboxtoprule}
\newlength{\fboxbottomrule}
\newlength{\fboxleftrule}
\newlength{\fboxrightrule}
\def\@frameb@xother#1{%
  \@tempdima\fboxtoprule
  \advance\@tempdima\fboxvsep
  \advance\@tempdima\dp\@tempboxa
  \hbox{%
    \lower\@tempdima\hbox{%
      \vbox{%
        \hrule\@height\fboxtoprule
        \hbox{%
          \vrule\@width\fboxleftrule
          #1%
          \vbox{%
            \vskip\fboxvsep
            \box\@tempboxa
            \vskip\fboxvsep}%
          #1%
          \vrule\@width\fboxrightrule}%
        \hrule\@height\fboxbottomrule}%
    }%
  }%
}
\long\def\fboxother#1{%
  \leavevmode
  \setbox\@tempboxa\hbox{%
    \color@begingroup
    \kern\fboxhsep{#1}\kern\fboxhsep
    \color@endgroup}%
  \@frameb@xother\relax}
\def\@opargbegintheorem#1#2#3{\trivlist
\item[\hskip\dimexpr\labelsep+0pt
\relax{\bf #1\ #2}]({\bf #3})\, \itshape}
\begin{document}

\title{Locally Private k-Means Clustering}

\author{\name Uri Stemmer \email u@uri.co.il \\
       \addr Ben-Gurion University, Beer-Sheva, Israel\\
			       Google Research, Tel Aviv, Israel}

\editor{Mehryar Mohri}
\maketitle

\begin{abstract}%   <- trailing '%' for backward compatibility of .sty file
We design a new algorithm for the Euclidean $k$-means problem that operates in the local model of differential privacy. Unlike in the non-private literature, differentially private algorithms for the $k$-means objective incur both additive and multiplicative errors. Our algorithm significantly reduces the additive error while keeping the multiplicative error the same as in previous state-of-the-art results. Specifically, on a database of size $n$, our algorithm guarantees $O(1)$ multiplicative error and $\approx n^{1/2+a}$ additive error for an arbitrarily small constant $a>0$. All previous algorithms in the local model had additive error $\approx n^{2/3+a}$. Our techniques extend to $k$-median clustering.

We show that the additive error we obtain is almost optimal in terms of its dependency on the database size $n$. Specifically, we give a simple lower bound showing that every locally-private algorithm for the $k$-means objective must have additive error at least $\approx\sqrt{n}$.
\end{abstract}

\begin{keywords}
  Differential privacy, local model, clustering, k-means, k-median.
\end{keywords}

\section{Introduction}

In center-based clustering, we aim to find a ``best'' set of centers (w.r.t.\ some cost function), and then partition the data points into clusters by assigning each data point to its nearest center. With over 60 years of research, center-based clustering is an intensively-studied key-problem in unsupervised learning (see \cite{hartigan1975clustering} for a textbook).  One of the most well-studied problems in this context is the {\em Euclidean $k$-means problem}. In this problem we are given a set of input points $S\subseteq\R^d$ and our goal is to identify a set $C$ of $k$ {\em centers} in $\R^d$, approximately minimizing the sum of squared distances from each input point to its nearest center. This quantity is referred to as the {\em cost} of the centers w.r.t.\ the set of points, denoted as $\cost_S(C)=\sum_{x\in S}\min_{c\in C}\|x-c\|^2$.

The huge applicability of $k$-means clustering, together with the increasing awareness and demand for user privacy, motivated a long line of research on {\em privacy preserving} $k$-means clustering. In this work we study the Euclidean $k$-means problem in the {\em local model of differential privacy (LDP)}.
Differentially private algorithms work in two main modalities: {\em trusted-curator} and {\em local}. The {\em trusted-curator} model assumes a trusted curator that collects all the personal information and then analyzes it. 
The privacy guarantee in this model is that the {\em outcome} of the analysis ``hides'' the information of any single individual, but this information is not hidden from the trusted curator. In contrast, the {\em local} model of differential privacy, which is the model we consider in this work, does not involve a trusted curator. In this model, there are $n$ users and an untrusted server, where each user $i$ is holding a private input item $x_i$ (a point in $\R^d$ in our case), and the server's goal is to compute some function of the inputs (approximate the $k$-means objective in our case). However, in this model, the users do not send their data as is to the server. Instead, every user randomizes her data locally, and only sends noisy reports to the server, who aggregates all the reports. Informally, the privacy requirement is that the input of user $i$ has almost no effect on the distribution on the messages that user $i$ sends to the server. We refer to the collection of user inputs $S = (x_1,\dots ,x_n)$ as a ``distributed database'' (as it is not stored in one location, and every $x_i$ is only held locally by user $i$). This model is used in practice by large corporations to ensure that private data never reaches their servers in the clear.

\begin{table*}[t]
\begin{adjustbox}{center}
\bgroup
\def\arraystretch{1.5}
\begin{tabular}{|c||c|c|c|}
\hline
{\bf Reference} & {\bf \# Rounds} & {\bf Multiplicative Error} & {\bf Additive Error}\\
\hline
\hline
\cite{NS18_1Cluster} & $O(k\log n)$ & $O(k)$ & $\tilde{O}\left(  n^{2/3+a} \cdot k^{4/3} \cdot d^{1/3} \right)$  \\
\hline
\cite{KaplanS18} & $O(1)$ & $O(1)$ & $\tilde{O}\left( n^{2/3+a} \cdot k^2 \cdot d^{1/3}\right)$ \\
\hline
\hline
This work & $O(1)$ & $O(1)$ & $\tilde{O}\left(n^{1/2+a}\cdot k \cdot \sqrt{d}\right)$ \\
\hline
\end{tabular}
\egroup
\end{adjustbox}
  \vspace{-5px}\caption{{\small\sffamily{Locally-private algorithms for $k$-means in the $d$-dimensional Euclidean space. Here $n$ is the number of input points, $k$ is the number of desired centers, $d$ is the dimension, and $a>0$ is an arbitrarily small constant (the constant hiding in the multiplicative error depends on $a$). We assume that input points come from the unit ball. For simplicity, we use the $\tilde{O}$ notation to hide logarithmic factors in $k,n,d$, as well as the dependency on the privacy parameters $\eps,\delta$ and the failure probability $\beta$.}}}\label{tbl:intro}
\end{table*}

As minimizing the $k$-means objective objective is NP-hard (even without privacy constraints), the literature has focused on approximation algorithms, with the current (non-private) state-of-the-art construction of \cite{AhmadianNSW17} achieving a multiplicative error of 6.357. That is, the algorithm of \cite{AhmadianNSW17} identifies a set of $k$ centers whose cost is no more than $6.357\cdot\OPT_S(k)$, where $\OPT_S(k)$ denotes the lowest possible cost.
Unlike in the non-private literature, it is known that every {\em differentially private} algorithm for approximating the $k$-means objective must have an {\em additive} error, which scales with the diameter of the input space~\cite{KaplanS18}. This is true both in the local model and in the trusted-curator model, even for computationally unbounded algorithms. Hence, a standard assumption for private $k$-means is that the input points come from the $d$-dimensional ball of radius $\Lambda$ around the origin $\BBB(0,\Lambda)$. This is the setting we consider in this work, where we assume that $\Lambda=1$ in the introduction.

There has been a significant amount of work aimed at constructing differentially private $k$-means algorithms that work in the {\em trusted-curator} model.\footnote{\cite{BDMN05,NRS07,FFKN09,McSherry09,GuptaLMRT10,Mohan2012,Wang2015,NockCBN16,Su2016,NSV16,DannyPrivatekMeans,Balcan17a,NS18_1Cluster,HuangL18,KaplanS18,ShechnerSS20,Ghazi0M20,CohenKMST21}} The current state-of-the-art construction by \cite{Ghazi0M20} obtains an $O(1)$ multiplicative error (which can be made arbitrarily close to the best non-private error) and $\poly(\log(n),k,d)$ additive error. That is, given a set of $n$ input points $S\in(\R^d)^n$, the algorithm of~\cite{Ghazi0M20} privately identifies a set of $k$ centers with cost at most $O(1)\cdot\OPT_S(k)+\poly(\log(n),k,d)$.

On the other hand, for the {\em local} model of differential privacy, only two constructions are available (with provable utility guarantees). The first construction, by \cite{NS18_1Cluster}, obtains $O(k)$ multiplicative error and $\approx n^{2/3
}$ additive error.\footnote{The error bounds stated throughout the introduction are simplified; see Table~1 for more details.} In addition to the relatively large multiplicative and additive errors, another downside of the algorithm of \cite{NS18_1Cluster} is that it requires $O(k\cdot\log(n))$ rounds of {\em interaction} between the users and the untrusted server. Following the work of \cite{NS18_1Cluster}, an improved locally-private $k$-means algorithm was presented by \cite{KaplanS18}, which requires only $O(1)$ rounds of interaction and guarantees a multiplicative error of $O(1)$ and an additive error of $\approx n^{2/3}$. That is, the algorithm of \cite{KaplanS18} reduced the number of interaction rounds while at the same time reducing the multiplicative error to a constant. However, the additive error still remained large. 
In this work we reduce the additive error to $\approx\sqrt{n}$ while keeping all other complexities the same, i.e.,\ with $O(1)$ rounds of interaction and with $O(1)$ multiplicative error.\footnote{Throughout the introduction, we write $\approx\sqrt{n}$ and $\approx n^{2/3}$ to mean $O(n^{1/2+a})$ and $O(n^{2/3+a})$, respectively.}

We remark that additive error of $\sqrt{n}$ is what one would expect in the local model of differential privacy, as this turned out to be the correct dependency of the error in $n$ for many other problems, including the {\em heavy-hitters} problem, {\em median} estimations, answering {\em counting queries}, and more. Indeed, in Section~\ref{sec:lower} we show that every locally-private algorithm for the $k$-means %
 must have additive error $\Omega(\sqrt{n})$. Hence, our positive result is almost optimal in terms of the dependency of the additive error in the database size $n$.

\subsection{Existing Techniques}\label{sec:existingTech}

Before presenting the new ideas of this work, we need to understand the reasons for why the previous results only achieved an additive error of $\approx n^{2/3}$. To that end, we give here an informal overview of the construction of \cite{KaplanS18}. 
Let $S=(x_1,\dots,x_n)\in(\R^d)^n$ be a (distributed) database. % 
At a high level, the algorithm of \cite{KaplanS18} can be summarized as follows:

\begin{center}
\noindent\fboxother{
\parbox{0.9\linewidth}{
\hfill
\begin{enumerate}[leftmargin=10pt,rightmargin=3pt,itemsep=5pt,topsep=0pt]
\item Privately identify a set of {\em candidate centers} $Y\subseteq\R^d$ that contains a subset $Y^*\subseteq Y$ of size $k$ with low cost, say $\cost_{S}(Y^*)\leq O(1)\cdot\OPT_S(k)+\Gamma$ for some error parameter $\Gamma$.
	\item For every $y\in Y$, let $\#_S(y)$ denote the number of input points $x_i\in S$ such that $y$ is their nearest candidate center, that is 
	\vspace{-5pt}
	$$\#_S(y)=|\{x\in S: y=\argmin_{y'\in Y}\|x-y'\|\}|,\vspace{-5pt}$$
	and let $\hat{\#}_S(y)$ be a noisy estimation of $\#_S(y)$, satisfying LDP.
	\item Post-process the set of candidate centers and the noisy counts to identify a set $C$ of $k$ centers that approximately minimizes 
	\vspace{-5pt}
	$$\cost_{Y,\hat{\#}}(C)=\sum\nolimits_{y\in Y}\hat{\#}_S(y)\cdot\min_{c\in C}\cdot\|y-c\|^2.\vspace{-20pt}$$	
\end{enumerate}
}}\\[0.5em]
{\small\bfseries\sffamily Figure 1:} 
{\small\sffamily{High level overview of the construction of \cite{KaplanS18}.}}
\end{center}

Step~2 is done using standard LDP counting tools (to be surveyed in Section~\ref{sec:prelim}). Step~1 is more involved, and we will elaborate on it later. For now, it suffices to say that for any $\sqrt{n}\lesssim\Gamma\lesssim n$, there is an LDP algorithm that is capable of identifying a set $Y$ of size $|Y|\approx n/\Gamma$ that contains a subset of $k$ centers $Y^*\subseteq Y$ such that $\cost_{S}(Y^*)\lesssim O(1)\cdot\OPT_S(k)+\Gamma$. 
The analysis then goes by arguing that for every set of $k$ centers $D$ we have that 
$$\cost_{Y,\hat{\#}}(D)\approx\cost_{Y,\#}(D)\approx\cost_S(D),$$ 
where $\cost_{Y,\#}(D)$ is the same as $\cost_{Y,\hat{\#}}(D)$ but with the ``true'' counts $\#_S(y)$ instead of the estimated counts $\hat{\#}_S(y)$. This means that the set $C$ computed in Step~3 also has a low $k$-means cost w.r.t.\ the input points $S$, and is hence a good output. The main question is how {\em tight} are these connections. %  
Using the fact that there is a subset %
$Y^*\subseteq Y$ with low $k$-means cost w.r.t.\ $S$, one can % 
show that the connection $\cost_{Y,\#}(D)\approx\cost_S(D)$ holds, informally, up to an additive error of $O(\Gamma)$.% 

The difficulty lies in the connection $\cost_{Y,\hat{\#}}(D)\approx\cost_{Y,\#}(D)$. As we mentioned, it is known that estimating counts under LDP generally incurs an additive error of $\Theta(\sqrt{n})$ (ignoring the dependency on all other parameters). As a result, for every $y\in Y$ the estimation error $\left|\hat{\#}_S(y)-\#_S(y)\right|$ might be as big as $\sqrt{n}$. Moreover, when comparing $\cost_{Y,\hat{\#}}(D)$ to $\cost_{Y,\#}(D)$, the different noises ``add up''. To illustrate this point, let $D$ be a possible set of centers, and observe that
\begin{align}
\cost_{Y,\hat{\#}}(D)&=\sum_{y\in Y}\hat{\#}_S(y)\cdot\min_{d\in D}\|y-d\|^2\nonumber\\
&\lesssim \sum_{y\in Y}\left(\#_S(y)+\sqrt{n}\right)\cdot\min_{d\in D}\|y-d\|^2\nonumber\\
&\leq\cost_{Y,\#}(D)+|Y|\cdot\sqrt{n}.\label{eq:additiveError}
\end{align}
Actually, it can be shown that the additive error only increases proportionally to $\sqrt{|Y|\cdot n}$, because of noise cancellations (as the sum of $|Y|$ independent noises only scales with $\sqrt{|Y|}$). In any case, at least with this type of an analysis, the error in the connection $\cost_{Y,\hat{\#}}(D)\approx\cost_{Y,\#}(D)$ scales with $\sqrt{|Y|\cdot n}$. Recall that the error in the other connection $\cost_{Y,\#}(D)\approx\cost_S(D)$ scales with $\Gamma\approx \frac{n}{|Y|}$. That is, the error in one of the two connections grows with $|Y|$, and the error in the second connection decreases with $|Y|$. These two requirements balance at $|Y|\approx n^{1/3}$, which results in an additive error of $\sqrt{n^{1/3}\cdot n}=n/n^{1/3}=n^{2/3}$. 
In a nutshell, this is the main reason for the large additive error in the construction of \cite{KaplanS18}. The construction of \cite{NS18_1Cluster} suffered from similar issues (although their algorithm is different).

\subsection{Our Contributions}

The takeaway from the above discussion is that in order to obtain an algorithm with small additive error, for every $y\in Y$, it suffices to ensure that $\hat{\#}_S(y)$ approximates $\#_S(y)$ to within a constant {\em multiplicative} factor. Indeed, in such a case Inequality~(\ref{eq:additiveError}) would be replaced with
\begin{align*}
\cost_{Y,\hat{\#}}(D)&=\sum_{y\in Y}\hat{\#}_S(y)\cdot\min_{d\in D}\|y-d\|^2\\
&\leq \sum_{y\in Y}O(1)\cdot \#_S(y)\cdot\min_{d\in D}\|y-d\|^2\\
&= O(1)\cdot\cost_{Y,\#}(D),
\end{align*}
which is acceptable (since we are aiming for a construction with a constant multiplicative error anyways). 
Observe that, as our noisy estimations are accurate to within an additive error of $\approx\sqrt{n}$, for every $y\in Y$ such that $\#_S(y)\gtrsim \sqrt{n}$ we already have that $\hat{\#}_S(y)$ approximates $\#_S(y)$ to within a constant multiplicative factor. However, this is not the case for candidate centers $y\in Y$ such that $\#_S(y)\ll \sqrt{n}$, and there could be many such candidates.

To summarize our discussion so far, we would like to identify a set $Y$ of candidate centers that satisfies the following two conditions.
\begin{enumerate}[leftmargin=74pt,itemsep=1pt]
	\item[{\bf Condition 1:}] $\exists Y^*\subseteq Y$ of size $k$ such that $$\cost_S(Y^*)\lesssim O(1)\cdot\OPT_S(k)+\sqrt{n}.$$
	\item[{\bf Condition 2:}] $\forall y\in Y$ we have $\#_S(y)\gtrsim\sqrt{n}$.
\end{enumerate}
While the set of candidate centers $Y$ computed in the previous works of \cite{NS18_1Cluster,KaplanS18} is guaranteed to satisfy the first condition above, we do not have any guarantee w.r.t.\ the second condition.

\medskip
\paragraph{First Attempt.} One might try to achieve the second condition above by simply deleting every $y\in Y$ such that $\hat{\#}_S(y)\lesssim\sqrt{n}$. This would indeed mean that, after the deletions, for every $y\in Y$ we have that %
 $\#_S(y)$ is at least $\sqrt{n}$. However, this might break condition 1. To see how this could happen, suppose that $k=d=2$, and consider a collection of points $p_1,\dots,p_{\sqrt{n}}$ around the point $(1,0)$, where every two points $p_i,p_j$ are at pairwise distance $\approx\rho$ (infinitely small), and all of them are within distance $\approx\rho$ to the point $(1,0)$. Now consider a database containing $(n-n^{3/4})$ copies of the point $(0,0)$, and $n^{1/4}$ copies of every $p_i$ (so that $S$ is of size $n$). Now suppose that $Y=\{(0,0),(0,1),p_1,\dots,p_{\sqrt{n}}\}$. Since our count estimations are only accurate up to an error of $\sqrt{n}$, we will have that $\hat{\#}_S(0,0)\approx n-n^{3/4}$, and that $\hat{\#}_S(y)\approx0$ for every other candidate center in $Y$. Hence, if we were to delete every $y\in Y$ with a small estimated count, then we would be left only with the point $(0,0)$, that misses the cluster around $(0,1)$, and hence $\cost_S(Y)\gtrsim n^{3/4}$, even though $\OPT_S(k)\approx 0$.

\medskip
\paragraph{Resolution.}
To overcome this challenge, we revisit the way in which the set of candidate centers $Y$ is constructed. 
Recall that the construction of \cite{KaplanS18} (described in Figure~1) is oblivious to the way in which the set of candidate centers $Y$ is constructed (the only requirement is that $Y$ contains a subset $Y^*$ with low $k$-means cost). For our new construction, we will need to identify additional properties of these candidate centers, that are specific to the way in which they are constructed. 

In more details, 
the set of candidate centers $Y$ is constructed in $\log(n)$ iterations, where during the $i$th iteration we identify ``large'' subsets of input points (at least $\gtrsim\sqrt{n}$ points) that can be enclosed in a ball of radius $r=2^{-i}$, and add to $Y$ a (privacy preserving) estimation for the average of every such subset of clustered input points. As the previous works of \cite{NS18_1Cluster,KaplanS18} showed, the set $Y$ constructed in this process contains (w.h.p.)\ a subset of $k$ centers with low $k$-means cost. Informally, the additional property that we will leverage is that for every candidate center $y$ that is constructed during the iteration with parameter $r$ there are $\gtrsim\sqrt{n}$ input points within distance $\lesssim r$ to $y$, which is true by the way in which the candidate centers are constructed. 

We will say that such a candidate center $y$ was {\em created for} the radius $r$. Moreover, we will say that an input point $x\in S$ {\em created} $y$, if $x$ was one of the points that $y$ was computed as their noisy average. 
So every candidate center has ``a lot'' ($\gtrsim\sqrt{n}$) input points who created it. Observe that this still does not guarantee that the resulting set $Y$ is such that every $y\in Y$ has ``a lot'' of neighbors in $S$, which is what we really wanted. This can happen, e.g., if some (or all) of the points who created the candidate center $y$ have a different candidate center that is closer to them.

To overcome this issue, we assign input points to candidate centers in a different way -- not by assigning every input point to its nearest candidate center. Informally, when assigning an input point $x\in S$ to a candidate center we give a slight preference to the candidate centers that $x$ created. We then estimate the {\em weight} of every candidate center $y\in Y$ according to this new assignment (where the {\em weight} of a candidate center is the number of input points assigned to it). We show that these new weights still allow us to estimate the cost of every set of centers w.r.t.\ the input points. In addition, with these new weights, we show that it is possible to delete every candidate center whose weight is lower than $\approx\sqrt{n}$. While this deletion step might delete many centers from $Y$, and in particular, might even delete the best $k$ centers from $Y$, we show that for every deleted candidate center $y$ there is a sequence of alternative candidate centers $y_1,y_2,\dots,y_w$ such that each of them could be a ``good substitute'' for $y$ and such that at least one of them is {\em not} deleted from $Y$. We obtain the following theorem (the details are given in Sections~\ref{sec:reform} and~\ref{sec:main}; see Theorem~\ref{thm:mainmain} for the formal statement). 

\begin{mytheorem}[informal]
There exists an LDP algorithm such that the following holds. When executed on a (distributed) database $S$ containing $n$ points in the $d$-dimensional unit ball, the algorithm returns a set $K$ of $k$ centers such that with high probability we have
$$
\cost_S(K)\leq O(1)\cdot\OPT_S(k)+\tilde{O}\left(k\cdot\sqrt{d} \cdot n^{0.5+a}\right),
$$
where $a>0$ is an arbitrarily small constant (the constant hiding in the multiplicative error depends on $a$).
\end{mytheorem}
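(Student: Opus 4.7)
The strategy is to follow the three-step template of \cite{KaplanS18} shown in Figure~1, but to instrument the candidate-center construction so that we can safely discard every $y \in Y$ whose true count is below $\approx \sqrt n$ without violating Condition~1. Once only candidates with $\#_S(y) \gtrsim \sqrt n$ remain, the standard $\Theta(\sqrt n)$ noise of LDP counting yields an $O(1)$ \emph{multiplicative} error on the weights, replacing the harmful $|Y|\sqrt n$ additive blow-up in~(\ref{eq:additiveError}) by a benign $O(1)\cdot \cost_{Y,\#}(D)$. Composed with the known connection $\cost_{Y,\#}(D)\approx \cost_S(D)$, this delivers the claimed $\tilde O(k\sqrt d\, n^{1/2+a})$ additive error.

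First I would build $Y$ in $O(\log n)$ scales $r = 2^{-i}$ using the LDP routine of \cite{NS18_1Cluster,KaplanS18}, which, at each scale, outputs a noisy average of every $\gtrsim \sqrt n$--sized subset of input points enclosable in a ball of radius $r$. For each resulting $y\in Y$ I record the scale $r(y)$ and the creator set $\mathrm{Cr}(y)\subseteq S$ of points whose noisy average produced $y$. The analysis of the prior works already exhibits a $Y^*\subseteq Y$ of size $k$ with $\cost_S(Y^*) \le O(1)\cdot\OPT_S(k) + \tilde O(k\sqrt d\, n^{1/2+a})$, so Condition~1 is in hand. The new ingredient is the assignment rule: rather than sending each $x\in S$ to its nearest candidate, I let $x$ prefer, among all candidates within a $(1+\alpha)$-factor of its minimum distance, any $y$ with $x\in\mathrm{Cr}(y)$, with remaining ties broken by smaller $r(y)$. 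Writing $w_S(y)$ for the number of points assigned to $y$, a slack-of-$\alpha$ triangle inequality bounds the modified cost $\cost_{Y,w}(D)$ by $(1+\alpha)^2$ times the nearest-assignment cost for every $D$, so Condition~1 survives the reweighting. I would then privately estimate $\hat w_S(y)$ via the LDP counting tools from Section~\ref{sec:prelim}, and delete every $y$ with $\hat w_S(y) \lesssim \sqrt n$ to obtain $Y'$.

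The delicate step, and the one I expect to be the main obstacle, is showing that the deletion does not break Condition~1. The plan is a substitute-chain argument: for each $y_0 \in Y^*$ deleted from $Y$, I exhibit a sequence $y_0, y_1,\ldots, y_w \in Y$ with $y_w \in Y'$ such that every creator of $y_j$ reassigned away under the modified rule is reassigned to $y_{j+1}$. The creator preference forces $y_{j+1}$ to beat $y_j$ on $\mathrm{Cr}(y_j)$ up to a $(1+\alpha)$-factor, and since $\mathrm{Cr}(y_j)$ lies within distance $r(y_j)$ of $y_j$ by construction, this pins $\|y_{j+1}-y_j\| = O(r(y_j))$. The tie-breaking rule insists on strictly smaller radii, so the chain has length at most $O(\log n)$ and telescopes to $\|y_0-y_w\| = O(r(y_0))$. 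Replacing each deleted $y_0\in Y^*$ by its survivor $y_w$ yields $\tilde Y^* \subseteq Y'$ of size $k$ whose cost exceeds $\cost_S(Y^*)$ by at most a constant factor plus $\tilde O(k\sqrt d\, n^{1/2+a})$, via the standard expansion $\|x-y_w\|^2 \le 2\|x-y_0\|^2 + 2\|y_0-y_w\|^2$ together with absorbing the $O(\sqrt n)$ ``orphan'' creators of $y_0$ into the additive budget. Feeding $(Y', \hat w_S)$ to any non-private $O(1)$-approximate weighted $k$-means returns $K$, and composing $\cost_{Y',\hat w}(K) \le O(1)\cdot \cost_{Y',\hat w}(\tilde Y^*)$ with the two connections $\cost_{Y',\hat w}(D) = O(1)\cdot \cost_{Y',w}(D)$ (from Condition~2 on $Y'$) and $\cost_{Y',w}(D) = O(1)\cdot \cost_S(D) \pm \tilde O(k\sqrt d\, n^{1/2+a})$ (from the cost-preservation of the modified assignment and the substitute step), evaluated at $D=K$ and $D=\OPT$, yields the theorem.
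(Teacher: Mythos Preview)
Your plan captures the paper's core ideas---creator-preference assignment, pruning of light candidates, and a substitute-chain argument---but the execution differs from the paper in two places that, as written, do not go through.

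First, your assignment rule does not guarantee that the chain has strictly decreasing radii. Under your rule, if $x\in\mathrm{Cr}(y_j)$ but $y_j$ lies outside the $(1{+}\alpha)$-neighborhood of $x$'s true nearest candidate, then $x$ is assigned to that nearest candidate $y'$, and nothing forces $r(y')<r(y_j)$; the chain need not terminate and the telescoping bound $\|y_0-y_w\|=O(r(y_0))$ is unjustified. The paper's rule is engineered differently: user $i$ first identifies the \emph{smallest} radius $r_i$ at which she creates a center, call it $y(i,x_i,r_i)$, and then compares that center only against candidates produced at radii \emph{strictly smaller} than $r_i$, taking the closer of the two. Any reassignment away from $y(i,x_i,r_i)$ is therefore automatically to a smaller scale, which is exactly what drives the inductive step (Claim~\ref{claim:InductionStep}) and hence the geometric-series bound in Claim~\ref{claim:ByInduction}.

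Second, you feed $(Y',\hat w_S)$ to the non-private solver using the original weights restricted to $Y'$. The points assigned to deleted candidates then vanish from the weighted proxy, and there can be up to $|Y|\cdot\tilde O(\sqrt n)$ of them---not $O(\sqrt n)$ as your ``orphan creators of $y_0$'' remark suggests. Your claimed two-sided relation $\cost_{Y',w}(D)=O(1)\cdot\cost_S(D)\pm\tilde O(k\sqrt d\,n^{1/2+a})$ therefore fails in the direction $\cost_S(D)\le O(1)\cdot\cost_{Y',w}(D)+\cdots$, since those orphans' contribution to $\cost_S(D)$ is unaccounted for. The paper closes this with an explicit second round: after pruning to $W$, every user is \emph{re-assigned} (set $b(i,x_i)$ to the nearest survivor when $a(i,x_i)\notin W$) and the weights $\hat b$ are \emph{re-estimated} via another LDP histogram. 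The chain argument is then used not only to exhibit a good $W^*\subseteq W$ (your $\tilde Y^*$; Claim~\ref{claim:subsetW}) but also, crucially, to bound the total re-assignment cost $\sum_i\|x_i-b(i,x_i)\|^p$ (Claim~\ref{claim:WvsOPT}); this is what makes the two-sided sandwich $\cost_S(D)\approx\cost_B(D)\approx\cost_{\hat B}(D)$ hold for \emph{every} $D$ and lets the final composition go through.
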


We remark that all of our techniques extend to $k$-median clustering (this will be made precise in the technical sections). 
In Section~\ref{sec:lower} we show that the additive error achieved by our construction is almost optimal. Specifically, we present a lower bound showing that every %
 LDP algorithm for the $k$-means must have additive error $\Omega(\sqrt{n})$. This lower bound follows from a simple reduction from a task (related to) {\em counting bits} to the task of approximating the $k$-means objective of the data, together with known lower bounds for counting bits under LDP. We obtain the following theorem (see Theorem~\ref{thm:lowerlower} for the formal statement).

\begin{mytheorem}[informal]
Every LDP algorithm for approximating the $k$-means objective of a (distributed) database of size $n$ must have additive error $\Omega(\sqrt{n})$.
\end{mytheorem}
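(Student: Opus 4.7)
The plan is a reduction from LDP bit-counting. Recall the standard lower bound: any $\eps$-LDP protocol for estimating $m = \sum_i b_i$ of $n$ private bits must incur additive error $\Omega(\sqrt{n})$ on some input, for constant $\eps$ and constant failure probability. I will show that any LDP algorithm $A$ approximating the $k$-means objective (with additive error $\gamma$ and multiplicative error $\alpha = O(1)$) yields a counting protocol with error $O(\gamma + \sqrt{n})$, so the counting lower bound forces $\gamma = \Omega(\sqrt{n})$.

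The reduction encodes bits as $k$-means inputs: given bits $b_1,\ldots,b_n \in \{0,1\}$, user $i$ simulates being a $k$-means user whose input is $x_i = b_i\cdot e_1 \in \BBB(0,1) \subseteq \R^d$ and participates in $A$'s LDP protocol. The server runs $A$'s aggregation to obtain an approximation $\hat{V}$ of $\OPT_S(k)$. Specialize to $k = 1$: the optimal $1$-mean center is $(m/n)\,e_1$ and $\OPT_S = m(n-m)/n$. Restrict attention to the sparse regime $0 \le m \le c\sqrt{n}$; there $\OPT_S = m - m^2/n = m \pm O(1)$, and so
$$|\hat{V} - m| \le |\hat{V} - \OPT_S| + |\OPT_S - m| \le (\alpha - 1)\OPT_S + \gamma + O(1) = O(\gamma + \sqrt{n}),$$
using $\alpha = O(1)$ and $\OPT_S = O(\sqrt{n})$ in this regime. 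Applying the LDP counting lower bound to bit-strings in the sparse regime (which remains $\Omega(\sqrt{n})$ by a standard packing argument restricted to that family) forces $O(\gamma + \sqrt{n}) = \Omega(\sqrt{n})$, hence $\gamma = \Omega(\sqrt{n})$.

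The main obstacle is handling the multiplicative slack $\alpha > 1$, which in principle permits the algorithm to overestimate $\OPT_S$ by an $\alpha$-factor. Restricting to the sparse regime is what tames this: there $\OPT_S$ itself is only $O(\sqrt{n})$, so the multiplicative overestimation $(\alpha-1)\OPT_S$ is comparable to the counting-lower-bound target and does not overwhelm it. A secondary subtlety is that the reduction reads $\hat{V}$ as a direct cost estimate; if instead $A$ outputs only centers $\hat{C}$, one recovers a cost estimate with $O(\sqrt{n})$ additional additive error by an auxiliary LDP summation of the per-user quantities $\min_j \|x_i - \hat{c}_j\|^2$, which leaves the $\Omega(\sqrt{n})$ conclusion intact.
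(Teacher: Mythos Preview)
Your final implication is a non-sequitur: from ``the counting protocol has error $O(\gamma+\sqrt{n})$'' and ``any LDP counting protocol has error $\Omega(\sqrt{n})$'' you conclude $\gamma=\Omega(\sqrt{n})$, but this does not follow. The inequality $C_1(\gamma+\sqrt{n})\ge C_2\sqrt{n}$ is vacuously satisfied whenever $C_1\ge C_2$, regardless of $\gamma$. And in your reduction the constant $C_1$ is not under your control: it absorbs both the multiplicative slack $(\alpha-1)\OPT_S$, which is $\Theta(\sqrt{n})$ in the sparse regime for any fixed $\alpha>1$, and the $\Theta(\sqrt{n})$ error of the auxiliary LDP summation you invoke to extract a cost value from the output centers. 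Each of these by itself already saturates the counting lower bound, so nothing is forced on $\gamma$. (You could try to shrink the multiplicative contribution by taking the sparse threshold $c$ very small, but then the Gap-Threshold hard instance at $m=\tau=\Omega(\sqrt n)$ falls outside your sparse regime; and the auxiliary-sum error cannot be shrunk without spending more privacy budget than $A$ allows.)

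The paper sidesteps exactly this issue. It reduces from Gap-Threshold with $k=2$: users with $b_i=0$ sit at $0$ and users with $b_i=1$ sit at a random point $\mu$ whose location is hidden from the server unless some user has $b_i=1$. When $\sum b_i\ge\tau$ the instance has $\OPT=0$, so the multiplicative factor is irrelevant and any algorithm with additive error below $(r/2)^p\tau$ must place a center near $\mu$; when $\sum b_i=0$ the algorithm has no information about $\mu$ and can only guess it with small probability. No cost estimate is needed --- the server just checks whether an output center lands in the random interval --- so there is no auxiliary $\Theta(\sqrt{n})$ loss either. The two design choices that make this work are (i) arranging $\OPT=0$ on the ``many ones'' side, and (ii) reading the answer off the \emph{location} of the output centers rather than from a privately-estimated cost value.
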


\subsection{Followup Work}
In this work, we design a new locally-private algorithm for the Euclidean $k$-means and $k$-median problems. Our algorithms 
require a constant number of interaction rounds between the users and the untrusted server, and guarantee $O(1)$ multiplicative error and $\approx n^{1/2+a}$ additive error for an arbitrarily small constant $a>0$. Following our work, \cite{ChangGKM21} presented a different algorithm that uses only a single round of interaction and obtains improved multiplicative and additive error guarantees. In particular, the algorithm of \cite{ChangGKM21} obtains additive error $\tilde{O}\left(\sqrt{n}\right)$ rather than $O\left( n^{1/2+a} \right)$ as in our result.

\section{Preliminaries}\label{sec:prelim}

In $k$-means clustering we aim to partition $n$ points into $k$ clusters in which each point $x$ belongs to the cluster whose mean is closest to $x$. We will also consider the $k$-median clustering objective, where we aim to place centroids at the {\em median} of every cluster (rather than the mean). Formally, 
for a set of points $S\in(\R^d)^n$ and a set of centers $C\subseteq \R^d$, the {\em $k$-means cost} of $C$ w.r.t.\ the points $S$ is defined as
$$\cost^2_S(C)=\sum_{x\in S}\min_{c\in C}\|x-c\|^2,$$
and the {\em $k$-median cost} is defined as 
$$\cost^1_S(C)=\sum_{x\in S}\min_{c\in C}\|x-c\|.$$
For $p\in\{1,2\}$ and for a {\em weighted} set $S=\{(x_1,\alpha_1),\dots,(x_n,\alpha_n)\}\in(\R^d\times\R)^n$, the {\em weighted cost} is 
$$\cost^p_S(C)=\sum_{(x,\alpha)\in S} \alpha\cdot\min_{c\in C}\|x-c\|^p.$$
We use $\OPT^p_S(k)$ to denote the lowest possible cost of $k$ centers w.r.t.\ $S$. That is, 
$$\OPT^p_S(k) = \min_{C\subseteq \R^d,\; |C|=k}\{\cost^p_S(C)\}.$$

\subsection{Local Differential Privacy}

The local model of differential privacy was formally defined by \cite{DMNS06} and \cite{KLNRS08}. We give here the formulation presented by \cite{Vadhan2016}. 
Consider $n$ parties $P_1,\dots,P_n$, where each party is holding a data item $x_i$. We denote $S=(x_1,\dots,x_n)$ and refer to $S$ as a {\em distributed database}. A {\em protocol} proceeds in a sequence of rounds until all (honest) parties terminate. Informally, in each round, each party selects a message to be broadcast based on its input, internal coin tosses, and all messages received in previous rounds. The {\em output} of the protocol is specified by a deterministic function of the transcript of messages exchanged. 
For some $j\in[n]$, we consider an {\em adversary} controlling all parties other than $P_j$. Given a particular adversary strategy $A$, we write $\view_{A}((A\leftrightarrow(P_1,\dots,P_n))(S))$ for the random variable that includes everything that $A$ sees when participating in the protocol $(P_1,\dots,P_n)$ on input $S=(x_1,\dots,x_n)$.

\begin{mydefinition}[\cite{DMNS06,KLNRS08,BeimelNO08,Vadhan2016}]
A protocol $P=(P_1,\dots,P_n)$ satisfies $(\eps,\delta)$-local differential privacy (LDP) if, for every $j\in[n]$, for every adversary $A$ controlling all parties other than $P_j$, for every two datasets $S,S'$ that differ on $P_j$'s input (and are equal otherwise), the following holds for every set $T$:
$$
\Pr[\view_{A}((A\leftrightarrow(P_1,\dots,P_n))(S))\in T] \leq e^{\eps}\cdot\Pr[\view_{A}((A\leftrightarrow(P_1,\dots,P_n))(S'))\in T]+\delta.
$$
\end{mydefinition}

As is standard in the literature on local differential privacy, we will consider protocols in which there is a unique player, called {\em the server}, which has no inputs. All other players are called {\em users}. Typically, users do not communicate with other users directly, only with the server.

\subsubsection{Counting Queries and Histograms}

For a database $S=(x_1,\dots,x_n)\in X^n$ and a domain element $x\in X$, we use $f_S(x)$ to denote the multiplicity of $x$ in $S$, i.e., $f_S(x)=|\{x_i\in S : x_i=x\}|$. 
One of the most basic tasks in the local model of differential privacy is computing {\em histograms}, in which the goal is to estimate $f_S(x)$ for every domain element $x$.

\begin{mytheorem}[\cite{HsuKR12,BassilyS15,BNST17,BunNS18}]\label{thm:HH}
Fix $\beta,\eps\leq1$. There exists a non-interactive $(\eps,0)$-LDP protocol that operates on a (distributed) database $S\in X^n$
for some finite set $X$, and returns a mapping $\hat{f}:X\rightarrow\R$ such that the following holds. For every choice of $x\in X$, with probability at least $1-\beta$, we have that
$$\left|\hat{f}(x) - f_S(x)\right|\leq \frac{3}{\eps}\cdot\sqrt{n\cdot \log\left(\frac{4}{\beta}\right)}.$$
\end{mytheorem}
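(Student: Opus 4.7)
The plan is to give a single-round local randomizer whose aggregation produces, for every $x \in X$, an unbiased low-variance estimator of $f_S(x)$, and then to apply Hoeffding's inequality pointwise. Concretely, each user $P_i$ holding $x_i$ independently samples a uniformly random function $h_i : X \to \{-1, +1\}$, sets $b_i = h_i(x_i)$, and transmits the pair $(h_i, \tilde{b}_i)$, where $\tilde{b}_i$ is the standard $\eps$-randomized response of $b_i$ (it equals $b_i$ with probability $p = e^\eps/(1+e^\eps)$ and $-b_i$ with probability $1-p$). The server then outputs
$$\hat{f}(x) = \frac{1}{2p-1}\sum_{i=1}^{n} h_i(x)\,\tilde{b}_i.$$

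I would verify that this mechanism is $(\eps,0)$-LDP by computing the likelihood ratio for any two inputs $x,x'$ and any message $(h,b)$:
$$\frac{\Pr[(h_i,\tilde{b}_i) = (h,b) \mid x_i = x]}{\Pr[(h_i,\tilde{b}_i) = (h,b) \mid x_i = x']} = \frac{\Pr[\tilde{b}_i = b \mid b_i = h(x)]}{\Pr[\tilde{b}_i = b \mid b_i = h(x')]},$$
which is $1$ when $h(x) = h(x')$ and equals $p/(1-p) = e^{\eps}$ (or its reciprocal) otherwise. Non-interactivity is immediate, since each user sends a single message whose distribution depends only on that user's input and local randomness.

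The accuracy analysis has three short steps. First, unbiasedness: $\E[\tilde{b}_i \mid h_i, x_i] = (2p-1)\,h_i(x_i)$, and $\E_{h_i}[h_i(x)\,h_i(x_i)] = \1[x = x_i]$, so $\E[\hat{f}(x)] = f_S(x)$. Second, concentration: each summand $h_i(x)\,\tilde{b}_i$ lies in $\{-1,+1\}$ and is independent across $i$, so Hoeffding's inequality gives
$$\Pr\!\left[\left|\sum_{i=1}^{n} h_i(x)\,\tilde{b}_i - (2p-1)\, f_S(x)\right| > t\right] \leq 2\exp\!\left(-\frac{t^2}{2n}\right),$$
which is at most $\beta$ for $t = \sqrt{2n\log(2/\beta)}$. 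Third, rescale by $1/(2p-1) = (e^\eps+1)/(e^\eps-1)$; elementary calculus gives $(e^\eps-1)/(e^\eps+1) \geq \eps/3$ for $\eps \leq 1$, so with probability at least $1-\beta$ we have $|\hat{f}(x) - f_S(x)| \leq (3/\eps)\sqrt{2n\log(2/\beta)}$, which matches the claimed bound up to the absolute constant in front.

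The only place where careful work is needed is recovering the precise constant in the statement ($(3/\eps)\sqrt{n\log(4/\beta)}$ rather than the slightly looser version above); this can be done either by tightening the Hoeffding threshold or by using a direct binomial tail bound, and it introduces no new ideas. The only conceptual subtlety is that the user must sample $h_i$ locally, so that its distribution is independent of $x_i$; this is what makes the likelihood-ratio argument reduce cleanly to the single-bit randomized response analysis and guarantees that the message $(h_i,\tilde{b}_i)$ is itself $(\eps,0)$-LDP.
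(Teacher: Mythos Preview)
The paper does not prove this theorem at all: it is stated in the preliminaries as a known tool, with attribution to \cite{HsuKR12,BassilyS15,BNST17,BunNS18}, and is simply used as a black box throughout the construction and in the appendix. There is therefore no ``paper's own proof'' to compare against.

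Your proposal is nonetheless a correct and standard proof of the stated bound, essentially the single-bit randomized-response estimator that appears in the cited works (most closely the hashing-plus-randomized-response template of \cite{BassilyS15,BunNS18}). The privacy argument and the unbiasedness/Hoeffding steps are all sound. The only loose end you already flag is the constant: your argument yields $(3/\eps)\sqrt{2n\log(2/\beta)}$, which is $(3/\eps)\sqrt{n\log(4/\beta^2)}$ and hence slightly larger than the stated $(3/\eps)\sqrt{n\log(4/\beta)}$; recovering the exact constant requires either a sharper tail bound or simply absorbing the factor into the leading constant, and as you note involves no new ideas. One cosmetic point worth mentioning if you write this up: transmitting the entire function $h_i$ costs $|X|$ bits, which is fine for the theorem as stated (it imposes no communication constraint), but in the cited references this is typically replaced by a shared-randomness hash or a short seed.
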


\subsubsection{Composition and Post-Processing}

We will later present algorithms (or protocols) that instantiate several differentially private algorithms (or protocols). We will use the following theorems.

\begin{theorem}[\cite{DMNS06}]
Let $\Pi$ be an $(\eps,\delta)$-LDP protocol, and let $\Pi'$ be a protocol that invokes $\Pi$ and output an arbitrary function of its output. Then $\Pi'$ is $(\eps, \delta)$-LDP.
\end{theorem}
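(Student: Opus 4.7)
The plan is to reduce to the standard post-processing property of differential privacy, applied to the view of each adversary. The key observation is that $\Pi'$ only differs from $\Pi$ by a computation performed by the server after the message exchange with the users is complete; in particular, the messages that the honest users broadcast during $\Pi'$ are distributed identically to the messages they broadcast during $\Pi$, so no additional information about any user's private input leaves the user in $\Pi'$ beyond what already leaves in $\Pi$.

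Concretely, I would fix an index $j\in[n]$, two datasets $S,S'$ that agree outside of $P_j$'s input, and an adversary strategy $A'$ against $\Pi'$ controlling the other $n-1$ parties. I would then construct an adversary $A$ against $\Pi$ that plays $A'$'s strategy verbatim through the message-exchange phase; this is well-defined because the post-processing function applied by $\Pi'$ is part of its public specification and requires no interaction with $P_j$. Since the output of $\Pi$ is a deterministic function of the transcript (by definition in the preliminaries), the entire view of $A'$ during an execution of $\Pi'$ on a database $\tilde{S}$ can be written as $h(V_{\tilde{S}})$, where $V_{\tilde{S}}$ denotes the view of $A$ during the corresponding execution of $\Pi$ on $\tilde{S}$ and $h$ is a (possibly randomized) function whose internal coins can be folded into $A$'s strategy.

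Finally, I would apply the $(\eps,\delta)$-LDP guarantee of $\Pi$ to the preimage $h^{-1}(T)$ for an arbitrary measurable set $T$:
\begin{align*}
\Pr[h(V_S)\in T]
&= \Pr\bigl[V_S\in h^{-1}(T)\bigr] \\
&\leq e^{\eps}\cdot\Pr\bigl[V_{S'}\in h^{-1}(T)\bigr] + \delta \\
&= e^{\eps}\cdot\Pr[h(V_{S'})\in T] + \delta,
\end{align*}
which is exactly the $(\eps,\delta)$-LDP condition for $\Pi'$. There is no serious technical obstacle here; the only subtlety worth verifying is that $\Pi'$ does not solicit any additional messages from $P_j$ beyond what $\Pi$ already collected, so that the view of the adversary in $\Pi'$ truly factors through the view in $\Pi$ and the post-processing step introduces no new leakage about $P_j$'s input.
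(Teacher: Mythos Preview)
The paper does not actually prove this statement; it is stated as a known result with a citation to \cite{DMNS06} and no proof is given. Your argument is the standard post-processing proof and is correct for the setting described, so there is nothing to compare against and no gap to flag.
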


\begin{theorem}[\cite{DKMMN06, DRV10}]\label{thm:composition3}
Let $\Pi$ be a protocol that consists of $k$ (adaptive) executions of $(\eps,\delta)$-LDP protocols. Then $\Pi$ is $(k\eps, k\delta)$-LDP.
\end{theorem}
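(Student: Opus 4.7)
The plan is to reduce the composition of $k$ LDP protocols to the standard adaptive composition theorem for (central-model) differential privacy, applied round-by-round to the adversary's view. Fix an arbitrary index $j \in [n]$, an adversary $A$ controlling all parties other than $P_j$, and two distributed databases $S, S'$ that agree on every coordinate except the $j$-th. Write the composed protocol $\Pi$ as a sequence of subprotocols $\Pi_1, \ldots, \Pi_k$, each $(\eps,\delta)$-LDP, where the identity of $\Pi_{t+1}$ may depend on the transcript of $\Pi_1, \ldots, \Pi_t$. Decompose the view $V := \view_A((A\leftrightarrow(P_1,\ldots,P_n))(S))$ into $(V_1, \ldots, V_k)$, where $V_t$ is the portion of $A$'s view produced during the execution of $\Pi_t$.

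The key step will be to argue that, for every transcript prefix $v_{<t} := (v_1, \ldots, v_{t-1})$ of positive probability under both $S$ and $S'$, the conditional distribution of $V_t$ given $V_{<t} = v_{<t}$ is $(\eps,\delta)$-differentially private in $P_j$'s input $x_j$. Conditioning on $v_{<t}$ fixes the identity of the subprotocol $\Pi_t$ and summarizes everything the other parties (and hence $A$) have learned that correlates with $x_j$; the remaining randomness used by the other parties and by the scheduling of $\Pi_t$ is independent of $x_j$ given this prefix. Therefore, from $A$'s perspective, producing $V_t$ conditioned on $v_{<t}$ is the same as running a single $(\eps,\delta)$-LDP protocol with $P_j$'s input fed in, and the LDP guarantee of $\Pi_t$ applied to the pair $(S,S')$ immediately yields the claimed $(\eps,\delta)$-DP bound on $x_j$ at this round.

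Given this round-wise $(\eps,\delta)$-DP guarantee, I then invoke the adaptive composition theorem for differential privacy (in the central-model form, which is what the cited \cite{DKMMN06,DRV10} results establish) on the sequence of conditional mechanisms $V_t \mid V_{<t}$, viewed as $k$ mechanisms operating on the single ``dataset'' $x_j$. This produces the conclusion that the joint distribution of $V = (V_1, \ldots, V_k)$ under $S$ is $(k\eps, k\delta)$-indistinguishable from its distribution under $S'$. Since $j$ and $A$ were arbitrary, this is precisely the $(k\eps, k\delta)$-LDP guarantee for $\Pi$.

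The main obstacle I anticipate is formalizing the conditional-independence claim in the middle paragraph, namely that after conditioning on $v_{<t}$ the only remaining dependence of $V_t$ on $x_j$ flows through the single execution of $\Pi_t$, with no residual coupling via the other parties' private randomness or the scheduling rule. I expect this to follow by writing the global probability space as a product of per-round fresh randomness and applying a chain-rule style argument, but it is the step requiring the most care; once it is pinned down, the remainder of the proof is a direct invocation of the composition result already cited in the excerpt.
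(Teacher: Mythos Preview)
The paper does not actually prove this theorem: it is stated as a cited result from \cite{DKMMN06,DRV10} and used as a black box, with no proof or sketch given in the text. So there is no ``paper's own proof'' to compare your proposal against.

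That said, your proposal is a sound and standard route to establishing LDP composition. The reduction---fix a user $j$ and an adversary $A$, decompose the view into per-subprotocol pieces $V_1,\ldots,V_k$, argue that each $V_t$ conditioned on the prefix $v_{<t}$ is $(\eps,\delta)$-indistinguishable under the change $x_j\to x_j'$, and then invoke the adaptive composition lemma for central DP on the sequence of conditional mechanisms---is exactly how one derives the LDP statement from the central-model statement in the cited references. The conditional-independence step you flag as the main obstacle is indeed the only place requiring care, but in the LDP setting it is clean: the adversary controls all parties other than $P_j$, so after fixing $v_{<t}$ the adversary's internal state and the identity of $\Pi_t$ are determined; the only fresh randomness affecting $V_t$ that touches $x_j$ is $P_j$'s own randomness inside $\Pi_t$, and the $(\eps,\delta)$-LDP guarantee of $\Pi_t$ bounds exactly that. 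Your plan would go through without surprises.
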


We remark that stronger composition theorems exist (in terms of the dependency of the resulting privacy guarantees in $k$), and refer the reader to~\cite{DRV10} for more details.

\section{Candidates with Additional Properties}\label{sec:reform}
As we explained in the introduction, %
 the first step in our construction is to privately identify a % 
 set $Y$ of {\em candidate centers}. 
Our construction makes use of an LDP tool for this task, called \texttt{GoodCenters}. This tool, in its original form, was presented by \cite{NSV16} for the {\em trusted-curator} model, and was refined and extended to the local model by \cite{NS18_1Cluster}. 
At a high level, algorithm \texttt{GoodCenters} takes a parameter $r$ and works by hashing input points using a {\em locality sensitive hash function}, that aims to maximize the probability of a collision for ``close'' items (within distance $\leq r$), while minimizing the probability of collision for ``far'' items (at distance $\gg r$). If there is a ball of radius $r$ that encloses ``a lot'' ($\gtrsim \sqrt{n}$) of input points, then we expect that ``a lot'' of them will be hashed into the same hash value, which would allow us to isolate them and estimate their average with small error.

We identify additional properties of algorithm \texttt{GoodCenters}, which will be useful in the following section. Our contribution here is mostly conceptual -- in identifying the necessary properties and in showing that they are achieved by the algorithm. Most of the technical details in the construction and in the analysis of \texttt{GoodCenters} have already appeared in the works of \cite{NS18_1Cluster,KaplanS18}. Therefore, here we only state the properties of the algorithm, and present the formal details in the appendix.  
Our modification to \texttt{GoodCenters} is captured by Item~1 in the following theorem.

\begin{mytheorem}[Algorithm \texttt{GoodCenters}~\cite{NSV16,NS18_1Cluster,KaplanS18}]\label{thm:GoodCenters}
For every two constants $a>b>0$ there exists a constant $c=c(a,b)$ such that the following holds. Let $\beta,\eps,\delta,n,d,\Lambda,r$ be such that $\Lambda/r\leq\poly(n)$ and such that $t\geq O\left( \frac{n^{0.5+a+b}\cdot \sqrt{d}}{\eps}\log (\frac{1}{\beta}) \log\left(\frac{dn}{\beta\delta}\right) \right)$. Algorithm {\rm \texttt{GoodCenters}} satisfies $(\eps,\delta)$-LDP. Furthermore, let $S=(x_1,\dots,x_n)$ be a distributed database where every $x_i$ is a point in the $d$-dimensional ball $\BBB(0,\Lambda)$, and let {\rm \texttt{GoodCenters}} be executed on $S$ with parameters $r,t,\beta,\eps,\delta$. Denote $M=4n^a \ln(\frac{1}{\beta})$. 
The algorithm outputs a partition $I_1,\dots,I_M\subseteq[n]$, hash functions $h_1,\dots,h_M$, lists of hash values $L_1,\dots,L_M$, and sets of centers $Y_1,\dots,Y_M$, where for every $m\in[M]$ and $u\in L_m$ the set $Y_m$ contains a center $\hat{y}_{m,u}$. %
In addition,
\begin{enumerate}%
	\item With probability at least $1-\beta$, for every $m\in[M]$ and every $u\in L_m$ we have 
	$$\left|\left\{ i\in I_m: 
	\begin{array}{c}
	h_m(x_i)=u\\
	\text{and}\\
	\|x_i-\hat{y}_{m,u}\|\leq5cr
\end{array}	
	\right\}\right|\geq\frac{t\cdot n^{-b}}{16M}.$$
	\item Denote $Y=\bigcup_{m\in[M]}Y_m$. Then $$|Y|\leq\frac{512 \cdot n^{1+a+b}}{t}\ln\left(\frac{1}{\beta}\right).$$
	\item Let $P\subseteq S$ be a set of $t$ points that can be enclosed in a ball of radius $r$. With probability at least $1-\beta$ there exists $\hat{y}\in Y$ such that the ball of radius $5cr$ around $\hat{y}$ contains all of $P$. 
\end{enumerate}
\end{mytheorem}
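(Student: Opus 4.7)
The plan is to first describe the structure of algorithm \texttt{GoodCenters} --- which partitions the users into $M=4n^a\ln(1/\beta)$ disjoint groups $I_1,\dots,I_M$, has each group apply a locality-sensitive hash $h_m$ drawn from an LSH family that separates distance $r$ from distance $cr$ (with $c=c(a,b)$ the LSH amplification constant), runs an LDP heavy-hitters protocol (via the histogram primitive of \thmref{HH}) within each group to find the list $L_m$ of popular hash values, and finally invokes an LDP mean-estimation primitive on each heavy bucket to produce $\hat{y}_{m,u}$. The privacy claim follows from the privacy of these sub-protocols together with composition (\thmref{composition3}). For Items~2 and~3 I would reuse the analysis of \cite{NS18_1Cluster,KaplanS18} essentially verbatim: Item~2 is a counting argument bounding the total number of heavy buckets across all $m$; and Item~3 is the standard utility argument --- a cluster of $t$ points in a ball of radius $r$ contributes roughly $t/M$ points to each $I_m$, a constant fraction of them collide under $h_m$ by the LSH guarantee, the resulting bucket count dominates the LDP noise floor, and the noisy mean is within $O(cr)$ of the cluster's true center.

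The genuinely new content is Item~1, which I would prove by inspecting the heavy-hitters / mean-estimation chain. Fix $m$ and $u\in L_m$. By the accuracy of the LDP heavy-hitters protocol, the threshold $\tau$ used to place hash values in $L_m$ can be set so that $u\in L_m$ implies that the true number of indices $i\in I_m$ with $h_m(x_i)=u$ is at least $\tau/2\geq t\cdot n^{-b}/(8M)$; call this set $J_{m,u}$. The LSH parameters are tuned so that, conditional on the bucket being heavy, the points $\{x_i:i\in J_{m,u}\}$ are mostly pairwise within distance $cr$ --- points at distance $\gg r$ collide with negligible probability, so the bucket is dominated by a small number of true $r$-clusters, and a pigeonhole argument isolates one such cluster containing a constant fraction of $J_{m,u}$. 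The LDP mean-estimation sub-protocol then produces $\hat{y}_{m,u}$ within distance $O(cr)$ of the centroid of this dominant sub-cluster with probability $1-\beta/\poly(n)$, so by the triangle inequality at least half of $J_{m,u}$ lies within $5cr$ of $\hat{y}_{m,u}$. Union-bounding over all $m$ and $u$ (at most $\poly(n)$ pairs) yields the stated count $t\cdot n^{-b}/(16M)$ with total failure probability $\beta$.

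The hard part will be calibrating the parameters so that Items~1 and~3 hold simultaneously. The threshold $\tau$ must be high enough that every bucket in $L_m$ is ``really'' heavy and geometrically concentrated (needed for Item~1), yet low enough that a true cluster of size $t$ survives sub-sampling into $I_m$, the LSH collision probability, and the heavy-hitters noise floor of $\Theta(\sqrt{|I_m|}/\eps)$ (needed for Item~3). The lower bound $t\geq\tilde{\Omega}(n^{0.5+a+b}\sqrt{d}/\eps)$ in the hypothesis is designed precisely to make both conditions feasible: the $n^a$ factor creates enough groups that the LSH amplification is effective on each $r$-cluster, and the $n^b$ factor opens a window between the count a true cluster attains in $L_m$ and the heavy-hitters noise, which is exactly the $n^{-b}$ slack appearing in Item~1. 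Once this calibration is in place, the remainder of the argument is routine bookkeeping: composition delivers the privacy parameters, Chernoff bounds handle the sub-sampling into groups, and the union bounds yield the high-probability guarantees.
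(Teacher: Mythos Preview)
Your overall architecture is right --- partition into $M$ groups, LSH per group, LDP heavy hitters to form $L_m$, LDP averaging to produce $\hat y_{m,u}$ --- and your treatment of privacy, Item~2, and Item~3 tracks the paper's argument closely enough. The gap is in Item~1.

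You argue Item~1 purely from LSH geometry: since far pairs collide with probability $q=n^{-2-a}$, a heavy bucket should be ``mostly pairwise within distance $cr$,'' and then the noisy mean lands within $O(cr)$ of a dominant sub-cluster. The problem is that the event ``no pair of points at distance $\geq cr$ collides under $h_m$'' holds with probability only about $1-n^{-a}$ per group, and you have $M=\Theta(n^a\log(1/\beta))$ groups; a union bound over all $m$ gives failure probability $\Theta(\log(1/\beta))$, not $\beta$. So for a constant fraction of groups $m$ you have no control over the geometry of the buckets in $L_m$, and for those $m$ the mean $\hat y_{m,u}$ could sit far from every point in its bucket (e.g.\ if the bucket is a mixture of two well-separated clumps). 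Your pigeonhole step does not rescue this, because the averaging primitive is not aware of which sub-clump is dominant.

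The paper does not try to squeeze Item~1 out of the LSH analysis at all. Instead it adds an explicit post-hoc filtering step inside each group: after computing $\hat y_{m,u}$, it runs one more LDP histogram to estimate $|\{i\in I_m: h_m(x_i)=u,\ \|x_i-\hat y_{m,u}\|\leq 5cr\}|$ directly, and \emph{deletes} from $L_m$ any $u$ whose estimate falls below the threshold. Item~1 then holds with probability $1-\beta$ just from the accuracy of that histogram, independent of whether the LSH separation event occurred in that group. The only extra work is checking that the ``good'' hash value $u^*$ corresponding to the target cluster $P$ survives this filter --- which follows because, conditional on the success events already used for Item~3, the ball of radius $5cr$ around $\hat y_{m,u^*}$ contains all of $P\cap S_m$, and that count is well above the filter threshold. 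Your proposal is missing this filtering step, and without it Item~1 does not go through at the stated probability.
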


\section{Algorithm \texttt{WeightedCenters}}\label{sec:main}

In this section we present our main construction -- algorithm \texttt{WeightedCenters}. In order to identify the set $Y$ of candidate centers, the algorithm begins by executing algorithm \texttt{GoodCenters} on the (distributed) database $S$ multiple times with exponentially growing choices for the parameter $r$. 
Recall that an execution of \texttt{GoodCenters} with parameter $r$ returns a partition $I^r_1,\dots,I^r_M\subseteq[n]$, hash functions $h^r_1,\dots,h^r_M$, lists of hash values $L^r_1,\dots,L^r_M$, and sets of centers $Y^r_1,\dots,Y^r_M$, where for every $m\in[M]$ and $u\in L^r_m$ the set $Y^r_m$ contains a center $\hat{y}^r_{m,u}$. By the properties of algorithm \texttt{GoodCenters}, with high probability, for every $m\in[M]$ and every $u\in L^r_m$ we have that
$$\left|\left\{i\in I^r_m: 
\begin{array}{c}
h^r_m(x_i)=u\\
\text{ and }\\
\|x_i-\hat{y}^r_{m,u}\|\leq5cr	
\end{array}
\right\}\right|\geq\frac{t}{16M}\cdot n^{-b}.$$
We introduce the following notation.

\begin{notation}\label{notation:create}
Given the outcomes of \texttt{GoodCenters} (with parameter $r$), we say that a point $x_i\in S$ (or, alternatively, that the $i^{\text{th}}$ user) {\em creates} a center $\hat{y}^r_{m,u}\in Y^r_m$ if $i\in I^r_m$ and $h^r_m(x_i)=u$ and $\|x_i-\hat{y}^r_{m,u}\|\leq5cr$. Observe that a point $x_i\in S$ creates at most one center in $Y^r=Y^r_1\cup\dots\cup Y^r_M$. If $x_i\in S$ creates a center in $Y^r$, then we say that $x_i$ {\em creates a center for the radius $r$}.
\end{notation}

\begin{myremark}
Algorithm \texttt{GoodCenters} constructs the centers in $Y^r$ by averaging (with noise) subsets of input points. Informally, we think of the set of points who ``create'' a center $\hat{y}\in Y^r$ as the set of points s.t.\ $\hat{y}$ was computed as their (noisy) average. The actual definition, however, is a bit different (as stated above).
\end{myremark}

After the set $Y$ of candidate centers is constructed, algorithm \texttt{WeightedCenters} proceeds by assigning input points to the centers in $Y$ (in a certain way) and estimating the {\em weight} of every candidate center  (where the {\em weight} of a candidate center is the number of input points assigned to it). Then, the algorithm {\em re-assigns} the input points to the candidate canters, and {\em re-estimates} the weights of the candidates. This second iteration of re-assigning points to centers (and re-estimating candidate weights) is done in order to ``eliminate'' candidates with low weights. Finally, the algorithm post-processes the weighted set of candidate centers in order to produce the final set of centers (this post-processing is done with a non-private algorithm for approximating either the $k$-means or the $k$-median cost objectives, as required).

\begin{algorithm*}[!htb]

\caption{\texttt{WeightedCenters}}\label{alg:WeightedCenters}

{\bf Input:} Failure probability $\beta$, privacy parameters $\eps,\delta$.

\smallskip
\noindent {\bf Setting: }Each player $j\in[n]$ holds a value $x_j\in \BBB(0,\Lambda)$. Define $S=(x_1,\dots,x_n)$.

\begin{enumerate}[leftmargin=15pt,rightmargin=10pt,itemsep=1pt,topsep=1.5pt]

\item[{\bf 1.}] {\bf \em Constructing candidate centers:} Denote $$t=O\left( \frac{1}{\eps}\cdot n^{0.5+a+b}\cdot \sqrt{d}\cdot\log(\frac{\log n}{\beta}) \log\left(\frac{dn}{\beta\delta}\right) \right).$$ For $r=\Lambda,\frac{\Lambda}{2},\frac{\Lambda}{4},\dots,\frac{\Lambda}{n}$, execute (in parallel) algorithm \texttt{GoodCenters} on $S$ with the parameter $t$ and the radius $r$ to obtain sets of centers $Y^r_1,\dots,Y^r_M$, lists $L^r_1,\dots,L^r_M$, hash functions $h^r_1,\dots,h^r_M$, and a partition $I^r_1,\dots,I^r_M\subseteq[n]$. Each execution of \texttt{GoodCenters} is done with privacy parameters $\frac{\eps}{4\log(n)},\frac{\delta}{\log(n)}$. Denote $Y=\bigcup_{r,m}Y^r_m$.

\item[{\bf 2.}] {\bf \em Assigning points to candidate centers:} 
Define the following assignment of users to centers in $Y$, denoted as $a(i,x_i)$. To compute $a(i,x_i)$, let $r_i\in\{\Lambda,\frac{\Lambda}{2},\dots,\frac{\Lambda}{n}\}$ be the smallest such that $x_i$ {\em creates} a center for $r_i$ (see Notation~\ref{notation:create}), and let $y(i,x_i,r_i)$ denote this created center. Then, let $y^*_i$ be a center with minimal distance to $x_i$ from $\bigcup_{\substack{r<r_i\\m\in[M]}}Y^r_m$. Now, if $\|x_i-y^*_i\|<\left\|x_i-y(i,x_i,r_i)\right\|$ then $a(i,x_i)=y^*_i$, and otherwise $a(i,x_i)=y(i,x_i,r_i)$.
\begin{enumerate}[leftmargin=30pt,rightmargin=0pt,itemsep=1pt,topsep=1.5pt]
	\item[{\gray{\%}}] \gray{Observe that each user $i$ can compute $a(i,x_i)$ herself from $i,x_i$ and from the publicly released sets of centers $Y^r_1,\dots,Y^r_M$, lists $L^r_1,\dots,L^r_M$, hash functions $h^r_1,\dots,h^r_M$, and partition $I^r_1,\dots,I^r_M\subseteq[n]$.}
\end{enumerate}

\item[{\bf 3.}] {\bf \em Estimating weights of candidate centers:} Use an $\frac{\eps}{4}$-LDP algorithm for histograms (see Theorem~\ref{thm:HH}) to obtain for every $y\in Y$ an estimation $$\hat{a}(y)\approx a(y)\triangleq|\{i: a(i,x_i)=y\}|.$$

\item[{\bf 4.}] {\bf \em Re-assigning points to candidate centers:} Let $$W=\left\{y\in Y : \hat{a}(y)\geq \Omega\left( \frac{\sqrt{d}}{\eps}\cdot n^{0.5+a}\cdot\log(\frac{1}{\beta}) \log\left(\frac{dn}{\delta}\right) \right) \right\},$$ and define $b(i,x_i)=a(i,x_i)$ if $a(i,x_i)\in W$, and otherwise define $b(i,x_i)$ to be an arbitrary center in $W$ with minimal distance to $x_i$.

\item[{\bf 5.}] {\bf \em Re-estimating weights of candidate centers:} Use an $\frac{\eps}{4}$-LDP algorithm for histograms (see Theorem~\ref{thm:HH}) to obtain for every $y\in W$ an estimation $$\hat{b}(y)\approx b(y)\triangleq|\{i: b(i,x_i)=y\}|.$$

\item[{\bf 6.}] {\bf \em Output:} Non-privately identify a subset $K\subseteq \R^d$ of size $k$ with low cost w.r.t.\ the set $W$ and the weights $\hat{b}$ (specifically, with cost at most $O(1)$ times the lowest possible cost).

\end{enumerate}
\end{algorithm*}

Consider the execution of \texttt{WeightedCenters} on a database $S$. For $p\in\{1,2\}$, let $C^{\opt_p}=(c^{\opt_p}_1,\dots,c^{\opt_p}_k)$ denote an optimal set of centers for $S$, where $p=1$ corresponds to the $k$-median objective and $p=2$ corresponds to the $k$-means objective. Also, for $p\in\{1,2\}$, let $S^{\opt_p}_1,\dots,S^{\opt_p}_k$ be the partition of $S$ induced by these optimal clusters. For $\ell\in[k]$ let 
$$r^{\opt_1}_{\ell}=\frac{2}{|S^{\opt_1}_{\ell}|}\sum_{x\in S^{\opt_1}_{\ell}}\|x-c^{\opt_1}_{\ell}\|,$$
and
$$r^{\opt_2}_{\ell}=\sqrt{\frac{2}{|S^{\opt_2}_{\ell}|}\sum_{x\in S^{\opt_2}_{\ell}}\|x-c^{\opt_2}_{\ell}\|^2}.$$
Now let $P^{\opt_p}_{\ell}=S^{\opt_p}_{\ell}\cap\BBB(c^{\opt_p}_{\ell},r^{\opt_p}_{\ell})$. Note that for every $\ell\in[k]$ we have that $|P^{\opt_p}_{\ell}|\geq\frac{1}{2}|S^{\opt_p}_{\ell}|$, as otherwise less than half of the points in $S^{\opt_p}_{\ell}$ are within distance $r^{\opt_p}_{\ell}$ to $c^{\opt_p}_{\ell}$, and so $\cost^p_{S^{\opt_p}_{\ell}}(\{c^{\opt_p}_{\ell}\})>\frac{|S^{\opt_p}_{\ell}|}{2}\cdot(r^{\opt_p}_{\ell})^p=\cost^p_{S^{\opt_p}_{\ell}}(\{c^{\opt_p}_{\ell}\})$.

\begin{notation}
We say that an optimal cluster $S^{\opt_p}_{\ell}$ is {\em large} if
$|S^{\opt_p}_{\ell}|\geq \tilde{O}\left( \frac{1}{\eps}\cdot n^{0.5+a+b}\cdot \sqrt{d}\right).$
\end{notation}

We begin the utility analysis by defining the following two events. The first event states that the executions of \texttt{GoodCenters} (in Step~1 of \texttt{WeightedCenters}) succeed. Specifically, the set of candidate centers $Y$ (resulting from Step~1) contains a ``close enough'' center for every {\em large} optimal cluster, and in addition, for every $y\in Y$ there are ``a lot'' of users who {\em created} $y$.

\begin{center}
\noindent\fboxother{
\parbox{.9\columnwidth}{
{\bf Event \texttt{CREATION} (over the executions of \texttt{GoodCenters}):}
\begin{enumerate}[itemsep=0pt,topsep=4pt]
	\item $\forall y\in Y=\bigcup_{r,m}Y^r_m$, there are at least $\tilde{t}=\Omega\left( \frac{\sqrt{nd}}{\eps}\log\left(\frac{dn}{\beta\delta}\right) \right)$ users that create $y$. 
	\item For every {\em large} optimal cluster $S^{\opt_p}_{\ell}$, the set $Y$ contains a center $y^*_{\ell}\in Y$ that was created for a radius $r^*_{\ell}$
	such that $r^*_{\ell}\leq\max\{2 r^{\opt_p}_{\ell},\frac{\Lambda}{n}\}$ and $\|y^*_{\ell}-c^{\opt_p}_{\ell}\|\leq O(r^*_{\ell})$.
\end{enumerate}
}}
\end{center}

\begin{claim}
Event \texttt{CREATION} occurs with probability at least $1-\beta$.
\end{claim}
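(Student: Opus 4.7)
The plan is to derive both bullets of \texttt{CREATION} by applying Theorem~\ref{thm:GoodCenters} separately to each of the $O(\log n)$ invocations of \texttt{GoodCenters} in Step~1 (one per dyadic radius $r\in\{\Lambda,\Lambda/2,\dots,\Lambda/n\}$) and then union-bounding over the invocations. Each invocation uses the same parameter $t=\tilde{O}(n^{0.5+a+b}\sqrt{d}/\eps)$; the extra $\log(\log n/\beta)$ factor baked into $t$ is exactly what lets us hand failure probability $\beta/(2\log n)$ to every call.

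For bullet~1, fix a dyadic radius $r$ and apply Item~1 of Theorem~\ref{thm:GoodCenters}: with probability $\geq 1-\beta/(2\log n)$, for every $m\in[M]$ and every $u\in L^r_m$ the number of indices $i\in I^r_m$ with $h^r_m(x_i)=u$ and $\|x_i-\hat{y}^r_{m,u}\|\leq 5cr$ is at least $\tfrac{t\cdot n^{-b}}{16M}$. By Notation~\ref{notation:create} this quantity is exactly the number of users who create $\hat{y}^r_{m,u}$, and by the choice of $t$ together with $M=4n^a\ln(1/\beta)$ we have $\tfrac{t\cdot n^{-b}}{16M}\geq\tilde{t}$. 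Union-bounding over the $O(\log n)$ radii handles the first bullet.

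For bullet~2, fix a \emph{large} optimal cluster $S^{\opt_p}_\ell$ and let $r^*_\ell$ be the smallest radius in $\{\Lambda,\Lambda/2,\dots,\Lambda/n\}$ with $r^*_\ell\geq r^{\opt_p}_\ell$, or $r^*_\ell=\Lambda/n$ if no such radius exists; in either case $r^*_\ell\leq\max\{2r^{\opt_p}_\ell,\Lambda/n\}$. The set $P^{\opt_p}_\ell$ defined just above the claim satisfies $|P^{\opt_p}_\ell|\geq\tfrac{1}{2}|S^{\opt_p}_\ell|\geq t$ (by the ``large'' threshold) and fits inside a ball of radius $r^{\opt_p}_\ell\leq r^*_\ell$. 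Applying Item~3 of Theorem~\ref{thm:GoodCenters} to the invocation with radius $r^*_\ell$ yields, with probability $\geq 1-\beta/(2\log n)$, a center $y^*_\ell\in Y^{r^*_\ell}$ with $P^{\opt_p}_\ell\subseteq\BBB(y^*_\ell,5cr^*_\ell)$. Since $P^{\opt_p}_\ell$ is nonempty and lies in both $\BBB(c^{\opt_p}_\ell,r^*_\ell)$ and $\BBB(y^*_\ell,5cr^*_\ell)$, the triangle inequality gives $\|y^*_\ell-c^{\opt_p}_\ell\|\leq(1+5c)r^*_\ell=O(r^*_\ell)$. A final union bound over at most $k$ large clusters and the $O(\log n)$ radii finishes the argument.

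The only real obstacle is bookkeeping the constants: one must verify that the specific $t$ chosen in Step~1 simultaneously (i)~meets the parameter regime required by Theorem~\ref{thm:GoodCenters}, (ii)~makes $\tfrac{t\cdot n^{-b}}{16M}$ at least the target $\tilde{t}$ appearing in \texttt{CREATION}, and (iii)~lies below the ``large'' threshold $\tilde{O}(n^{0.5+a+b}\sqrt{d}/\eps)$, so that Item~3 of Theorem~\ref{thm:GoodCenters} applies to every large cluster via its subset $P^{\opt_p}_\ell$. Once these calibrations are checked, the probabilistic content is just the two union bounds described above.
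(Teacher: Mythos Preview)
Your proposal is correct and follows essentially the same approach as the paper: apply Theorem~\ref{thm:GoodCenters} to each of the $O(\log n)$ invocations of \texttt{GoodCenters} and union-bound. Your write-up is in fact more explicit than the paper's in a couple of places---you spell out that the relevant $t$-point set witnessing Item~3 is $P^{\opt_p}_\ell$, you make the triangle-inequality step for $\|y^*_\ell - c^{\opt_p}_\ell\|$ explicit, and you flag the constant calibration needed so that $|P^{\opt_p}_\ell|\geq t$---whereas the paper's proof compresses all of this into a direct appeal to Theorem~\ref{thm:GoodCenters} and a union bound over the $k$ large clusters.
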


\begin{proof}
Item~1 follows directly from the properties of algorithm \texttt{GoodCenters} and a union bound over the different choices for $r$. For item~2, fix $\ell\in[k]$ such that $S^{\opt_p}_{\ell}$ is large, and let $r^*_{\ell}\in\{\Lambda,\frac{\Lambda}{2},\dots,\frac{\Lambda}{n}\}$ be the smallest such that $r^*_{\ell}\geq r^{\opt_p}_{\ell}$. 
Note that $r^*_{\ell}\leq\max\{2 r^{\opt_p}_{\ell},\frac{\Lambda}{n}\}$. By Theorem~\ref{thm:GoodCenters}, the execution of \texttt{GoodCenters} with the radius $r^*_{\ell}$ (during Step~1 of algorithm \texttt{WeightedCenters}) identifies a center $y^*_{\ell}$ s.t.\ $\|y^*_{\ell}-c^{\opt_p}_{\ell}\|\leq O(r^*_{\ell})$ with probability at least $1-\frac{\beta}{k}$. By a union bound, with probability at least $1-\beta$, this happens for every large cluster $S^{\opt_p}_{\ell}$.
\end{proof}

The next event states that the two executions of LDP histograms (in Steps~3 and~5) succeed.

\begin{center}
\noindent\fboxother{
\parbox{.9\columnwidth}{
{\bf Event \texttt{HISTOGRAMS} (over the randomness in Steps~3 and~5):}\\ %
All the estimates computed in Steps~3 and~5 are accurate to within error $O\left(\frac{1}{\eps}\sqrt{n\log(\frac{n}{\beta})}\right)$.
}}
\end{center}

By Theorem~\ref{thm:HH}, Event \texttt{HISTOGRAMS} happens with probability at least $1-\beta$. 
We continue with the analysis assuming that Events \texttt{CREATION} and \texttt{HISTOGRAMS} occur. 
Recall that in Step~1 we generate the set of candidate centers $Y$ and that in Step~4 we define the subset $W\subseteq Y$ 
that contains only centers with ``large'' weights. The next two claims show that for every center $y\in Y$ there exists a center $w\in W$ that is ``close enough'' to $y$ (even if $y\notin W$). The first claim shows that if $y\in Y\setminus W$ then there is another center $y'\in Y$ that is close to $y$ (but $y'$ might also be missing from $W$). This will be leveraged in the claim that follows to identify a center {\em in $W$} that is close to $y$.

\begin{claim}\label{claim:InductionStep}
Let $y\in Y$ be a center that was created with the radius $r$. If $y\notin W$, then there is another center $y'\in Y$ that was created with a strictly smaller radius $r'<r$ such that $\|y-y'\|\leq O(r)$.
\end{claim}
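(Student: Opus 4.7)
My plan is to exploit Event \texttt{CREATION}, which produces at least $\tilde{t}$ creators of $y$, each at distance $\leq 5cr$ from $y$, together with the structure of the assignment rule $a(\cdot,\cdot)$. For each creator $i$ of $y$, let $r_i$ denote the smallest radius at which $i$ creates any center; since $i$ creates $y$ at radius $r$, we have $r_i \leq r$. I would split the argument into the cases $r_i < r$ and $r_i = r$, and pick $y'$ accordingly.

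If some creator $i$ has $r_i < r$, then take $y' := y(i, x_i, r_i)$, which lies in $Y^{r_i}$ with $r_i < r$, and conclude by the triangle inequality:
$$
\|y - y'\| \;\leq\; \|y - x_i\| + \|x_i - y'\| \;\leq\; 5cr + 5cr_i \;\leq\; 10cr.
$$
Otherwise $r_i = r$ for every creator of $y$, which forces $y(i, x_i, r_i) = y$ (a point creates at most one center per radius by Notation~\ref{notation:create}). The assignment rule in Step~2 then sets $a(i, x_i) \in \{y,\; y^*_i\}$, where $y^*_i$ is the nearest center to $x_i$ coming from strictly smaller radii. If some creator satisfies $a(i, x_i) = y^*_i$, then by definition of $y^*_i$ we have $\|x_i - y^*_i\| < \|x_i - y\| \leq 5cr$, so $\|y - y^*_i\| \leq 10cr$, and we finish with $y' := y^*_i$ (which is at a strictly smaller radius than $r_i = r$, as required). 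The only remaining scenario is that every creator has $r_i = r$ and is assigned to $y$; then $a(y) \geq \tilde{t}$, and Event \texttt{HISTOGRAMS} yields $\hat{a}(y) \geq \tilde{t} - O(\sqrt{n\log(n/\beta)}/\eps)$. Since the threshold defining $W$ in Step~4 is calibrated below this quantity, this forces $y \in W$, contradicting the hypothesis.

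The main obstacle is the parameter calibration in the final step---verifying that the concrete threshold defining $W$ is strictly below $\tilde{t}$ minus the histogram noise, so that the contradiction goes through cleanly. Everything else is a short case split resting only on the triangle inequality and on the deliberate ``preference for self-created centers'' built into the assignment rule $a(\cdot,\cdot)$ in Step~2 of Algorithm~\ref{alg:WeightedCenters}: that preference is precisely what guarantees that a creator who is \emph{not} assigned to $y$ always witnesses a close center at a smaller radius.
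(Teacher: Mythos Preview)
Your proposal is correct and follows essentially the same approach as the paper. The paper first uses $y\notin W$ together with Events \texttt{CREATION} and \texttt{HISTOGRAMS} to extract a single creator $i$ with $a(i,x_i)\neq y$, and then splits into your two cases (whether $i$ created a center at a strictly smaller radius, or else $y^*_i$ is closer than $y$); you merely reorder the argument by doing the case split over all creators first and deferring the contradiction to the end, but the content is identical.
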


\begin{proof}
By Event \texttt{CREATION}, there are at least $\tilde{t}=\Omega\left( \frac{\sqrt{nd}}{\eps}\log\left(\frac{dn}{\beta\delta}\right) \right)$ users who created the center $y$. Now, since $y\notin W$, it must be that for at least one user $i$ who created $y$, we have that $a(i,x_i)\neq y$, as otherwise $a(y)$ would be large and $y$ would be in $W$ (by Event \texttt{HISTOGRAMS}, the error in the estimation $\hat{a}(y)\approx a(y)$ is of a lower order). There could be two possible reasons for why $a(i,x_i)\neq y$:

\medskip
\paragraph{Case (a):}	User $i$ also created another center $y'$ for a smaller radius $r'<r$. In this case, since user $i$ created both $y$ and $y'$ we have that $\|x_i-y\|\leq O(r)$ and $\|x_i-y'\|\leq O(r')$, and hence $\|y-y'\|\leq O(r+r')=O(r)$ by the triangle inequality.

\medskip
\paragraph{Case (b):} User $i$ did not create a center for any radius smaller than $r$, but there is a center $y'$ created with radius $r'<r$ (that user $i$ did not create) such that $\|x_i-y'\|<\|x_i-y\|$. Since user $i$ did create $y$, we have that $\|x_i-y\|\leq O(r)$, and hence, we again have that $\|y-y'\|\leq O(r)$ by the triangle inequality.%
\end{proof}

The next claim applies the previous claim iteratively to identify a sequence of centers beginning from $y\in Y\setminus W$ and ending in a center $w\in W$ such that every two adjacent centers in this sequence are close. %

\begin{claim}\label{claim:ByInduction}
Let $y\in Y$ be a center that was created with the radius $r$. Then there is a center $w\in W$ such that $\|y-w\|\leq O(r)$.
\end{claim}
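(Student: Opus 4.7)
The plan is to prove this by strong induction on the (discrete) radius $r\in\{\Lambda,\Lambda/2,\Lambda/4,\dots,\Lambda/n\}$ used in Step~1 of \texttt{WeightedCenters}, iterating Claim~\ref{claim:InductionStep} until we land inside $W$. The key observation is that the radii form a geometric sequence halving at each step, so every application of Claim~\ref{claim:InductionStep} decreases $r$ by at least a factor of two and the successive displacements shrink geometrically, letting their total telescope to $O(r)$.

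For the base case $r=\Lambda/n$: if $y$ is created with the smallest radius and $y\notin W$, then Claim~\ref{claim:InductionStep} would produce a center $y'\in Y$ created with some radius $r'<\Lambda/n$. No such radius is ever used in Step~1, so this is impossible; hence $y\in W$ and we take $w=y$.

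For the inductive step: suppose $y$ is created with radius $r>\Lambda/n$ and that the conclusion holds with a (to-be-fixed) constant $C'$ for every center created with a strictly smaller radius. If $y\in W$ we simply take $w=y$. Otherwise Claim~\ref{claim:InductionStep} yields $y'\in Y$ created with some radius $r'<r$ satisfying $\|y-y'\|\leq Cr$, where $C$ is the absolute constant hidden by the $O(\cdot)$ there. The inductive hypothesis applied to $y'$ gives some $w\in W$ with $\|y'-w\|\leq C'r'$. Since $r'\leq r/2$, the triangle inequality yields
\[
\|y-w\|\;\leq\;Cr + C'r'\;\leq\;Cr + \tfrac{C'}{2}\,r\;=\;\bigl(C+\tfrac{C'}{2}\bigr)r.
\]
Fixing $C'=2C$ at the outset makes this at most $C'r$, so the inductive invariant propagates.

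The main (minor) obstacle is simply the constant book-keeping: the induction depth is $O(\log n)$ and a careless $O(\cdot)$ accumulation could in principle blow up, so one must track an absolute invariant $\|y-w\|\leq 2Cr$ rather than just $O(r)$ at each level. Once $C'=2C$ is pinned down the invariant is self-sustaining and yields $\|y-w\|\leq 2Cr=O(r)$, completing the proof.
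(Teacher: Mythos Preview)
Your proof is correct and follows essentially the same approach as the paper: iterate Claim~\ref{claim:InductionStep}, use the fact that the radii halve at each step, and sum the resulting geometric series. The paper unrolls the iteration into an explicit chain $y_1,y_2,\dots,y_w$ and sums $\|y_i-y_{i+1}\|\leq O(2^{-i}r)$, whereas you package the same computation as a strong induction with the invariant $\|y-w\|\leq 2Cr$; the only cosmetic difference is that the paper establishes the base case $r=\Lambda/n$ directly from the definition of $a(\cdot,\cdot)$ (every user who creates a center at the smallest radius is assigned to it, so such centers are always in $W$), while you derive it as the contrapositive of Claim~\ref{claim:InductionStep}.
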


\begin{proof}
First observe that, by the definition of $a(\cdot,\cdot)$, if a user $i$ creates a center $y'$ for $r=\frac{\Lambda}{n}$ (the smallest possible radius) then $a(i,x_i)=y'$. Therefore, for every center $y'$ created with $r=\frac{\Lambda}{n}$ we have that $a(y')$ is large, and hence, $y'$ appears also in $W$. Now consider a center $y\in Y$ that was created with the radius $r$. If $y\in W$ then the claim is trivial. Otherwise, by induction using Claim~\ref{claim:InductionStep}, there is a sequence of centers $y_1,y_2,\dots,y_w$ such that 
\begin{enumerate}
	\item $y_1=y$,
	\item $y_w\in W$,
	\item $\|y_1-y_2\|\leq O(r)$ and for $i\geq1$ we have $$\|y_i-y_{i+1}\|\leq O\left(2^{-i}\cdot r \right),$$
\end{enumerate}
where Item~3 holds since $y_2$ was created with a strictly smaller radius than $y_1$, and $y_3$ was created with a strictly smaller radius than $y_2$, and so on. Therefore,
$$\|y-y_w\|\leq O\left(r+\frac{r}{2}+\frac{r}{4}+\dots\right)=O(r).$$
\end{proof}

The next claim shows that the set $W$ %
 contains a subset of $k$ centers with low cost.

\begin{claim}\label{claim:subsetW}
$\exists W^*\subseteq W$ of size $|W^*|=k$ such that
$$\cost^p_S(W^*)\leq O(1)\cdot\OPT^p_S(k)+ \tilde{O}\left( \frac{k \sqrt{d}\Lambda^p}{\eps}\cdot n^{0.5+a+b}\right).$$
\end{claim}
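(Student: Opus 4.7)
The plan is to construct $W^*\subseteq W$ of size $k$ by assigning, to each of the $k$ optimal clusters $S^{\opt_p}_\ell$, a candidate center in $W$ that is a ``good substitute'' for the optimal center $c^{\opt_p}_\ell$. Split the clusters into \emph{large} and \emph{small}. For a large cluster $\ell$, Event \texttt{CREATION} guarantees a center $y^*_\ell\in Y$ that was created for some radius $r^*_\ell\leq\max\{2r^{\opt_p}_\ell,\Lambda/n\}$ with $\|y^*_\ell-c^{\opt_p}_\ell\|\leq O(r^*_\ell)$. Apply \claimref{ByInduction} to $y^*_\ell$ to obtain $w_\ell\in W$ with $\|w_\ell-y^*_\ell\|\leq O(r^*_\ell)$; by the triangle inequality $\|w_\ell-c^{\opt_p}_\ell\|\leq O(r^*_\ell)$. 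Let $W^*$ consist of these $w_\ell$'s (for large clusters), padded to exactly $k$ distinct elements using arbitrary members of $W$ (assuming $|W|\geq k$; otherwise the claim's bound is trivially satisfied by $W^*=W$ padded with repeats of any $w_\ell$, since adding centers only decreases the cost).

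Next, bound $\cost^p_S(W^*)$ cluster by cluster. For each large cluster, using the $p$-triangle inequality $\|a+b\|^p\leq 2^{p-1}(\|a\|^p+\|b\|^p)$ (valid for $p\in\{1,2\}$) applied to $(x-c^{\opt_p}_\ell)+(c^{\opt_p}_\ell-w_\ell)$, I get
$$\sum_{x\in S^{\opt_p}_\ell}\|x-w_\ell\|^p\leq 2^{p-1}\cost^p_{S^{\opt_p}_\ell}(\{c^{\opt_p}_\ell\})+2^{p-1}|S^{\opt_p}_\ell|\cdot\|w_\ell-c^{\opt_p}_\ell\|^p.$$
Since $\|w_\ell-c^{\opt_p}_\ell\|^p\leq O((r^{\opt_p}_\ell)^p)+O((\Lambda/n)^p)$ and by construction $|S^{\opt_p}_\ell|\cdot(r^{\opt_p}_\ell)^p\leq 2\cost^p_{S^{\opt_p}_\ell}(\{c^{\opt_p}_\ell\})$ (this is how $r^{\opt_p}_\ell$ was defined, combined with $|P^{\opt_p}_\ell|\geq\frac12|S^{\opt_p}_\ell|$), the contribution of cluster $\ell$ is $O(\cost^p_{S^{\opt_p}_\ell}(\{c^{\opt_p}_\ell\}))+O(|S^{\opt_p}_\ell|\cdot(\Lambda/n)^p)$. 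Summing over all large clusters and using $\sum_\ell|S^{\opt_p}_\ell|\leq n$ with $n\cdot(\Lambda/n)^p\leq\Lambda^p$ for $p\in\{1,2\}$, the total from large clusters is at most $O(\OPT^p_S(k))+O(k\Lambda^p)$.

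For each small cluster, use the trivial bound $\min_{w\in W^*}\|x-w\|^p\leq(2\Lambda)^p$ (since all points lie in $\BBB(0,\Lambda)$). Because a small cluster has $|S^{\opt_p}_\ell|\leq\tilde{O}\!\left(\frac{\sqrt{d}}{\eps}n^{0.5+a+b}\right)$, its total contribution is $\tilde{O}\!\left(\frac{\sqrt{d}\Lambda^p}{\eps}n^{0.5+a+b}\right)$, and summing over at most $k$ small clusters gives $\tilde{O}\!\left(\frac{k\sqrt{d}\Lambda^p}{\eps}n^{0.5+a+b}\right)$. Combining both contributions yields the claimed bound.

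The proof is essentially bookkeeping on top of \claimref{ByInduction}; the only mildly delicate points are (i) handling the unified $p\in\{1,2\}$ statement via the $p$-triangle inequality and the definition-driven identity relating $r^{\opt_p}_\ell$ to the optimal cost, (ii) absorbing the additive $(\Lambda/n)^p$ slack coming from the coarsest radius grid into the $\Lambda^p$ term, and (iii) guaranteeing $|W^*|=k$, which is handled by padding since monotonicity of the cost in the number of centers allows us to add arbitrary elements of $W$ without increasing the bound.
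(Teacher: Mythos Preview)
Your proposal is correct and follows essentially the same approach as the paper's proof: use Event \texttt{CREATION} to get $y^*_\ell\in Y$ close to each large optimal center, invoke Claim~\ref{claim:ByInduction} to replace each $y^*_\ell$ by some $w_\ell\in W$ at distance $O(r^*_\ell)$, then bound the cost cluster-by-cluster via the $p$-triangle inequality and the identity $|S^{\opt_p}_\ell|\cdot(r^{\opt_p}_\ell)^p=2\cost^p_{S^{\opt_p}_\ell}(\{c^{\opt_p}_\ell\})$, absorbing small clusters into the additive term. Two minor remarks: the identity relating $(r^{\opt_p}_\ell)^p$ to the cluster cost follows directly from the definition of $r^{\opt_p}_\ell$ and does not actually require the fact that $|P^{\opt_p}_\ell|\geq\tfrac12|S^{\opt_p}_\ell|$; and your $O(k\Lambda^p)$ term from the $(\Lambda/n)^p$ slack is looser than necessary (it is $O(\Lambda^p n^{1-p})$), but this is harmless since it is dominated by the additive error anyway.
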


\begin{proof}
Recall that, by Event \texttt{CREATION}, for every {\em large} optimal cluster $S^{\opt_p}_{\ell}$, the set $Y$ contains a center $y^*_{\ell}\in Y$, which was created for a radius $r^*_{\ell}$, such that 
$$\|y^*_{\ell}-c^{\opt_p}_{\ell}\|\leq O(r^*_{\ell})=O\left(\max\left\{r^{\opt_p}_{\ell}\;,\;\frac{\Lambda}{n}\right\}\right).$$
Let $y^*_1,\dots,y^*_k\in Y$ and $r^*_1,\dots,r^*_k$ denote the aforementioned centers and %
 radiuses %
 (ignoring small clusters). Now, by Claim~\ref{claim:ByInduction}, the set $W$ contains centers $w^*_1,\dots,w^*_k$ such that for every large cluster $S^{\opt_p}_{\ell}$ we have $\|w^*_{\ell}-y^*_{\ell}\|\leq O(r^*_{\ell})= O(\max\{r^{\opt_p}_{\ell},\frac{\Lambda}{n}\})$. Hence, by the triangle inequality we have that $\|w^*_{\ell}-c^{\opt_p}_{\ell}\|\leq O(\max\{r^{\opt_p}_{\ell},\frac{\Lambda}{n}\})$. Denote $W^*=\left\{w^*_1,\dots,w^*_k\right\}$. If it were the case that all of the clusters are large, then we would have that
\begin{align*}
\cost^p_S(W^*) &= \sum_{x\in S}\min_{w\in W^*}\|x-w\|^p \\
&= \sum_{\ell\in[k]}\sum_{x\in S^{\opt_p}_{\ell}}\min_{w\in W^*}\|x-w\|^p\\
&\leq \sum_{\ell\in[k]}\sum_{x\in S^{\opt_p}_{\ell}}\|x-w^*_{\ell}\|^p\\
&\leq \sum_{\ell\in[k]}\sum_{x\in S^{\opt_p}_{\ell}}O\left(\|x-c^{\opt_p}_{\ell}\|^p + \|c^{\opt_p}_{\ell}-w^*_{\ell}\|^p\right)\\
&=O(1)\cdot\OPT^p_S(k) + O\left(\frac{\Lambda^p}{n^p}\right) + O(1)\cdot\sum_{\substack{\ell\in[k]\\x\in S^{\opt_p}_{\ell}}}\left(r^{\opt_p}_{\ell}\right)^p\\
&=O(1)\cdot\OPT^p_S(k)  + O\left(\frac{\Lambda^p}{n^p}\right).
\end{align*}
Now, the cost of a small cluster is at most $\Gamma=O\left( \frac{\sqrt{d}\cdot\Lambda^p}{\eps}\cdot n^{0.5+a+b}\cdot\log(\frac{k}{\beta}) \log\left(\frac{dnk}{\beta\delta}\right)\log(n) \right)$, and there could be at most $k$ such small clusters. Taking them into account, we have that
$$
\cost^p_S(W^*)\leq O(1)\cdot\OPT^p_S(k)+ O\left( k\cdot\Gamma\right).
$$
\end{proof}

In Step~4 of \texttt{WeightedCenters} we define an assignment $b(\cdot,\cdot)$ of the input points to the centers in $W$. If this assignment would simply assign each point to its nearest center in $W$, then (as $W$ contains a good set of centers by the previous claim) this assignment would trivially have a low cost. However, the assignment $b(\cdot,\cdot)$ does not necessarily match every point to its nearest center. Nevertheless, as the next claim shows, this assignment still has low cost.

\begin{claim}\label{claim:WvsOPT}
$$
\sum_{i\in[n]}\left\|b(i,x_i)-x_i\right\|^p\leq O(1)\cdot\OPT^p_S(k)+O\left( \frac{k\sqrt{d}\cdot\Lambda^p}{\eps}\cdot n^{0.5+a+b}\cdot\log\left(\frac{k}{\beta}\right) \log\left(\frac{dnk}{\beta\delta}\right)\log(n)\right).
$$
\end{claim}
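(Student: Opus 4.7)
The plan is to split the sum according to whether $x_i$ lies in a large or small optimal cluster, and handle the two cases separately. For points in large clusters, I will derive a pointwise bound $\|b(i,x_i) - x_i\| \leq \|x_i - c^{\opt_p}_\ell\| + O(r^*_\ell)$ and sum; for points in small clusters, I will use only the trivial diameter bound $\|b(i,x_i) - x_i\| \leq 2\Lambda$ together with the fact that the total number of such points is small.

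Fix a large optimal cluster $S^{\opt_p}_\ell$. By Event \texttt{CREATION}, there is a center $y^*_\ell \in Y$ created for some radius $r^*_\ell \leq \max\{2 r^{\opt_p}_\ell, \Lambda/n\}$ with $\|y^*_\ell - c^{\opt_p}_\ell\| \leq O(r^*_\ell)$, and by Claim~\ref{claim:ByInduction} there is $w^*_\ell \in W$ with $\|w^*_\ell - y^*_\ell\| \leq O(r^*_\ell)$. I claim that for every $x_i \in S^{\opt_p}_\ell$ we have $\|b(i,x_i) - x_i\| \leq \|x_i - c^{\opt_p}_\ell\| + O(r^*_\ell)$. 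If $a(i,x_i) \notin W$ then $b(i,x_i)$ is the closest point of $W$ to $x_i$, so the bound follows by comparing against $w^*_\ell$ and applying the triangle inequality. If $a(i,x_i) \in W$, then $b(i,x_i) = a(i,x_i)$, and I will split on whether the radius $r_i$ from Step~2 satisfies $r_i \leq r^*_\ell$ or $r_i > r^*_\ell$. In the first sub-case, directly $\|a(i,x_i) - x_i\| \leq \|y(i,x_i,r_i) - x_i\| \leq O(r_i) \leq O(r^*_\ell)$ since $x_i$ created $y(i,x_i,r_i)$ at radius $r_i$. In the second sub-case, $y^*_\ell$ lies in $\bigcup_{r < r_i, m} Y^r_m$, so the center $y^*_i$ chosen in Step~2 satisfies $\|y^*_i - x_i\| \leq \|y^*_\ell - x_i\|$; and the tie-breaking rule for $a(i,x_i)$ forces $\|a(i,x_i) - x_i\| \leq \min(\|y^*_i - x_i\|, \|y(i,x_i,r_i) - x_i\|) \leq \|y^*_\ell - x_i\| \leq \|x_i - c^{\opt_p}_\ell\| + O(r^*_\ell)$.

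Given the pointwise bound, I will raise it to the $p$-th power using $(u+v)^p \leq 2^p(u^p + v^p)$, sum over $x_i \in S^{\opt_p}_\ell$, and use the inequality $|S^{\opt_p}_\ell|\cdot (r^{\opt_p}_\ell)^p \leq 2 \sum_{x \in S^{\opt_p}_\ell} \|x - c^{\opt_p}_\ell\|^p$, which follows directly from the definition of $r^{\opt_p}_\ell$ (this is exactly the same identity used implicitly in the proof of Claim~\ref{claim:subsetW}). Summing over all large clusters, the total contribution is $O(1) \cdot \OPT^p_S(k) + O(\Lambda^p/n^{p-1})$, where the second term absorbs the $\Lambda/n$ slack in $r^*_\ell$. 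For the small-cluster contribution, the definition of ``large'' says each small cluster has size at most $\tilde{O}(n^{0.5+a+b}\sqrt{d}/\eps)$, and there are at most $k$ of them, so using the trivial $\|b(i,x_i) - x_i\| \leq 2\Lambda$ the total is $\tilde{O}(k\sqrt{d}\Lambda^p n^{0.5+a+b}/\eps)$. Adding the two contributions gives the claim.

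The main obstacle is the sub-case where $a(i,x_i) \in W$ and $r_i > r^*_\ell$: one must verify carefully that $y^*_\ell$ is actually an eligible competitor in the set $\bigcup_{r < r_i, m} Y^r_m$ (which needs $r^*_\ell < r_i$ strictly, exactly matching the case assumption), and then unwind the definition of $a$ in Step~2 to conclude $\|a(i,x_i) - x_i\| \leq \|y^*_\ell - x_i\|$ even though $a(i,x_i)$ need not equal $y^*_i$. Everything else amounts to triangle-inequality bookkeeping and plugging in the bound on $r^*_\ell$ from Event~\texttt{CREATION}.
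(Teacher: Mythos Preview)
Your proof is correct and follows essentially the same case analysis as the paper: split on whether $x_i$ lies in a large or small optimal cluster, and within the large-cluster case, compare the minimal creation radius $r_i$ of $x_i$ against the radius $r^*_\ell$ at which the cluster center $y^*_\ell$ was created.

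There is one mild structural difference worth noting. The paper's pointwise bound carries an extra term $\|x_i - W(x_i)\|$ (the distance from $x_i$ to its nearest point in $W$), writing
\[
\|x_i - b(i,x_i)\| \leq \|x_i - c^{\opt_p}_\ell\| + O(r^*_\ell) + \|x_i - W(x_i)\|,
\]
and then, after summing, bounds $\sum_i \|x_i - W(x_i)\|^p = \cost^p_S(W) \leq \cost^p_S(W^*)$ via Claim~\ref{claim:subsetW}. You instead invoke Claim~\ref{claim:ByInduction} directly to obtain a specific $w^*_\ell \in W$ with $\|w^*_\ell - y^*_\ell\| \leq O(r^*_\ell)$, and compare $b(i,x_i)$ against this $w^*_\ell$ in the $a(i,x_i)\notin W$ case. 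This yields the sharper pointwise bound $\|b(i,x_i) - x_i\| \leq \|x_i - c^{\opt_p}_\ell\| + O(r^*_\ell)$ and lets you avoid the detour through Claim~\ref{claim:subsetW}. Both routes are fine; yours is marginally more direct, while the paper's is a little more modular (it isolates the ``$W$ contains a good $k$-subset'' fact in a separate claim and reuses it).
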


\begin{proof}
Fix a large optimal cluster $S^{\opt_p}_{\ell}$, let $c^{\opt_p}_{\ell}$ denote its center, and let $x_i\in S^{\opt_p}_{\ell}$. % 
By Event \texttt{CREATION}, a center $y^*_{\ell}$ for the cluster $S^{\opt_p}_{\ell}$ is created with radius $r^*_{\ell}\leq\max\{2r^{\opt_p}_{\ell},\frac{\Lambda}{n}\}$ such that $\|y^*_{\ell}-c^{\opt_p}_{\ell}\|\leq O(r^*_{\ell})$. (But it is not necessarily the case that $x_i$ created $y^*_{\ell}$.) Let $y(x_i)\in Y$ denote the center with the smallest radius that was created by $x_i$, and let $r(x_i)$ denote the radius for which $y(x_i)$ was created. Note that $y(x_i)$ might not be in $W$. 
There are two cases:

\medskip
\paragraph{Case (a):}	$\boldsymbol{r(x_i)>r^*_{\ell}.}$ 
	Then $\|a(i,x_i)-x_i\|\leq\|x_i-y^*_{\ell}\|$, because $a(i,x_i)$ can take the value $y^*_{\ell}$ if it minimizes the distance to $x_i$. Now, we either have that $b(i,x_i)=a(i,x_i)$ if $a(i,x_i)\in W$, or else $b(i,x_i)$ is set to be the closest center in $W$ to $x_i$, denoted as $W(x_i)$. So 
\begin{align*}
\|x_i-b(i,x_i)\|&\leq\|x_i-a(i,x_i)\|+\|x_i-W(x_i)\|\\
&\leq\|x_i-y^*_{\ell}\|+\|x_i-W(x_i)\|\\
&\leq\|x_i-c^{\opt_p}_{\ell}\|+\|c^{\opt_p}_{\ell}-y^*_{\ell}\|+\|x_i-W(x_i)\|\\
&\leq\|x_i-c^{\opt_p}_{\ell}\|+O\left(r^{\opt_p}_{\ell}+\frac{\Lambda}{n}\right)+\|x_i-W(x_i)\|.
\end{align*}	

\medskip
\paragraph{Case (b):} $\boldsymbol{r(x_i)\leq r^*_{\ell}.}$ Then, as $x_i$ created $y(x_i)$,
$$\|x_i-a(i,x_i)\|\leq O(r(x_i))=O(r^*_{\ell})$$
Recall that $a(i,x_i)$ might be missing from $W$, and hence, 
$$\|x_i-b(i,x_i)\|\leq O\left(r^{\opt_p}_{\ell}+\frac{\Lambda}{n}\right) + \|x_i-W(x_i)\|.$$

\medskip
So, in any case, we have that 
$$\|x_i-b(i,x_i)\|\leq\|x_i-c^{\opt_p}_{\ell}\|+O\left(r^{\opt_p}_{\ell}+\frac{\Lambda}{n}\right)+\|x_i-W(x_i)\|.$$
Therefore,
\begin{align*}
\sum_{i\in[n]}\left\|b(i,x_i)-x_i\right\|^p &= \sum_{\ell\in[k]}\sum_{x_i\in S^{\opt_p}_{\ell}}\left\|b(i,x_i)-x_i\right\|^p\\
&\leq\hspace{-7pt} \sum_{\substack{\ell\in[k]\\x_i\in S^{\opt_p}_{\ell}}}\hspace{-7pt}O\left(\|x_i-c^{\opt_p}_{\ell}\|^p+(r^{\opt_p}_{\ell})^p+\frac{\Lambda^p}{n^p}+\|x_i-W(x_i)\|^p\right)\\
&=O(1)\cdot\OPT^p_S(k)+ O\left(\frac{\Lambda^p}{n^p}\right) + O(1)\cdot\cost^p_S(W)\\
&\leq O(1)\cdot\OPT^p_S(k) + O\left(\frac{\Lambda^p}{n^p}\right) + O(1)\cdot\cost^p_S(W^*)\\
&\leq O(1)\cdot\OPT^p_S(k)\\
&\qquad
 + O\left( \frac{k}{\eps}\cdot n^{0.5+a+b}\cdot \sqrt{d}\cdot\log\left(\frac{k}{\beta}\right) \log\left(\frac{dnk}{\beta\delta}\right)\log(n) \cdot\Lambda^p\right),
\end{align*}
where $W^*\subseteq W$ is a subset of size $|W^*|=k$ that minimizes $\cost^p_S(W^*)$, and where the last inequality follows from Claim~\ref{claim:subsetW}.
\end{proof}

Recall that we denote $b(w)\triangleq|\{i: b(i,x_i)=w\}|$, and that in Step~5 we obtain estimations $\hat{b}(w)\approx b(w)$ for every $w\in W$. We write $B$ to denote the set $W$ with weights $\{b(w)\}$. That is, $B$ is a multiset of points containing $b(w)$ copies of every $w\in W$. Alternatively, $B$ is the multiset $B=\{b(i,x_i):i\in[n]\}$. We also write $\hat{B}$ to denote the set $W$ with the noisy weights $\{\hat{b}(w)\}$. These noisy weights might not be integers (and, in principle, could also be negative, but this does not happen when Event \texttt{HISTOGRAMS} occurs). 
The next claim shows that for every set of centers $D$ we have that $\cost^p_S(D)\approx\cost^p_B(D)$.

\begin{claim}\label{claim:SvsB}
Denote $$\Gamma=\frac{k}{\eps}\cdot n^{0.5+a+b}\cdot \sqrt{d}\cdot\log\left(\frac{k}{\beta}\right) \log\left(\frac{dnk}{\beta\delta}\right)\log(n) \cdot\Lambda^p.$$
For every set of centers $D\subseteq\R^d$ we have 
$$
\cost^p_B(D)\leq 2\cdot\cost^p_S(D)+ O(1)\cdot\OPT^p_S(k)+O\left( \Gamma\right),$$
and,
$$\cost^p_S(D)\leq 2\cdot\cost^p_B(D)+ O(1)\cdot\OPT^p_S(k)+O\left( \Gamma\right).$$
\end{claim}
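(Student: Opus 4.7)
The proof relates $\cost^p_B(D)$ and $\cost^p_S(D)$ by a direct pointwise comparison: $B$ is obtained from $S$ by moving each user's point $x_i$ to its assigned center $b(i,x_i)\in W$, and the total $p$-th power cost of this ``re-assignment movement'' is precisely what Claim~\ref{claim:WvsOPT} has already bounded by $O(1)\cdot\OPT^p_S(k)+O(\Gamma)$. Thus the contribution of each index $i$ to the two costs can differ by at most the $p$-th power of its own displacement (up to a small constant coming from the approximate triangle inequality when $p=2$), and both inequalities then follow by summing over $i\in[n]$.

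\textbf{Key calculation.} Fix a set of centers $D\subseteq\R^d$. For the first inequality, for each $i\in[n]$ let $d^{(i)}\in\argmin_{d\in D}\|x_i-d\|$. Then
$$
\min_{d\in D}\|b(i,x_i)-d\|^p \;\leq\; \|b(i,x_i)-d^{(i)}\|^p \;\leq\; 2^{p-1}\bigl(\|b(i,x_i)-x_i\|^p+\|x_i-d^{(i)}\|^p\bigr),
$$
where the final step is the plain triangle inequality for $p=1$ and the elementary bound $(a+b)^2\leq 2a^2+2b^2$ for $p=2$. Summing over $i\in[n]$ gives
$$
\cost^p_B(D) \;\leq\; 2^{p-1}\sum_{i\in[n]}\|b(i,x_i)-x_i\|^p \;+\; 2^{p-1}\cost^p_S(D),
$$
and plugging in the bound $\sum_i\|b(i,x_i)-x_i\|^p\leq O(1)\cdot\OPT^p_S(k)+O(\Gamma)$ from Claim~\ref{claim:WvsOPT} yields the claimed inequality. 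The reverse direction is obtained symmetrically: for each $i$ let $\tilde d^{(i)}\in\argmin_{d\in D}\|b(i,x_i)-d\|$, and write
$$
\min_{d\in D}\|x_i-d\|^p \;\leq\; \|x_i-\tilde d^{(i)}\|^p \;\leq\; 2^{p-1}\bigl(\|x_i-b(i,x_i)\|^p+\|b(i,x_i)-\tilde d^{(i)}\|^p\bigr),
$$
then sum over $i$ and invoke Claim~\ref{claim:WvsOPT} once more to control the displacement term.

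\textbf{Main obstacle.} There is essentially no real obstacle, since the conceptual work has already been done by Claim~\ref{claim:WvsOPT}: that claim shows the assignment $b(\cdot,\cdot)$ does not ``move'' points too far on average, and the present claim is then a routine triangle-inequality calculation together with that bound. The only detail worth tracking is that the coefficient $2$ in the statement precisely matches the $2^{p-1}$ factor from $(a+b)^p\leq 2^{p-1}(a^p+b^p)$, which is tight for $p=2$ and strictly loose for $p=1$ (where coefficient $1$ would already suffice).
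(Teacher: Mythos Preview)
Your proof is correct and follows essentially the same route as the paper: both arguments bound $\min_{d\in D}\|b(i,x_i)-d\|^p$ by replacing the minimizer with $D(x_i)$ (and symmetrically $D(b(i,x_i))$ for the reverse direction), apply the approximate triangle inequality $(a+b)^p\le 2(a^p+b^p)$, sum over $i$, and invoke Claim~\ref{claim:WvsOPT}. The only cosmetic difference is that you track the sharper constant $2^{p-1}$ while the paper simply writes $2$ for both $p\in\{1,2\}$.
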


\begin{proof}
For a set of centers $C$ and a point $x$ we write $C(x)$ to denote the closest neighbor of $x$ in $C$. By Claim~\ref{claim:WvsOPT}, for any set of centers $D\subseteq\R^d$ we have,
\begin{align*}
\cost^p_B(D)&=\sum_{i\in[n]}\left\|b(i,x_i)-D\big(b(i,x_i)\big)\right\|^p\\
&\leq\sum_{i\in[n]}\left\|b(i,x_i)-D(x_i)\right\|^p\\
&\leq\sum_{i\in[n]}\left(2\cdot\left\|b(i,x_i)-x_i\right\|^p+2\cdot\left\|x_i-D(x_i)\right\|^p\right)\\
&\leq O(1)\cdot\OPT^p_S(k)+O\left(\Gamma\right) +2\cdot\cost^p_S(D),
\end{align*}
and similarly,
\begin{align*}
\cost^p_S(D)&=\sum_{i\in[n]}\|x-D(x)\|^p\\
&\leq\sum_{i\in[n]}\|x-D\big(b(i,x_i)\big)\|^p\\
&\leq\sum_{i\in[n]}\left(2\left\|x-b(i,x_i)\right\|^p+2\left\|b(i,x_i)-D\big(b(i,x_i)\big)\right\|^p\right)\\
&\leq O(1)\cdot\OPT^p_S(k)+O\left(\Gamma\right) +2\cdot\cost^p_B(D).
\end{align*}
\end{proof}

The next claim shows that for every set of centers $D$ we have that $\cost^p_B(D)\approx\cost^p_{\hat{B}}(D)$.

\begin{claim}\label{claim:BvsHATB}
For every set of centers $D\subseteq\R^d$ we have 
$\frac{1}{2}\cost^p_B(D)\leq\cost^p_{\hat{B}}(D)\leq2\cost^p_B(D)$.
\end{claim}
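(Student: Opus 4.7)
The plan is to reduce the claim to a term-wise multiplicative estimate on the weights: namely, to prove that under Events \texttt{CREATION} and \texttt{HISTOGRAMS}, for every single $w\in W$ we have
$$\tfrac{1}{2}\,b(w)\;\leq\;\hat{b}(w)\;\leq\;2\,b(w).$$
Once this is in hand, since $\min_{d\in D}\|w-d\|^p\geq 0$, we just multiply through and sum over $w\in W$ to obtain the stated bounds on $\cost^p_B(D)$ and $\cost^p_{\hat{B}}(D)$.

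To prove the term-wise estimate, the key is that every $w\in W$ was kept in Step~4 precisely because $\hat{a}(w)\geq\Omega\!\left(\frac{\sqrt{d}}{\eps}\cdot n^{0.5+a}\cdot\log(\frac{1}{\beta})\log(\frac{dn}{\delta})\right)$. First I would use Event \texttt{HISTOGRAMS} applied to the estimates from Step~3 to deduce that $a(w)$ itself is of the same order, since the histogram error $O\!\left(\frac{1}{\eps}\sqrt{n\log(n/\beta)}\right)$ is strictly of lower order than the Step~4 threshold (the latter exceeds the former by a factor of $n^a\sqrt{d}$ and some logarithmic factors). Next I would observe that the definition of $b(\cdot,\cdot)$ in Step~4 never \emph{removes} mass from a center in $W$: if $a(i,x_i)=w\in W$, then automatically $b(i,x_i)=w$. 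Consequently $b(w)\geq a(w)$, so $b(w)$ inherits the same lower bound.

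Finally I would apply Event \texttt{HISTOGRAMS} a second time, to the estimates produced in Step~5, to conclude
$$\bigl|\hat{b}(w)-b(w)\bigr|\;\leq\; O\!\left(\tfrac{1}{\eps}\sqrt{n\log(n/\beta)}\right)\;\leq\;\tfrac{1}{2}\,b(w),$$
where the final inequality is the punchline: the Step~4 threshold was engineered so that the $\Theta(\sqrt{n})$ additive LDP histogram noise is dwarfed by the smallest retained weight. Rearranging gives $\tfrac{1}{2}b(w)\leq\hat{b}(w)\leq\tfrac{3}{2}b(w)\leq 2\,b(w)$, completing the term-wise estimate.

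I do not expect a real obstacle here; the whole proof is essentially a sanity check that the cutoff in Step~4 was chosen correctly. The only mild care is in lining up the $\log(1/\beta)$, $\log(n/\beta)$, and $\log(dn/\delta)$ factors between the Step~4 threshold and the \texttt{HISTOGRAMS} error bound to make sure the threshold genuinely exceeds twice the histogram error (and not merely by an asymptotic factor), which is absorbed into the hidden constant of the $\Omega(\cdot)$ in Step~4.
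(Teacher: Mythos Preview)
Your proposal is correct and follows essentially the same approach as the paper: reduce to the term-wise bound $\tfrac{1}{2}b(w)\leq\hat{b}(w)\leq 2b(w)$, then sum. In fact you are slightly more explicit than the paper, which simply asserts that ``by the definition of the set $W$ in Step~4'' one has $b(w)\geq\Omega\bigl(\tfrac{\sqrt{d}}{\eps}n^{0.5+a}\log(\tfrac{1}{\beta})\log(\tfrac{dn}{\delta})\bigr)$; your chain $\hat{a}(w)$ large $\Rightarrow$ $a(w)$ large (via \texttt{HISTOGRAMS} on Step~3) $\Rightarrow$ $b(w)\geq a(w)$ (since reassignment never removes mass from a center already in $W$) is exactly the justification the paper leaves implicit.
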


\begin{proof}
First observe that, by Event \texttt{HISTOGRAMS}, for every $w\in W$ we have that $\frac{1}{2}b(w)\leq\hat{b}(w)\leq2 b(w)$.
To see this, note that by the definition of the set $W$ in Step~4, for every $w\in W$ we have that 
$$b(w)\geq \Omega\left( \frac{1}{\eps}\cdot n^{0.5+a}\cdot \sqrt{d}\cdot\log\left(\frac{1}{\beta}\right) \log\left(\frac{dn}{\delta}\right) \right).$$ 
In addition, by Event \texttt{HISTOGRAMS}, for every $w\in W$ we have that $$|b(w)-\hat{b}(w)|\leq O\left(\frac{1}{\eps}\sqrt{n\cdot\log(\frac{n}{\beta})}\right)\ll b(w),$$ and hence, for every $w\in W$ we have $\frac{1}{2}b(w)\leq\hat{b}(w)\leq2 b(w)$. Now let $D\subseteq\R^d$ be a set of centers. We have that
\begin{align*}
\cost^p_B(D)&=\sum_{w\in W}b(w)\cdot\min_{d\in D}\|w-d\|^p\\
&\leq\sum_{w\in W}2\cdot\hat{b}(w)\cdot\min_{d\in D}\|w-d\|^p\\
&=2\cdot \cost^p_{\hat{B}}(D).
\end{align*}
The other direction is symmetric.
\end{proof}

So, by the last two claims, for every set of centers $D$ we have that $\cost^p_S(D)\approx\cost^p_B(D)\approx\cost^p_{\hat{B}}(D)$. Hence, we can use the (privately computed) weighted set $\hat{B}$ as a proxy in order to identify $k$ centers with low cost w.r.t.\ $S$. This is formalized in the following theorem.

\begin{mytheorem}\label{thm:mainmain}
Let $p\in\{1,2\}$. 
Algorithm \texttt{WeightedCenters} satisfies $(\eps,\delta)$-LDP. In addition, when executed on a (distributed) database $S$ containing $n$ points in the $d$-dimensional ball $\BBB(0,\Lambda)$, the algorithm returns a set $K$ of $k$ centers such that with probability at least $1-O(\beta)$ we have
$$
\cost^p_S(K)\leq O(1)\cdot\OPT^p_S(k)+\tilde{O}\left(\frac{k\sqrt{d}\Lambda^p}{\eps}
n^{0.5+a+b}\right),
$$
where $a>b>0$ are arbitrarily small constants (the constant hiding in the multiplicative error depends on $a$ and $b$).
\end{mytheorem}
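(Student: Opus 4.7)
The plan for privacy is a direct application of the basic composition theorem. The only steps of \texttt{WeightedCenters} that touch user data are Step~1 (the $O(\log n)$ parallel executions of \texttt{GoodCenters}, each with privacy parameters $(\eps/(4\log n), \delta/\log n)$), Step~3 (an $(\eps/4,0)$-LDP histogram), and Step~5 (an $(\eps/4,0)$-LDP histogram). Steps~2 and~4 are computed by each user locally from her own input together with already-published outputs, and Step~6 is a non-private post-processing of the published weighted point set. Invoking Theorem~\ref{thm:composition3} on the $\log n$ executions in Step~1 yields $(\eps/4,\delta)$-LDP, and composing with the two histogram invocations yields at most $(3\eps/4, \delta)$-LDP overall, so the protocol is $(\eps,\delta)$-LDP.

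\textbf{Utility setup.} The plan is to condition on the two good events already introduced, \texttt{CREATION} and \texttt{HISTOGRAMS}, which together hold with probability at least $1-2\beta$ by a union bound. Under these two events the heavy lifting is already done for us by the preceding claims: Claim~\ref{claim:subsetW} guarantees a subset $W^*\subseteq W$ of size $k$ with $\cost^p_S(W^*)\leq O(1)\cdot\OPT^p_S(k)+O(\Gamma)$, where $\Gamma=\tilde O(k\sqrt{d}\,\Lambda^p\,n^{0.5+a+b}/\eps)$; Claim~\ref{claim:SvsB} gives the two-sided sandwich $\cost^p_S(D)\approx 2\cdot\cost^p_B(D)$ up to additive $O(\OPT^p_S(k))+O(\Gamma)$ for every $k$-set $D$; and Claim~\ref{claim:BvsHATB} gives the purely multiplicative sandwich $\tfrac12\cost^p_B(D)\leq\cost^p_{\hat B}(D)\leq 2\cost^p_B(D)$.

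\textbf{Chaining the approximations.} The set $K$ produced in Step~6 satisfies $\cost^p_{\hat B}(K)\leq O(1)\cdot\cost^p_{\hat B}(D)$ for every $k$-set $D$, and in particular $\cost^p_{\hat B}(K)\leq O(1)\cdot\cost^p_{\hat B}(W^*)$. The plan is to travel through the chain
\[
\cost^p_S(K)\;\longleftrightarrow\;\cost^p_B(K)\;\longleftrightarrow\;\cost^p_{\hat B}(K)\;\leq\;O(1)\cdot\cost^p_{\hat B}(W^*)\;\longleftrightarrow\;\cost^p_B(W^*)\;\longleftrightarrow\;\cost^p_S(W^*),
\]
using Claim~\ref{claim:SvsB} for each $S\leftrightarrow B$ step, Claim~\ref{claim:BvsHATB} for each $B\leftrightarrow\hat B$ step, and Claim~\ref{claim:subsetW} for the last inequality. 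Each transition contributes an $O(1)$ multiplicative factor together with an additive $O(\OPT^p_S(k))+O(\Gamma)$. Since there are only a constant number of transitions, the final bound collapses to $\cost^p_S(K)\leq O(1)\cdot\OPT^p_S(k)+O(\Gamma)$, which is exactly the claimed guarantee once the definition of $\Gamma$ is unfolded and absorbed into the $\tilde O$ notation.

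\textbf{Main obstacle.} There is essentially no new analytical obstacle at this stage: all the nontrivial structural work (the CREATION/HISTOGRAMS good events, the existence of $W^*$ inside $W$, the assignment-versus-OPT bound, and the two sandwich claims) has already been carried out. The only thing to be careful about is bookkeeping: the $O(1)$ multiplicative factors from the three sandwich applications multiply, and the additive errors $O(\OPT^p_S(k))+O(\Gamma)$ accumulate along the chain, so one must verify that after constantly many hops the total additive error is still $O(\Gamma)$ and the total multiplicative blow-up on $\OPT^p_S(k)$ is still $O(1)$. Finally, the failure probability aggregates as $O(\beta)$ from the union bound over \texttt{CREATION} and \texttt{HISTOGRAMS}, matching the statement.
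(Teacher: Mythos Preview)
Your proposal is correct and follows essentially the same approach as the paper: privacy by composition and post-processing, and utility by chaining Claims~\ref{claim:SvsB} and~\ref{claim:BvsHATB} under the events \texttt{CREATION} and \texttt{HISTOGRAMS}. The only cosmetic difference is that the paper compares $K$ to the true optimum $C^{\opt_p}_S$ (using $\cost^p_{\hat B}(K)\leq O(1)\cdot\cost^p_{\hat B}(C^{\opt_p}_{\hat B})\leq O(1)\cdot\cost^p_{\hat B}(C^{\opt_p}_S)$ and then translating back to $\cost^p_S$), whereas you compare to the good subset $W^*\subseteq W$ and finish with Claim~\ref{claim:subsetW}; both routes give the same bound with the same constant number of hops.
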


\begin{proof}
The privacy properties of \texttt{WeightedCenters} are straightforward (follow from composition and post-processing). We proceed with the utility analysis. Let $C^{\opt_p}_S\subseteq \R^d$ be a subset of $k$ centers minimizing $\cost^p_S(\cdot)$, and let $C^{\opt_p}_{\hat{B}}\subseteq \R^d$ be a subset of $k$ centers minimizing $\cost^p_{\hat{B}}(\cdot)$. The output of algorithm \texttt{WeightedCenters} is a set $K\subseteq \R^d$ of size $|K|=k$ such that 
$\cost^p_{\hat{B}}(K)\leq O(1)\cdot\cost^p_{\hat{B}}(C^{\opt_p}_{\hat{B}})$. We now show that the set $K$ has low cost w.r.t.\ $S$. 
Denote $\Gamma=\frac{k}{\eps}\cdot n^{0.5+a+b}\cdot \sqrt{d}\cdot\log(\frac{k}{\beta}) \log\left(\frac{dnk}{\beta\delta}\right)\log(n) \cdot\Lambda^p$. By Claims~\ref{claim:SvsB} and~\ref{claim:BvsHATB},
\begin{align*}
\cost^p_S(K)&\leq O(1)\cdot\cost^p_{\hat{B}}(K)+ O(1)\cdot\OPT^p_S(k)+O(\Gamma).%
\end{align*}
Now, by the fact that $K$ approximately minimizes $\cost^p_{\hat{B}}(\cdot)$, the last expression is at most
\begin{align*}
&\qquad\leq O(1)\cdot\cost^p_{\hat{B}}(C^{\opt_p}_{\hat{B}})+ O(1)\cdot\OPT^p_S(k)+O(\Gamma).%
\end{align*}
Since $C^{\opt_p}_{\hat{B}}$ minimizes $\cost^p_{\hat{B}}(\cdot)$, the last expression is at most
\begin{align*}
&\qquad\leq O(1)\cdot\cost^p_{\hat{B}}(C^{\opt_p}_{S})+ O(1)\cdot\OPT^p_S(k)+O(\Gamma).%
\end{align*}
Finally, by using again Claims~\ref{claim:SvsB} and~\ref{claim:BvsHATB}, we get that the last expression is at most
\begin{align*}
\phantom{\cost^p_S(K)}&\leq O(1)\cdot\cost^p_{S}(C^{\opt_p}_{S})+ O(1)\cdot\OPT^p_S(k)+O(\Gamma)\\
%%%
&= O(1)\cdot\OPT^p_S(k)+O(\Gamma).
\end{align*}
\end{proof}

\section{A Lower Bound on the Additive Error}\label{sec:lower}

In this section we present a simple lower bound on the error of every LDP algorithm for approximating the $k$-means (or the $k$-median) cost objective. To get our lower bound, we show a reduction from the following problem, called {\em Gap-Threshold}, to the $k$-means problem, and then use an existing lower bound for the Gap-Threshold problem.

\begin{mydefinition}[\cite{BeimelNO08}]
For $\tau>0$ and $x_1,\dots,x_n\in\{0,1\}$,
$$
\GAPTR_{\tau}(x_1,\dots,x_n)=\left\{
\begin{array}{l}
0, \quad \text{If }\;\; \sum_{i\in[n]}x_i= 0\\
1, \quad \text{If }\;\; \sum_{i\in[n]}x_i\geq \tau
\end{array}
\right.
$$
Note that $\GAPTR_{\tau}(x_1,\dots,x_n)$ is not defined when $0<\sum_{i\in[n]}x_i<\tau$.
\end{mydefinition}

\begin{mytheorem}[\cite{BeimelNO08,JosephMaNeRo19}]\label{thm:BNO}
There exist constants $0<\eps,\beta<1$ such that the following holds.
Let $\delta=o\left(\frac{1}{n^2 \log n}\right)$ and let $\AAA$ be an $(\eps,\delta)$-LDP protocol for computing $\GAPTR_{\tau}$ with success probability $1-\beta$. Then $\tau=\Omega(\sqrt{n})$.
\end{mytheorem}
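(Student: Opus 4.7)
The plan is an information-theoretic lower bound based on contraction of statistical divergences under local randomization. I would prove the contrapositive: if $\tau = o(\sqrt{n})$, no $(\eps,\delta)$-LDP protocol can solve $\GAPTR_\tau$ with constant success probability. Since a protocol for $\GAPTR_\tau$ also solves $\GAPTR_{\tau'}$ for any $\tau' \geq \tau$ (a weaker promise), it suffices to rule out a protocol at one well-chosen threshold $\tau_0 = c\sqrt{n}$ for a small constant $c>0$. Introduce two input distributions on $\{0,1\}^n$: the point mass $Q$ at the all-zeros vector, and the product distribution $P$ in which each $x_i$ is independently $\mathrm{Bern}(p)$ with $p = 2\tau_0/n$. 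Under $Q$ the correct output is $0$; under $P$, a Chernoff bound gives $\sum_i x_i \geq \tau_0$ with probability $1-o(1)$, so the correct output is $1$ on typical draws. Hence the transcript distributions $T_Q$ and $T_P$ of any $(1-\beta)$-successful protocol must be $\Omega(1)$-far in total variation.

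The core inequality is the Duchi--Jordan--Wainwright contraction bound for $\eps$-LDP: for any local randomizer $M$ and input distributions $\mu,\nu$,
\[
\mathrm{KL}\!\left(M(\mu)\,\|\,M(\nu)\right)\;\leq\;4(e^\eps-1)^2\, d_{\mathrm{TV}}(\mu,\nu)^2.
\]
Applied with $\mu$ the point mass at $0$ and $\nu = \mathrm{Bern}(p)$, the per-user contribution is $O((e^\eps-1)^2 p^2)$. Summing across the $n$ users by the chain rule for KL (and reducing the interactive protocol to per-user non-interactive randomizers via sequential LDP composition), one obtains
\[
\mathrm{KL}(T_Q\,\|\,T_P)\;\leq\;O\!\left(n(e^\eps-1)^2 p^2\right)\;=\;O\!\left((e^\eps-1)^2 \tau_0^2/n\right).
\]
Pinsker's inequality then gives $d_{\mathrm{TV}}(T_Q,T_P) \leq O((e^\eps-1)\tau_0/\sqrt{n}) = O((e^\eps-1)\,c)$. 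Choosing $c$ small enough relative to the fixed $\eps$ makes this strictly less than the $\Omega(1)$ TV gap needed to distinguish the outputs, producing the desired contradiction.

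The main obstacle is upgrading from pure $\eps$-LDP to approximate $(\eps,\delta)$-LDP. The contraction inequality acquires an additive slack that, after composing over $n$ users and the (adaptive) rounds of interaction, can grow to roughly $n\delta$ or $n\delta\log n$ in total variation. The hypothesis $\delta = o(1/(n^2\log n))$ is calibrated precisely so that this accumulated slack is $o(1)$ and therefore negligible compared to the constant TV gap between $T_Q$ and $T_P$; this is really the one place where the specific form of $\delta$ is used. A minor subtlety is concentration: the Chernoff step requires $p n = 2\tau_0 \gtrsim \log n$, which is automatic for $\tau_0 = c\sqrt{n}$ once $n$ is large, so the regime in which the lower bound is quantitatively meaningful matches the regime in which the distributional argument is valid.
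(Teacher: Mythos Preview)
The paper does not give its own proof of this theorem; it is quoted from the literature. Immediately after the statement the paper remarks that the result is stated in \cite{BeimelNO08} for $(\eps,0)$-LDP protocols with a bounded number of interaction rounds, and that the extension to arbitrary $(\eps,\delta)$-LDP protocols follows from \cite[Theorem~5.3]{JosephMaNeRo19}. So there is no in-paper argument to compare your proposal against.

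As for your sketch on its own merits: the pure-$\eps$ part (Duchi--Jordan--Wainwright contraction, chain rule for KL over the $n$ users under product input distributions, then Pinsker) is the standard and correct route, and is essentially what underlies the original \cite{BeimelNO08} bound. The weak point is exactly the $(\eps,\delta)$ upgrade, which you yourself flag as the main obstacle. You cannot simply add a $\delta$-slack to the KL contraction inequality: approximate DP permits a $\delta$-mass of outputs with arbitrary likelihood ratio, which can make $\mathrm{KL}\big(M(\mu)\,\|\,M(\nu)\big)$ infinite, so the DJW bound does not degrade gracefully in $\delta$ in the way your phrasing suggests. In addition, your heuristic that the accumulated slack is ``roughly $n\delta$ or $n\delta\log n$'' in total variation would only require $\delta = o(1/(n\log n))$, not the stated $\delta = o(1/(n^{2}\log n))$, so the heuristic does not actually account for the hypothesis as written. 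The paper explicitly delegates this step to \cite{JosephMaNeRo19}, whose argument is structural (relating $(\eps,\delta)$-LDP protocols to pure-$\eps$ ones) rather than a direct $\delta$-perturbation of the information bound; your sketch correctly locates the difficulty but does not supply a rigorous resolution of it.
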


Theorem~\ref{thm:BNO} is stated in~\citep{BeimelNO08} for $(\eps,0)$-LDP protocols with a bounded number of interaction rounds. The extension to arbitrary LDP protocols follows from \cite[Theorem~5.3]{JosephMaNeRo19}. We now show that this theorem implies a lower bound of $\Omega(\sqrt{n})$ on the additive error of LDP algorithms for the $k$-means or the $k$-median, even when the dimension $d$ is 1 and $k=2$.

\begin{mytheorem}\label{thm:lowerlower}
There exists constant $0<\eps,\beta<1$ such that the following holds. 
Let $\delta=o\left(\frac{1}{n^2 \log n}\right)$, and let $\AAA$ be an $(\eps,\delta)$-LDP protocol such that for any (distributed) database $X\in([0,1])^n$,  with probability $1-\beta$ the algorithm outputs a set $C$ of $k=2$ centers satisfying $\cost^p_X(C)\leq\gamma\cdot\OPT^p_X(k)+\tau$, where $\gamma<\infty$ and $p\in\{1,2\}$. Then $\tau=\Omega(\sqrt{n})$.
\end{mytheorem}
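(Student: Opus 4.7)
The plan is a reduction from $\GAPTR_T$ to the $k$-means (or $k$-median) problem in $d=1$ with $k=2$. Given an $(\eps,\delta)$-LDP algorithm $\AAA$ for the clustering problem with additive error $\tau$ (and constant multiplicative error $\gamma$), I would build an $(\eps,\delta)$-LDP protocol $\BBB$ for $\GAPTR_T$ with threshold $T = O(\tau)$. By Theorem~\ref{thm:BNO}, any LDP protocol for $\GAPTR_T$ forces $T = \Omega(\sqrt{n})$, so this immediately gives $\tau = \Omega(\sqrt{n})$. The natural encoding is as follows: each user $i$ holding the $\GAPTR$ bit $x_i \in \{0,1\}$ locally sets $y_i = x_i \in [0,1]$, and the users jointly invoke $\AAA$ on the distributed database $X = (y_1,\dots,y_n)$ with $k = 2$, obtaining centers $C = \{c_1,c_2\} \subset \R$. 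The protocol $\BBB$ outputs $1$ iff $\min_{j \in \{1,2\}} |c_j - 1| \leq 1/2$, and $0$ otherwise. By post-processing, $\BBB$ inherits the $(\eps,\delta)$-LDP guarantee of $\AAA$.

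The key step is the Case~B analysis. In both cases, $\OPT^p_X(2) = 0$, since the pair $\{0,1\}$ achieves zero cost; so with probability at least $1-\beta$ the returned $C$ satisfies $\cost^p_X(C) \leq \tau$. In Case~B, the at least $T$ points at location $1$ contribute at least $T \cdot \min_j |c_j - 1|^p$ to the cost, which forces $\min_j |c_j - 1| \leq (\tau/T)^{1/p}$. Choosing $T = 2^p \tau$ makes this at most $1/2$, so $\BBB$ outputs $1$ with probability at least $1-\beta$. In Case~A all points lie at $0$ and the cost bound only gives $\min_j |c_j| \leq (\tau/n)^{1/p}$, which pins one center very close to $0$ but leaves the other center formally unconstrained.

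The main obstacle is precisely this Case~A rigidity: the raw encoding $y_i = x_i$ does not on its own prevent $\AAA$ from gratuitously placing the ``free'' second center near $1$ even when there is no data there, which would cause the distinguisher to output $1$ incorrectly. To close this gap, I would augment the encoding with deterministic anchor users that force both centers to be pinned down by the utility bound in Case~A as well---for instance, splitting the users into a ``variable'' half where $y_i = x_i$ and an ``anchor'' half where $y_i$ is fixed so as to create an unavoidable second cluster whose optimal center is clearly separated from $1$. With this modification, in Case~A both centers are constrained by utility to the canonical anchor locations (away from $1$), while in Case~B the extra mass at $1$ from the variable half shifts one center into the distinguishing region, keeping $T = O(\tau)$ throughout. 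Together these two cases give a valid LDP protocol for $\GAPTR_T$ with $T = O(\tau)$, so Theorem~\ref{thm:BNO} yields $T = \Omega(\sqrt{n})$ and hence $\tau = \Omega(\sqrt{n})$, as desired.
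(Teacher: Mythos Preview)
Your high-level plan—reduce from $\GAPTR$ to $k$-means/$k$-median in one dimension with $k=2$—is the same as the paper's. You also correctly identify the obstacle in your first encoding: when all bits are zero the second center is unconstrained. The problem is that your proposed fix via deterministic anchor users does not close the gap.

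If you anchor half the users at a fixed location $a\in(0,1)$ distinct from $0$ and $1$, then in Case~B the database has mass at three locations $\{0,a,1\}$, and with only $k=2$ centers one of the three locations must be merged. A short calculation shows $\OPT^p_X(2)=\Theta(T)$ in this case (e.g., merging $\{a,1\}$ gives cost $\Theta(T)$). The utility guarantee therefore only says $\cost^p_X(C)\le \gamma\cdot\Theta(T)+\tau$, and the contribution of the $T$ points at $1$ yields
\[
\min_j|c_j-1|^p \;\le\; \gamma\cdot\Theta(1)+\tau/T,
\]
which for large (but finite) $\gamma$ can exceed $(1/2)^p$ regardless of how you choose $T$. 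In other words, for any constant $\gamma$ large enough the algorithm may legitimately place its two centers at $\{0,a\}$ in Case~B and still meet its accuracy guarantee, so your distinguisher fails. Since the theorem must hold for arbitrary finite $\gamma$, the anchor reduction does not go through.

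The paper avoids this by keeping $\OPT^p_X(2)=0$ in \emph{both} cases, so that $\gamma$ never enters. The server samples a random interval $I\subset[0,1]$ of length $r=\beta/4$ with center $\mu$, and each user with bit $1$ places her point at $\mu$ while each user with bit $0$ places her point at $0$. The distinguisher outputs $1$ iff some returned center lies in $I$. In Case~A every user ignores $\mu$, so the output of $\AAA$ is independent of the random interval $I$; since there are only two centers and $\Theta(1/r)$ disjoint intervals, the probability of a false $1$ is at most $2r=\beta/2$. This randomized hiding of the ``ones'' location is the missing idea: it pins the free center in Case~A by an information-theoretic argument rather than by adding extra data mass, and thereby keeps $\OPT=0$ throughout.
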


{\floatname{algorithm}{Protocol}
\begin{algorithm*}[t]
\caption{$\BBB$}

\noindent {\bf Setting: }Each user $i\in[n]$ holds a bit $b_i\in\{0,1\}$. Define $S=(b_1,\dots,b_n)$.

\smallskip

\noindent {\bf Parameter:} $r=\beta/4$, where $\beta$ is the constant from Theorem~\ref{thm:BNO}.

\begin{enumerate}[leftmargin=15pt,rightmargin=10pt,itemsep=1pt,topsep=1.5pt]

\item[{\bf 1.}]{\bf The server:} Let $R=\{[0,r],\; [r,2r],\dots, [1-r,1]\}$, sample (uniformly) an interval $I\in R$, and let $\mu$ denote the center of $I$. Send $\mu$ to all the users.

\item[{\bf 2.}]{\bf Every user $\boldsymbol{i}$:} If $b_i=0$ then ignore $\mu$ and set $x_i=0$. Otherwise, set $x_i=\mu$.

\item[{\bf 2.}]{\bf The server and the users:} Execute protocol $\AAA$ on the database $X=(x_1,\dots,x_n)$ to obtain a set of centers $C=\{c_1,c_2\}$.

\item[{\bf 3.}]{\bf The server:} If $c_1\in I$ or $c_2\in I$ then return 1. Otherwise return 0.
\end{enumerate}
\end{algorithm*}}

\begin{proof}
Let $\beta,\eps$ be the constants from Theorem~\ref{thm:BNO}. Let $\AAA$ be an $(\eps,\delta)$-LDP protocol that operates on a (distributed) database $X\in([0,1])^n$ and outputs a set $C$ of size $k=2$. % 
We use $\AAA$ to construct a protocol for $\GAPTR$, described in protocol $\BBB$.

By Theorem~\ref{thm:BNO}, there exists an error parameter $\tau=\Omega(\sqrt{n})$, and a database $S^*\in\{0,1\}^n$ such that $\BBB(S^*)\neq\GAPTR_{\tau}(S^*)$ with probability at least $\beta$. We now show that $S^*$ cannot be the all zero database. To that end, observe that if the input $S$ is the all zero database, then all of the users in the protocol $\BBB$ ignore $\mu$, and hence, algorithm $\AAA$ gets no information about the selected interval $I$. In that case, the probability that one of $c_1,c_2$ falls in $I$ is at most $2r$. That is, the probability that $\BBB(\vec{0})\neq0=\GAPTR_{\tau}(\vec{0})$ is at most $2r=\beta/2$. Therefore, the database $S^*$ (on which $\BBB$ errs with probability at least $\beta$) is {\em not} the all zero database $\vec{0}$. Hence, $S^*$ contains at least $\tau$ ones. Therefore, whenever $\BBB$ errs on $S^*$, we have that $\cost^p_X(C)\geq \left(\frac{r}{2}\right)^p\cdot\tau=\Omega(\sqrt{n})$, even though $\OPT^p_X(k)=0$.
This happens with probability at least $\beta$, which completes the proof.
\end{proof}

\acks{The author was partially supported by the Israel Science Foundation (grant No.\ 1871/19), 
and by the Cyber Security Research Center at Ben-Gurion University of the Negev. The author would like to thank Haim Kaplan and the anonymous reviewers for their helpful comments.}

\appendix

\hypersetup{bookmarksdepth=-1}
\section{ }

In this section we provide the details that were omitted from Section~\ref{sec:reform}, and prove Theorem~\ref{thm:GoodCenters}.

\subsection{Additional Preliminaries}

\subsubsection{Average of Vectors in $\R^d$ Under LDP}
  
Consider a (distributed) database $X=(x_1,\dots,x_n)$ where every user $i$ is holding $x_i\in\R^d$. One of the most basic tasks we can apply under local differential privacy is to compute a noisy estimation for the sum (or the average) of vectors in $X$. Specifically, every user sends the server a noisy estimation of its vector (e.g., by adding independent Gaussian noise to each coordinate), and the server simply sums all of the noisy reports to obtain an estimation for the sum of $X$.

\begin{mytheorem}[folklore]\label{thm:LDPsum}
Consider a (distributed) database $X=(x_1,\dots,x_n)$ where every user $i$ is holding a point $x_i$ in the $d$ dimensional ball $\BBB(0,\Lambda)$. There exists an $(\eps,\delta)$-LDP protocol for computing an estimation $a$ for the sum of the vectors in $X$, such that with probability at least $(1-\beta)$ we have
$$
\left\|a-\sum_{i\in[n]}x_i\right\|\leq\frac{2\Lambda\sqrt{nd}\ln(\frac{2}{\beta\delta})}{\eps}.
$$
\end{mytheorem}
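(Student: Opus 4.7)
The plan is to realize the protocol by having each user independently perturb her point with Gaussian noise before sending it to the server, which then simply aggregates. Concretely, fix a noise scale $\sigma = \Theta(\Lambda \sqrt{\ln(1/\delta)}/\eps)$; each user $i$ samples a fresh vector $\eta_i \sim \NNN(0, \sigma^2 I_d)$ and broadcasts $z_i = x_i + \eta_i$. The server returns $a = \sum_{i \in [n]} z_i$, so that $a - \sum_i x_i = \sum_i \eta_i$. Because each user releases a single randomized message whose randomness is independent across users, and the server has no private input, it suffices to argue privacy for the single per-user release; no composition across users is needed.

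For the privacy analysis I would invoke the standard Gaussian mechanism: for any two candidate inputs $x_i, x_i' \in \BBB(0,\Lambda)$, the $\ell_2$ sensitivity is at most $\|x_i - x_i'\| \leq 2\Lambda$, so choosing $\sigma$ of the order $\Lambda\sqrt{\ln(1/\delta)}/\eps$ makes the distribution of $z_i$ on $x_i$ and $x_i'$ within $(\eps,\delta)$-indistinguishability. Since the server's output is a deterministic function of the $(z_i)$, the entire protocol is $(\eps,\delta)$-LDP by post-processing.

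For utility, the error vector $\sum_i \eta_i$ is distributed as $\NNN(0, n\sigma^2 I_d)$, so $\|\sum_i \eta_i\|^2$ has the law of $n\sigma^2 \cdot \chi^2_d$. A standard concentration bound for a $\chi^2_d$ random variable (Laurent--Massart, say) gives $\|\sum_i \eta_i\| \leq \sigma \cdot O\bigl(\sqrt{nd} + \sqrt{n\ln(1/\beta)}\bigr)$ with probability at least $1-\beta$. Substituting the chosen $\sigma$ yields an error bound of the form $\tfrac{\Lambda\sqrt{nd}}{\eps} \cdot \polylog(1/\beta, 1/\delta)$, which matches the statement up to the way one packages the logarithmic factors.

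The only real subtlety will be getting the constants exactly as stated, in particular the single $\ln(2/(\beta\delta))$ factor rather than the $\sqrt{\ln(1/\delta)\ln(1/\beta)}$ product that drops out of the most naive combination of the Gaussian mechanism and a $\chi^2$ tail bound. One can tighten the constant either by using a sharper analytic Gaussian calibration or by choosing a slightly different mechanism (e.g., clipping $\eta_i$ to a radius depending on $\beta$), but neither change affects the overall structure of the argument. The main conceptual point is simply that local Gaussian noise scales per user, while the error from averaging grows only as $\sqrt{n}$ due to independence, which is why locally private sums achieve $\sqrt{n}$-type error rather than the trivial $n$ bound.
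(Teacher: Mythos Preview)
Your proposal is correct and matches the paper's approach: the paper does not give a formal proof of this theorem (it is labeled ``folklore''), but the sentence preceding it sketches exactly your protocol---each user adds independent Gaussian noise to her vector and the server sums the noisy reports. Your privacy and utility analyses fill in the standard details.
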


For our constructions we will need a tool for computing averages of {\em subsets} of $X$. Specifically, assume that there are $n$ users, where user $i$ is holding a point $x_i\in\R^d$. Moreover, assume that we have a fixed (publicly known) partition of $\R^d$ into a finite number of regions: $R_1,\dots,R_T\subseteq\R^d$. For every region $R_{\ell}$, we would like to obtain an estimation for the average of the input points in that region. This can be done using the following simple protocol, called \texttt{LDP-AVG}.

{\floatname{algorithm}{Protocol}
\begin{algorithm*}[!htp]
\caption{\texttt{LDP-AVG}}

\noindent {\bf Public parameters:} Partition of $\R^d$ into $t$ regions $R_1,\dots,R_T$.

\smallskip

\noindent {\bf Setting: }Each user $i\in[n]$ holds a point $x_i\in\R^d$. Define $X=(x_1,\dots,x_n)$.

\begin{enumerate}[leftmargin=15pt,rightmargin=10pt,itemsep=1pt,topsep=1.5pt]

\item[{\bf 1.}]{\bf Every user $\boldsymbol{i}$:} Let $y_i=(y_{i,1},\dots,y_{i,T})\in(\R^d)^T$ be a vector whose every coordinate is an independent Gaussian noise. Specifically, every $y_{i,t}\in\R^d$ is a vector whose every coordinate is sampled i.i.d.\ from $N(0,\sigma_t^2)$, for $\sigma_t=\frac{8\cdot\diam(R_t)}{\eps}\sqrt{\ln(1.25/\delta)}$. Let $t$ be s.t.\ $x_i\in\R_t$. Add $x_i$ to $y_{i,t}$. Send $y_i$ to the server.

\item[{\bf 2.}]{\bf The server and the users:} Run the protocol from Theorem~\ref{thm:HH} with privacy parameter $\frac{\eps}{2}$. For every $t\in[T]$ the server obtains an estimation $\hat{r}_t\approx|\{i:x_i\in R_t\}|\triangleq r_t$.

\item[{\bf 3.}]{\bf The server:} Output a vector $\hat{a}\in(\R^d)^T$, where $\hat{a}_t=\frac{1}{\hat{r}_t}\cdot\sum_{i\in[n]}y_{i,t}$.
\end{enumerate}
\end{algorithm*}}

\begin{claim}\label{claim:LDP-AVG}
\texttt{LDP-AVG} satisfies $(\eps,\delta)$-LDP. Moreover, with probability at least $(1-\beta)$, for every $t\in[T]$ s.t.\ $r_t\geq\frac{12}{\eps}\cdot\sqrt{n\cdot \log\left(\frac{4T}{\beta}\right)}$ we have that 
$$
\left\|\frac{1}{\hat{r}_t}\cdot\sum_{i\in[n]}y_{i,t}-\frac{1}{r_t}\cdot\sum_{\substack{i\in[n]:\\x_i\in R_t}}x_i\right\|\leq\frac{36\sqrt{dn}\cdot\ln(\frac{8dT}{\beta\delta})}{\eps\cdot r_t}\cdot\diam(R_t).
$$
\end{claim}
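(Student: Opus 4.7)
I would split the proof into a privacy argument and a utility argument, each following a standard template.

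For privacy, the protocol has two compositional phases: the Gaussian-noise phase where each user $i$ reports $y_i = (y_{i,1},\dots,y_{i,T})$, and the histogram phase of Theorem~\ref{thm:HH} which by its statement is $(\eps/2,0)$-LDP. For the Gaussian phase, note that exactly one slot of $y_i$ carries a real contribution $x_i + \mathcal{N}(0,\sigma_t^2 I_d)$ while the remaining $T-1$ slots are pure noise. For neighboring inputs $x_i,x_i'$ lying in the same region $R_t$, only the $t$-th slot changes and its $L_2$-sensitivity is $\diam(R_t)$, so the calibration $\sigma_t = (8\diam(R_t)/\eps)\sqrt{\ln(1.25/\delta)}$ is the standard Gaussian-mechanism bound giving $(\eps/2,\delta)$-LDP. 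Cross-region neighbors affect two slots simultaneously; to keep the sensitivity $\diam(R_t)$ per slot (rather than $\|x_i\|$) I would interpret the protocol with an implicit centering step in which each user first subtracts an anchor $c_t\in R_t$ from $x_i$ before adding noise, and the server re-adds $c_t \hat{r}_t$ in post-processing. Theorem~\ref{thm:composition3} then yields $(\eps,\delta)$-LDP overall.

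For utility, fix $t$ with $r_t \ge (12/\eps)\sqrt{n\log(4T/\beta)}$ and write $\sum_i y_{i,t} = S_t + N_t$ where $S_t = \sum_{i:x_i\in R_t} x_i$ and $N_t \sim \mathcal{N}(0,n\sigma_t^2 I_d)$. Apply Theorem~\ref{thm:HH} with parameter $\beta/(2T)$ and union-bound over the $T$ regions: with probability at least $1-\beta/2$, every $t$ satisfies $|\hat{r}_t-r_t| \le (3/\eps)\sqrt{n\log(4T/\beta)}$, and under the hypothesis on $r_t$ this forces $\hat{r}_t\in[3r_t/4,\,5r_t/4]$. A short algebraic rearrangement using the anchor $c_t$ gives the decomposition
$$\frac{\sum_i y_{i,t}}{\hat{r}_t} - \frac{S_t}{r_t} = \frac{N_t}{\hat{r}_t} + \frac{(r_t-\hat{r}_t)(\bar{x}_t - c_t)}{\hat{r}_t},$$
where $\bar{x}_t = S_t/r_t$. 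Since $\bar{x}_t,c_t\in R_t$, the factor $\|\bar{x}_t-c_t\|$ is at most $\diam(R_t)$, so the second term is bounded by $O(\diam(R_t)\cdot\sqrt{n\log(T/\beta)}/(\eps r_t))$.

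For the first term, each coordinate of $N_t$ is $\mathcal{N}(0,n\sigma_t^2)$; a standard Gaussian tail bound with a union bound over the $d$ coordinates and $T$ regions gives, with probability at least $1-\beta/2$, $\|N_t\| \le \sigma_t\sqrt{2nd\log(8dT/\beta)}$ for every $t$. Plugging in $\sigma_t = (8\diam(R_t)/\eps)\sqrt{\ln(1.25/\delta)}$ together with $\hat{r}_t\ge 3r_t/4$ bounds $\|N_t\|/\hat{r}_t$ by $O(\sqrt{nd}\,\ln(dT/(\beta\delta))\,\diam(R_t)/(\eps r_t))$, which dominates the first error contribution. Adding the two terms and tracking constants yields the claimed inequality with leading constant $36$. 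The step I expect to be the main obstacle is the privacy analysis across regions: without the anchor trick a neighboring input can alter a slot by $\|x_i\|$ rather than $\diam(R_t)$, which would break the $\sigma_t$ calibration; identifying and handling this implicit centering is what makes both the privacy bound and the utility bound come out cleanly in $\diam(R_t)$.
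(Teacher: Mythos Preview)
Your proposal is correct and follows the same overall template as the paper: privacy via the Gaussian mechanism on each slot plus the histogram protocol and composition; utility via the histogram accuracy from Theorem~\ref{thm:HH}, a Gaussian tail bound on $N_t$, and an algebraic decomposition of the estimated average minus the true average.

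The one genuine difference worth noting is the decomposition you use for the utility term. The paper writes
\[
\frac{1}{\hat{r}_t}\sum_i y_{i,t}-\frac{1}{r_t}S_t
=\Bigl(\tfrac{1}{\hat{r}_t}-\tfrac{1}{r_t}\Bigr)\sum_i y_{i,t}
+\tfrac{1}{r_t}\Bigl(\sum_i y_{i,t}-S_t\Bigr),
\]
then bounds $\|\sum_i y_{i,t}\|$ by the noise bound plus $\sum_{i:x_i\in R_t}\|x_i\|$, and finally uses $\sum_{i:x_i\in R_t}\|x_i\|\le \diam(R_t)\cdot r_t$. That last step is exactly where the implicit centering is needed (otherwise $\|x_i\|$ need not be bounded by $\diam(R_t)$), so your insistence on the anchor $c_t$ is not an embellishment but the point that makes both the paper's privacy calibration and its utility inequality legitimate. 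Your alternative decomposition
\[
\frac{N_t}{\hat{r}_t}+\frac{(r_t-\hat{r}_t)(\bar{x}_t-c_t)}{\hat{r}_t}
\]
makes this dependence on centering explicit and yields the $\diam(R_t)$ factor directly, which is a bit cleaner. One small slip: since the histogram is run with parameter $\eps/2$, the per-region error from Theorem~\ref{thm:HH} is $(6/\eps)\sqrt{n\log(4T/\beta)}$ rather than $(3/\eps)\sqrt{\cdots}$, giving $\hat{r}_t\ge r_t/2$ as in the paper rather than $\hat{r}_t\ge 3r_t/4$; this does not affect the final constant $36$.
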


\begin{proof}
The privacy properties of \texttt{LDP-AVG} follow from the privacy properties of the Gaussian mechanism and the protocol from Theorem~\ref{thm:HH}, together with composition. Observe that by Theorem~\ref{thm:HH}, with probability at least $(1-\frac{\beta}{2})$, for every $t\in[T]$ we have $|r_t-\hat{r}_t|\leq\frac{6}{\eps}\cdot\sqrt{n\cdot \log\left(\frac{4T}{\beta}\right)}$. We continue with the analysis assuming that this is the case. Fix $t\in[T]$ such that $r_t\geq\frac{12}{\eps}\cdot\sqrt{n\cdot \log\left(\frac{4T}{\beta}\right)}$, and observe that $\hat{r}_t\geq r_t/2$. Denote $\Delta_t=\diam(R_t)$. Using a standard tail bound for normal variables, with probability at least $1-\frac{\beta}{2T}$ we have that $\left\|\sum_{i\in[n]}y_{i,t}-\sum_{\substack{i\in[n]:\\x_i\in R_t}}x_i\right\|\leq\frac{12\sqrt{dn}\Lambda_t\cdot\ln(\frac{4dT}{\beta\delta})}{\eps}$. Hence,
\begin{align*}
\left\|\frac{1}{\hat{r}_t}\cdot\sum_{i\in[n]}y_{i,t}-\frac{1}{r_t}\cdot\sum_{\substack{i\in[n]:\\x_i\in R_t}}x_i\right\|
& \leq \left|\frac{1}{\hat{r}_t}-\frac{1}{r_t}\right|\cdot\left\|\sum_{i\in[n]}y_{i,t}\right\|
+\left\|\frac{1}{r_t}\cdot\left(\sum_{i\in[n]}y_{i,t}-\sum_{\substack{i\in[n]:\\x_i\in R_t}}x_i\right)\right\|
\end{align*}
\begin{align*}
&\qquad\leq \left|\frac{1}{\hat{r}_t}-\frac{1}{r_t}\right|\cdot\left(\frac{12\sqrt{dn}\Lambda_t\cdot\ln(\frac{4dT}{\beta\delta})}{\eps}+\left\|\sum_{\substack{i\in[n]:\\x_i\in R_t}}x_i\right\|\right)+\frac{12\sqrt{dn}\Lambda_t\cdot\ln(\frac{4dT}{\beta\delta})}{\eps\cdot r_t}\\
&\qquad= \left|\frac{1}{\hat{r}_t}-\frac{1}{r_t}\right|\cdot\frac{12\sqrt{dn}\Lambda_t\cdot\ln(\frac{4dT}{\beta\delta})}{\eps}+\left|\frac{1}{\hat{r}_t}-\frac{1}{r_t}\right|\cdot\left\|\sum_{\substack{i\in[n]:\\x_i\in R_t}}x_i\right\|+\frac{12\sqrt{dn}\Lambda_t\cdot\ln(\frac{4dT}{\beta\delta})}{\eps\cdot r_t}\\
&\qquad\leq \frac{1}{r_t}\cdot\frac{12\sqrt{dn}\Lambda_t\cdot\ln(\frac{4dT}{\beta\delta})}{\eps}+\frac{\left|r_t-\hat{r}_t\right|}{\left|r_t\cdot\hat{r}_t\right|}\cdot\sum_{\substack{i\in[n]:\\x_i\in R_t}}\left\|x_i\right\|+\frac{12\sqrt{dn}\Lambda_t\cdot\ln(\frac{4dT}{\beta\delta})}{\eps\cdot r_t}\\
&\qquad\leq \frac{24\sqrt{dn}\Lambda_t\cdot\ln(\frac{4dT}{\beta\delta})}{\eps\cdot r_t}+\frac{\left|r_t-\hat{r}_t\right|}{\left|r_t\cdot\hat{r}_t\right|}\cdot \Lambda_t\cdot r_t\\
&\qquad\leq \frac{24\sqrt{dn}\Lambda_t\cdot\ln(\frac{4dT}{\beta\delta})}{\eps\cdot r_t}+\frac{12\Lambda_t\cdot\sqrt{n\cdot \log\left(\frac{8T}{\beta}\right)}}{\eps\cdot r_t}\\
&\qquad\leq \frac{36\sqrt{dn}\Lambda_t\cdot\ln(\frac{8dT}{\beta\delta})}{\eps\cdot r_t}.
\end{align*}
The claim now follows from a union bound.
\end{proof}

\subsubsection{Random Rotation}

We also use the following technical lemma to argue that if a set of points $P$ is contained within a ball of radius $r$ in $\R^d$, then by randomly rotating the Euclidean space we get that (w.h.p.) $P$ is contained within an axis-aligned rectangle with side-length $\approx r/\sqrt{d}$.

\begin{mylemma}[e.g.,~\cite{VaziraniRao}]\label{lem:RandomRotation}
Let $P \in (\R^d)^m$ be a set of $m$ points in the $d$ dimensional Euclidean space, and let $Z=(z_1,\ldots,z_d)$ be a random orthonormal basis for $\R^d$. Then,
$$\Pr_Z \left[\forall x,y \in P:\; \forall 1\leq i\leq d: \; \left| \langle x-y,  z_i\rangle\right| \leq 2\sqrt{\ln(dm/\beta)/d}\cdot \|x-y\|\right]\geq1-\beta.$$
\end{mylemma}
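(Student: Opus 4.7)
\textbf{Proof proposal for Lemma~\ref{lem:RandomRotation}.}

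The plan is to reduce the claim to a tail bound on a single coordinate of a uniformly random unit vector, then take a union bound over all pairs in $P$ and all coordinate indices. First I would fix a pair $x,y \in P$ with $v := x-y \neq 0$ and an index $i \in [d]$, and observe that by the rotational invariance of the uniform distribution on orthonormal bases, the normalized inner product $\langle v, z_i\rangle / \|v\|$ has the same distribution as $\langle e_1, u\rangle = u_1$ where $u$ is uniform on the sphere $S^{d-1}$. (Informally: we may rotate $\R^d$ so that $v/\|v\| = e_1$; by invariance of the Haar measure on orthonormal bases, the marginal distribution of any fixed $z_i$ is uniform on $S^{d-1}$.)

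Next I would establish the standard concentration bound
\[
\Pr_{u \sim S^{d-1}}\bigl[\,|u_1| > t\,\bigr] \;\leq\; 2\exp\!\left(-\tfrac{d t^2}{2}\right)
\]
for all $t \in [0,1]$. The cleanest route is to write $u = g/\|g\|$ with $g \sim N(0,I_d)$, so that $|u_1| = |g_1|/\|g\|$, and combine the Gaussian tail bound $\Pr[|g_1| > s] \leq 2e^{-s^2/2}$ with the chi-squared concentration $\Pr[\|g\|^2 < d/2] \leq e^{-c d}$; alternatively one can invoke the classical spherical-cap area estimate directly. The precise constant in the exponent is the only delicate point, but it is a well-known fact and appears as Lemma~2.2 in many references on Johnson--Lindenstrauss-type arguments.

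Finally I would substitute $t = 2\sqrt{\ln(dm/\beta)/d}$, which gives $\tfrac{d t^2}{2} = 2\ln(dm/\beta)$ and hence a single-event failure probability of at most $2(dm/\beta)^{-2} = 2\beta^2/(dm)^2$. A union bound over all $m^2$ ordered pairs $(x,y) \in P\times P$ (the case $x=y$ being trivial) and all $d$ coordinate indices $i$ yields a total failure probability at most
\[
d m^2 \cdot \frac{2\beta^2}{(dm)^2} \;=\; \frac{2\beta^2}{d} \;\leq\; \beta,
\]
as required (for any $\beta \leq d/2$, which is automatic). The main obstacle is simply the tail estimate for $|u_1|$ in the middle step; once that is in hand, the rotational reduction and union bound are immediate.
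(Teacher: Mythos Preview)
The paper does not actually prove Lemma~\ref{lem:RandomRotation}; it is stated as a known fact with a citation to lecture notes and used as a black box in the analysis of \texttt{CentersProcedure}. Your proposed proof is the standard argument for this type of statement and is correct: reduce by rotational invariance to the tail of a single coordinate of a uniform point on $S^{d-1}$, invoke the spherical concentration bound $\Pr[|u_1|>t]\leq 2e^{-\Theta(d)t^2}$, and union-bound over the $O(m^2 d)$ events. The factor $2$ in the threshold $2\sqrt{\ln(dm/\beta)/d}$ gives enough slack that the precise constant in the exponent of the cap bound (whether $d/2$, $(d-1)/2$, or $(d-2)/2$, depending on the source) does not matter. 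So there is nothing in the paper to compare against, and your argument is exactly the expected one.
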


\subsubsection{Locality Sensitive Hashing}

A locality sensitive hash function aims to maximize the probability of a collision for similar items, while minimizing the probability of collision for dissimilar items. Formally,

\begin{mydefinition}[\cite{OriginalLSH}]\label{def:lsh}
Let $\cal M$ be a metric space, and let $r{>}0$, $c{>}1$, $0{\leq}q{<}p{\leq}1$.
A family $\HHH$ of functions mapping $\cal M$ into domain $U$ is an $(r,cr,p,q)$ locality sensitive hashing family (LSH) if for all $x,y\in{\cal M}$ (i)
$\Pr_{h\in_R \HHH}[h(x)=h(y)] \geq p$ if $d_{\cal M}(x,y) \leq r$; and (ii) $\Pr_{h\in_R \HHH}[h(x)=h(y)] \leq q$ if $d_{\cal M}(x,y) \geq cr$.
\end{mydefinition}

\subsection{Algorithm \texttt{CentersProcedure}}

Before presenting algorithm \texttt{GoodCenters} and its analysis, we introduce the following procedure, called \texttt{CentersProcedure}, which is the main ingredient in the construction of algorithm \texttt{GoodCenters}. 
This procedure identifies a set of candidate centers that, with noticeable probability, ``captures'' every large enough cluster of input points. In algorithm \texttt{GoodCenters}, we will later apply \texttt{CentersProcedure} multiple times to boost the success probability. The privacy properties of \texttt{CentersProcedure} are immediate (follow from composition), and are specified in the following observation.

In more detail, algorithm \texttt{CentersProcedure} first samples a locality sensitive hash function $h$, identifies (using standard LDP tools for histograms) a list of hash values that are ``heavy'' in the sense that many of the users' input are mapped to these values. Then, for every such heavy hash value, the algorithm averages (with noise to ensure LDP) all the input points that are mapped (by $h$) to this hash value. In order to reduce the noise incurred by averaging, before applying the LDP averaging tool, the algorithm encloses each cluster of input points that corresponds to a heavy hash value in a small box (with random rotation).

\begin{algorithm*}[!t]

\caption{\texttt{CentersProcedure}}\label{alg:CentersProcedure}

{\bf Input:} Radius $r$, target number of points $t$, failure probability $\beta$, privacy parameter $\eps$.

\smallskip
\noindent {\bf Tool used:} Family $\HHH$ of $(r,c\cdot r,p,q)$-locality sensitive hash functions mapping $\R^d$ to a universe $U$.

\smallskip
\noindent {\bf Setting: }Each player $j\in[n]$ holds a value $x_j\in \BBB(0,\Lambda)$. Define $S=(x_1,\dots,x_n)$.

\begin{enumerate}[leftmargin=15pt,rightmargin=10pt,itemsep=1pt,topsep=1.5pt]

\item Sample a hash function $h\in\HHH$ mapping $\R^d$ to $U$.

\item Use Theorem~\ref{thm:HH} with $\frac{\eps}{4}$ to identify a list $L\subseteq U$ such that
\begin{enumerate}[leftmargin=35pt,rightmargin=10pt,itemsep=1pt,topsep=1.5pt]
	\item Every $u\in U$ s.t.\ $|\{x\in S : h(x)=u\}|\geq\frac{t}{16}\cdot n^{-b}$ is in $L$,
	\item For every $u\in L$ we have $|\{x\in S : h(x)=u\}|\geq\frac{t}{32}\cdot n^{-b}$,
	\item The list $L$ is of size at most $32 n^{1+b}/t$.
\end{enumerate}

\item\label{step:rotation} Let $Z=(z_1,\dots,z_d)$ be a random orthonormal basis of $\R^d$, and denote $p=2rc \sqrt{\ln(\frac{dn}{\beta})/d}$. Also let $\III=\{I_1,I_2,\dots, I_{4\Lambda/p}\}$ be a partition of $[-2\Lambda,2\Lambda]$ into intervals of length $p$.

\item Randomly partition $S$ into subsets $S^1,\dots,S^d$ of size $|S^i|=\frac{n}{d}$. For every basis vector $z_i\in Z$,
use Theorem~\ref{thm:HH} with $\frac{\eps}{4}$ to obtain for every pair $(I,u)\in\III{\times}U$ an estimation $a_i(I,u)$ for 
$$|\{x\in S^i :  h(x)=u \text{ and } \langle x,z_i\rangle\in I\}|.$$

\item For every basis vector $z_i\in Z$ and for every hash value $u\in L$, denote $I(i,u)=\argmax_{I\in\III}\{ a_i(I,u) \}$, and define the interval $\hat{I}(i,u)$ by extending $I(i,u)$ by $p$ to each direction (that is, $\hat{I}(i,u)$ is of length $3p$).

\item For every hash value $u\in L$, let $B(u)$ denote the box in $\R^{d}$ whose projection on every axis $z_i\in Z$ is $\hat{I}(i,u)$. 

\item Use algorithm \texttt{LDP-AVG} to obtain, for every $u\in L$, an approximation $\hat{y}_{u}$ for the average of 
$
\left\{ x\in S : h(x)=u \text{ and }  x\in B(u) \right\}.
$

\item Use Theorem~\ref{thm:HH} with $\frac{\eps}{4}$ to identify for every $u\in L$ an estimation $\hat{v}(u)\approx v(u)\triangleq |\{x\in S : h(x)=u \text{ and } \|x-\hat{y}_u\|\leq5cr\}|$. Delete from $L$ every element $u\in L$ such that $\hat{v}(u)\leq \frac{t}{4}\cdot n^{-b}$.

\item Output the set of centers $Y=\{ \hat{y}_u : u\in L \}$, the list $L$, and the hash function $h$.

\end{enumerate}
\end{algorithm*}

\begin{observation}
Algorithm \texttt{CentersProcedure} satisfies $\eps$-LDP.
\end{observation}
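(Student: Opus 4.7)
The proof is a routine composition argument, so the plan is to itemize the privacy cost of each step of \texttt{CentersProcedure} and sum them, then apply sequential composition (Theorem~\ref{thm:composition3}) together with post-processing. First I would observe that Steps~1 and~3 draw only public randomness: the locality sensitive hash function $h$ from $\HHH$, the random orthonormal basis $Z$, and the partition $\III$ of $[-2\Lambda,2\Lambda]$ are all sampled independently of the users' inputs. Consequently these steps contribute nothing to the privacy budget. Steps~5 and~6, which compute the arg-max intervals $I(i,u)$, extend them to $\hat{I}(i,u)$, and then assemble the boxes $B(u)$, operate purely on the outputs of Steps~2 and~4 and therefore are free by post-processing.

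Next I would account for the four steps that actually touch private user data, namely Steps~2, 4, 7, and~8. Step~2 invokes the histogram protocol of Theorem~\ref{thm:HH} with privacy parameter $\eps/4$, and Step~8 does the same to obtain the counts $\hat{v}(u)$; these each contribute $\eps/4$. Step~7 invokes \texttt{LDP-AVG} with parameter $\eps/4$ (using Claim~\ref{claim:LDP-AVG}), contributing another $\eps/4$.

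The subtler accounting is for Step~4, which superficially looks like $d$ parallel histograms (one per basis vector $z_i\in Z$) and thus threatens to cost $d\cdot\eps/4$. The point I would emphasize is that the algorithm explicitly randomly partitions $S$ into disjoint subsets $S^1,\dots,S^d$ \emph{before} running the $i$th histogram on $S^i$. Since each user's input appears in exactly one $S^i$, by the standard parallel-composition argument each user's view is affected only by the single $\eps/4$-LDP histogram run on the block containing her, so the per-user privacy cost of Step~4 is $\eps/4$ (not $d\cdot\eps/4$). This is the one spot in the proof where one must be careful; everything else is mechanical.

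Summing the contributions of Steps~2, 4, 7, and~8 via Theorem~\ref{thm:composition3} yields a total per-user guarantee of $4\cdot(\eps/4)=\eps$, which gives the claimed $\eps$-LDP property (one similarly tracks the $\delta$ contributed by \texttt{LDP-AVG} if one wants to be precise about the approximate-DP parameter). The main obstacle, such as it is, is recognizing the parallel-composition structure of Step~4; everything else follows immediately from the privacy guarantees of the invoked subroutines and from post-processing.
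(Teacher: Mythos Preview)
Your proposal is correct and takes exactly the approach the paper indicates: the paper simply states that the privacy properties ``are immediate (follow from composition)'' without spelling out the per-step accounting, and your argument fills in precisely those details (including the parallel-composition observation for Step~4 and the remark about the $\delta$ coming from \texttt{LDP-AVG}).
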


We now proceed with the utility analysis of algorithm \texttt{CentersProcedure}. We will assume the existence of a family $\HHH$ of $(r,cr,p \smallequality n^{\smallminus b},q \smallequality n^{ \smallminus 2 \smallminus a})$-sensitive hash functions mapping $\R^d$ to a universe $U$, for some constants $a>b$, $r>0$, and $c>1$.

\begin{mylemma}\label{lem:CenterProcedureUtility}
 Let $\beta,\eps,\delta,n,d,\Lambda,r$ be such that $\Lambda/r\leq\poly(n)$ and 
$t\geq O\left( \frac{n^{0.5+b}\cdot \sqrt{d}}{\eps} \log\left(\frac{dn}{\beta\delta}\right) \right)$ and $\beta\leq n^{-a}/28$.
Let $S=(x_1,\dots,x_n)$ be a distributed database where every $x_i$ is a point in the $d$-dimensional ball $\BBB(0,\Lambda)$, and let {\rm \texttt{CentersProcedure}} be executed on $S$ with the family $\HHH$ and with parameters $r,t,\beta,\eps,\delta$. The algorithm outputs a list of hash values $L$, a hash function $h$, and a set $Y$ containing a center $\hat{y}_u$ for every $u\in L$, such that
\begin{enumerate}
	\item The list $L$ and the set $Y$ are size at most $\frac{32 \cdot n^{1+b}}{t}$ each.
	\item With probability at least $1-\beta$, for every $u\in L$ we have $$|\{x\in S: h(x)=u \text{ and } \|x-\hat{y}_u\|\leq5cr\}|\geq\frac{t}{8}\cdot n^{-b}.$$
	\item Let $P\subseteq S$ be a set of $t$ points which can be enclosed in a ball of radius $r$. With probability at least $n^{-a}/4$ there exists $u^*\in L$ such that the ball of radius $3cr$ around $\hat{y}_{u^*}\in Y$ contains at least one point from $P$. 
\end{enumerate}
\end{mylemma}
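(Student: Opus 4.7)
\emph{Items 1 and 2.} Item~1 is immediate: Step~2(c) of \texttt{CentersProcedure} explicitly caps $|L| \le 32 n^{1+b}/t$, and $Y$ contains one center per $u \in L$. For Item~2, every $u$ surviving Step~8's filter has $\hat{v}(u) > \frac{t}{4}n^{-b}$; Theorem~\ref{thm:HH}, union-bounded over $|L|$, gives $|\hat{v}(u)-v(u)| \le O\!\left(\sqrt{n\log(n/\beta)}/\eps\right)$ with probability $\ge 1-\beta$, and the hypothesis on $t$ makes this additive error much smaller than $\frac{t}{8}n^{-b}$, so $v(u) \ge \frac{t}{8}n^{-b}$.

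\emph{Item~3, Stage A (a heavy hash value exists).} Fix any $x^* \in P$ and let $u^* := h(x^*)$. The LSH collision guarantee gives $\E\!\left[|P \cap h^{-1}(u^*)|\right] \ge 1 + (t-1)p$. Applying Markov to the nonnegative random variable $t - |P \cap h^{-1}(u^*)|$ yields
$$
\Pr\!\left[|P \cap h^{-1}(u^*)| \ge tp/2\right] \ge p/2.
$$
The LSH $q$-separation bound with Markov over the $\le n$ points in $S$ gives
$$
\Pr\!\left[\exists\, y\in S:\; \|y-x^*\|\ge cr \text{ and } h(y)=u^*\right] \le nq = n^{-1-a}.
$$
Hence with probability $\ge p/2 - nq \ge n^{-a}/2$ the event $E_1$ holds: $|P \cap h^{-1}(u^*)| \ge tp/2$ and $S \cap h^{-1}(u^*) \subseteq \BBB(x^*, cr)$.

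\emph{Item~3, Stages B and C (box capture and averaging).} Condition on $E_1$ and on Lemma~\ref{lem:RandomRotation} holding for the $\le n$ points in $\BBB(x^*, cr)$ (probability $\ge 1-\beta$). Then every pair $x,y$ in this ball satisfies $|\langle x-y, z_i\rangle| \le 4cr\sqrt{\ln(dn/\beta)/d} = 2p$ along each axis, so the $z_i$-coordinates of $S \cap h^{-1}(u^*)$ span at most $2p$ and occupy at most three consecutive intervals of $\III$. A Chernoff bound on the random partition $S \to S^1,\dots,S^d$ shows each $S^i$ receives $\Omega(tp/d)$ of these points; Theorem~\ref{thm:HH} applied to $S^i$ with additive error $O\!\left(\sqrt{(n/d)\log(n/\beta)}/\eps\right) \ll tp/d$ (again by the hypothesis on $t$) ensures Step~4's histogram correctly maximizes over intervals intersecting the span, and extending to $\hat{I}(i,u^*)$ of length $3p$ covers the full span, so $B(u^*) \supseteq S \cap h^{-1}(u^*)$. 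Step~7 therefore averages exactly $S \cap h^{-1}(u^*) \subseteq \BBB(x^*, cr)$; the true average lies in the same ball, and Claim~\ref{claim:LDP-AVG} bounds the LDP-AVG noise by $O\!\left(\diam(B(u^*))\sqrt{nd}\log/(\eps\cdot v(u^*))\right) = o(r)$ using $\diam(B(u^*))=O(p\sqrt{d})$ and $v(u^*)\ge tp/2$. Thus $\hat{y}_{u^*} \in \BBB(x^*, 3cr)$, so $x^* \in P$ is within $3cr$ of $\hat{y}_{u^*}$.

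\emph{Wrap-up and main obstacle.} Finally $u^* \in L$: it passes Step~2's threshold since $|S \cap h^{-1}(u^*)| \ge tp/2 \ge \frac{t}{16}n^{-b}$, and Step~8's threshold since $v(u^*) \ge \frac{t}{4}n^{-b}$. A union bound over the remaining failure modes gives overall success probability $\ge n^{-a}/2 - O(\beta) \ge n^{-a}/4$, using $\beta \le n^{-a}/28$. The delicate step I expect to wrestle with is Stage~B: the rotation bound only gives spread $\le 2p$, so if the most popular interval sits at an extreme end of the span then extending by $p$ on each side could miss the opposite end. I expect to resolve this either by tightening the rotation bound (applying Lemma~\ref{lem:RandomRotation} to the $P$-cluster of diameter $2r$, which gives spread $\le p$, and handling the $O(1)$ expected non-$P$ points in the ball separately) or by showing that any missed points carry a negligible fraction of the weight and shift the LDP average by at most $O(r)$; either way, the remaining steps are routine bookkeeping.
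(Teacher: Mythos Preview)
Your argument tracks the paper's proof closely; the only substantive divergence is the separation event in Stage~A, and that is precisely the source of the obstacle you flag. You union-bound only over collisions with the single anchor $x^*$, obtaining $S\cap h^{-1}(u^*)\subseteq\BBB(x^*,cr)$, hence diameter $\le 2cr$ and axis-spread $\le 2p$ after rotation. The paper instead takes the \emph{pairwise} event $E_1$: for all $x,y\in S$ with $\|x-y\|\ge cr$ one has $h(x)\ne h(y)$, paid for by a union bound over $\binom{n}{2}$ pairs at cost $n^2 q\le n^{-a}$. Under $E_1$ the set $S_{u^*}$ has diameter $<cr$, so its projection onto each rotated axis spans $<p$ and lies in at most two consecutive intervals of $\III$; the argmax interval then carries at least half the points, and extending by $p$ on each side covers both intervals and hence all of $S_{u^*}$. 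The box $B(u^*)$ therefore contains $S_{u^*}$ outright, and your Stages~B--C go through verbatim.

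Your two proposed workarounds are shakier than you suggest. In the first, the ``$O(1)$ expected non-$P$ points in the ball'' has no justification: $S_{u^*}\setminus P$ consists of input points that happen to lie in $\BBB(x^*,cr)$ and hash to $u^*$, and nothing in the setup bounds their number---there could be $\Theta(n)$ of them. The second requires a lower bound on $|S_{u^*}\cap B(u^*)|$, but with up to three candidate intervals per axis and $d$ independent argmax choices, controlling the intersection over all axes is not mere bookkeeping. Strengthening to the pairwise event costs only an extra factor of $n$ in the union bound; since your Markov step already gives the heavy-bucket event with probability $\ge p/2=n^{-b}/2\gg n^{-a}$, the combined probability still exceeds $n^{-a}/2$, and the obstacle disappears with no further work.
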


\begin{remark}
Lemma~\ref{lem:CenterProcedureUtility} can be interpreted as follows. Item~3 states that, with noticeable probability, the set of candidate $Y$ ``captures'' every large enough cluster $P$ of input points that can be enclosed in a ball of radius $r$, in the sense that $Y$ contains a candidate center that is close to this cluster. Item~1 states that the number of candidate centers (i.e., the size of $Y$) is not too big. Item~2 states that every candidate center $y\in Y$ corresponds to a hash value $u\in L$ such that there are ``a lot'' of input points that are hashed to $u$ and are ``close'' to $y$.
\end{remark}

\begin{proof}
Items~1 and~2 of the lemma follow from the fact that in Step~8 we delete from the list $L$ every element $u$ that does not satisfy the condition of item~2. Specifically, for $t\geq O\left(\frac{1}{\eps}n^{0.5+b}\sqrt{\log(\frac{1}{\beta})}\right)$, our estimations in Step~8 are accurate enough such that item~2 holds with probability at least $1-\beta$, in which case the list $L$ is short (a longer list can be trimmed). We now proceed with the analysis of item~3.

First observe that, w.l.o.g., we can assume that the range $U$ of every function in $\HHH$ is of size $|U|\leq n^3$. If this is not the case, then we can simply apply a (pairwise independent) hash function with range $n^3$ onto the output of the locally sensitive hash function. Clearly, this will not decrease the probability of collusion for ``close'' elements (within distance $r$), and moreover, this can increase the probability of collusion for ``non-close'' elements (at distance at least $cr$) by at most $n^{-3}=o(n^{-2-a})=o(q)$.

Now recall that by the properties of the family $\HHH$, for every $x,y\in\R^d$ s.t.\ $\|x-y\|\geq cr$ we have that $\Pr_{h\in\HHH}[h(x)=h(y)]\leq q=n^{-2-a}$. Using the union bound we get 
$$\Pr_{h\in_R\HHH}[h(x)\neq h(y)~\mbox{for all}~x,y\in S~\mbox{s.t.}~\|x-y\|\geq cr] \geq (1-n^{-a}/2).$$ 

Let $P\subseteq S$ denote the guaranteed set of $t$ input points that are contained in a ball of radius $r$, and let $x\in P$ be an arbitrary point in $P$. By linearity of expectation, we have that
$$
\E_{h\in\HHH}\left[ |\{ y\in P : h(y)\neq h(x) \}| \right] \leq t(1-p)=t(1-n^{-b}).
$$
Hence, by Markov's inequality,
$$
\Pr_{h\in\HHH}\left[ |\{ y\in P : h(y)\neq h(x) \}| \geq \frac{t(1-n^{-b})}{1-n^{-a}} \right] \leq 1-n^{-a}.
$$
So,
$$
\Pr_{h\in\HHH}\left[ |\{ y\in P : h(y)= h(x) \}| \geq t\left(1-\frac{1-n^{-b}}{1-n^{-a}}\right) \right] \geq n^{-a}.
$$
Simplifying, for large enough $n$ (specifically, for $n^{a-b}\geq 2 $) we get 
$$
\Pr_{h\in\HHH}\left[ |\{ y\in P : h(y)= h(x) \}| \geq \frac{t}{2}\cdot n^{-b} \right] \geq n^{-a}.
$$

So far we have established that with probability at least $n^{-a}/2$ over the choice of  $h\in\HHH$ in Step~1 the following events occur:
\begin{enumerate}
	\item[$(E_1)$] For every $x,y\in S$ s.t.\ $\|x-y\|\geq cr$ it holds that $h(x)\neq h(y)$; and,
	\item[$(E_2)$] There exists a hash value in $U$, denoted $u^*$, such that $|\{ y\in P : h(y)= u^* \}| \geq \frac{t}{2}\cdot n^{-b}$.
\end{enumerate}

Event $(E_1)$ states that if two points in $S$ are mapped into the same hash value, then these points are close.
Event $(E_2)$ states that there is a ``heavy'' hash value $u^*\in U$, such that ``many'' of the points in $P$ are mapped into $u^*$.
We proceed with the analysis assuming that these two events occur.

On step~2, we identify a list $L$ containing all such ``heavy'' hash values $u\in U$. Assuming that $t\geq O\left(\frac{1}{\eps}\cdot n^{0.5+b}\cdot\sqrt{\log(n/\beta)}\right)$, Theorem~\ref{thm:HH} ensures that with probability at least $1-\beta$ we have that $u^*\in L$. We continue with the analysis assuming that this is the case.

On Step~3 we generate a random orthonormal basis $Z$. By Lemma~\ref{lem:RandomRotation}, with probability at least $(1-\beta)$, for every $x,y\in S$ and for every $z_i\in Z$, we have that the projection of $(x-y)$ onto $z_i$ is of length at most $2\sqrt{\ln(dn/\beta)/d}\cdot\|x-y\|$. In particular, for every hash value $u\in L$ we have that the projection of $S_u\triangleq\{ x\in S \;:\; h(x)= u \}$ onto every axis $z_i\in Z$ fits within an interval of length at most $p=2rc\sqrt{\ln(dn/\beta)/d}$.
Recall that we assume that input point come from $\BBB(0,\Lambda)$. Hence, for every $x,y\in S$ we have $(x-y)\in\BBB(0,2\Lambda)$. Now, as $\III=\{I_1,I_2,\dots\}$ is a partition of $[-2\Lambda,2\Lambda]$ into intervals of length $p$, for every axis $z_i\in Z$ and for every $u\in U$, we have that the projection of $S_u$ onto $z_i$ is contained within 1 or 2 consecutive intervals from $\III$.

On step~4 we partition $S$ into $d$ subsets $S^i\subseteq S$ of size $\frac{n}{d}$. 
By the Hoeffding bound, assuming that $t\geq2\cdot n^{0.5+b}\cdot\sqrt{2d\ln(\frac{2d}{\beta})}$, with probability at least $1-\beta$, for every $i\in[d]$, we have that
$|S^i\cap S_{u^*}|\geq\frac{|S_{u^*}|}{2d}\geq\frac{t\cdot n^{-b}}{4d}$. Recall that the projection of $S_{u^*}$ onto every axis $z_i\in Z$ fits within (at most) 2 consecutive intervals from $\III$. Hence, for every axis $z_i\in Z$, at least 1 interval from $\III$ contains at least half of the points from $S^i\cap S_{u^*}$, i.e., at least $\frac{t\cdot n^{-b}}{8d}$ points. 
Therefore, for $t\geq O\left(\frac{1}{\eps}\cdot n^{0.5+b}\cdot\sqrt{d\cdot\log(\frac{dn}{\beta})}\right)$, Theorem~\ref{thm:HH} ensures that with probability at least $1-\beta$, for every $z_i\in Z$ we have that $I(i,u^*)=\argmax_{i\in\III}\{a_i(I,u^*)\}$ (defined on step~5) contains at least one point from $S_{u^*}$.\footnote{The constraint on $t$ in Theorem~\ref{thm:HH} depends logarithmically on the number of possible bins. In our case, there are $4\Lambda/p\leq\Lambda\sqrt{d}/r\leq \poly(n\sqrt{d})$ possible bins, where the last inequality is because we assumed that $\Lambda/r\leq\poly(n)$.} Hence, the interval $\hat{I}(i,u^*)$ obtained by extending $I(i,u^*)$ by $p$ to each direction, contains (the projection of) {\em all} of the points from $S_{u^*}$ (onto the $i^{\text{th}}$ axis). As a result, the box $B(u^*)$, defined on step~7 as the box whose projection onto every axis $i$ is $\hat{I}(i,u^*)$, contains {\em all} of $S_{u^*}$. We continue with the analysis assuming that this is the case. Observe that the diameter of $B(u^*)$, as well as the diameter of every other box $B(u)$ defined on step 6, is at most $3p\sqrt{d}=6cr \sqrt{\ln(dn/\beta)}=\tilde{O}(cr)$.

On step~7 we use algorithm \texttt{LDP-AVG} to obtain, for every $u\in L$, an estimation $\hat{y}_u$ for the average of 
$\left\{ x\in S : h(x)=u \text{ and }  x\in B(u) \right\}$. Let us denote the true average of every such set as $y_u$. By the properties of \texttt{LDP-AVG} (Claim~\ref{claim:LDP-AVG}), assuming that $t\geq O\left( \frac{1}{\eps}\cdot\sqrt{dn}\cdot \log\left(\frac{dn}{\beta\delta}\right)\right)$, with probability at least $1-\beta$ we have that $\|y_{u^*}-\hat{y}_{u^*}\|_2\leq cr$. We continue with the analysis assuming that this is the case.

Observe that $y_{u^*}$ is the average of (some of) the points in $S_{u^*}$, and that every two points in $S_{u^*}$ are within distance $cr$ from each other. Hence, we get that a ball of radius $2cr$ around $y_{u^*}$ contains all of $S_{u^*}$. 
In particular, as $S_{u^*}$ contains at least some of the points from $P$ (the guaranteed cluster radius $r$ with $t$ input points from $S$), we have that the ball of radius $2cr$ around $y_{u^*}$ contains at least 1 point from $P$, and that the ball of radius $4cr$ around $y_{u^*}$ contains {\em all} of $P$. Therefore, as $\|y_{u^*}-\hat{y}_{u^*}\|_2\leq cr$ we get that a ball of radius $3cr$ around $\hat{y}_{u^*}$ contains at least one point from $P$, and that the ball of radius $5cr$ around $\hat{y}_{u^*}$ contains all of $P$.

Recall that, by Event $(E_2)$, there are at least $\frac{t}{2}\cdot n^{-b}$ input points $x\in P$ such that $h(x)=u^*$. Therefore, in Step~8 we have that $v(u^*)\geq\frac{t}{2}\cdot n^{-b}$. Therefore, with probability at least $1-\beta$ we also have that $\hat{v}(u^*)\geq\frac{t}{2}\cdot n^{-b}$, because when $t\geq O\left(\frac{1}{\eps}n^{0.5+b}\sqrt{\log(\frac{1}{\beta})}\right)$ then the error $|v(u^*)-\hat{v}(u^*)|$ is small compared to $v(u^*)$. This means that $u^*$ is not deleted from the list $L$ in Step~8.

Overall, with probability at least $\frac{n^{-a}}{2}-7\beta$ we have that the output set $Y$ (from Step~9) contains at least one vector $\hat{y}_{u^*}$ s.t.\ the ball of radius $3cr$ around $\hat{y}$ contains at least one point from $P$.
\end{proof}

\subsection{Algorithm \texttt{GoodCenters}}

We are now ready to present algorithm \texttt{GoodCenters}, and to prove Theorem~\ref{thm:GoodCenters} (restated here as Theorem~\ref{thm:GoodCentersFull}). As we mentioned, algorithm \texttt{GoodCenters} is obtained by applying algorithm \texttt{CentersProcedure} multiple times to boost its success probability.

\begin{algorithm*}[!htp]

\caption{\texttt{GoodCenters}}\label{alg:GoodCenters}

{\bf Input:} Radius $r$, target number of points $t$, failure probability $\beta$, privacy parameters $\eps,\delta$.

\smallskip
\noindent {\bf Optional input: }Parameter $t$. Otherwise set $t=O\left( \frac{1}{\eps}\cdot n^{0.5+a+b}\cdot \sqrt{d}\cdot\log(\frac{1}{\beta}) \log\left(\frac{dn}{\beta\delta}\right) \right).$ 

\smallskip
\noindent {\bf Setting: }Each player $j\in[n]$ holds a value $x_j\in \BBB(0,\Lambda)$. Define $S=(x_1,\dots,x_n)$.

\begin{enumerate}[leftmargin=15pt,rightmargin=10pt,itemsep=1pt,topsep=1.5pt]

\item Denote $M=4n^a \ln(\frac{1}{\beta})$ and randomly partition $[n]$ into $M$ subsets $I_1,\dots,I_m\subseteq[n]$. For $m\in[M]$ define $S_m=\{x_i\in S: i\in I_m\}$.

\item For $m\in[M]$ apply algorithm \texttt{CentersProcedure} on $S_m$ with the following parameters: radius $r$, failure probability $\hat{\beta}=\frac{\beta}{7M}$, privacy parameters $\eps,\delta$, and target number of points $\hat{t}=\frac{t}{2M}$. For every $m\in[M]$ denote the outcomes as $Y_m$, $L_m$, and $h_m$.

\item Return the sets of centers $Y_1,\dots,Y_M$, the lists $L_1,\dots,L_M$, the hash functions $h_1,\dots,h_M$, and the partition $I_1,\dots,I_M$.

\end{enumerate}
\end{algorithm*}

\begin{mytheorem}\label{thm:GoodCentersFull}
 Let $\beta,\eps,\delta,n,d,\Lambda,r$ be such that $t\geq O\left( \frac{n^{0.5+a+b}\cdot \sqrt{d}}{\eps}\log(\frac{1}{\beta}) \log\left(\frac{dn}{\beta\delta}\right) \right)$ and such that $\Lambda/r\leq\poly(n)$. 
Let $S=(x_1,\dots,x_n)$ be a distributed database where every $x_i$ is a point in the $d$-dimensional ball $\BBB(0,\Lambda)$, and let {\rm \texttt{GoodCenters}} be executed on $S$ with parameters $r,t,\beta,\eps,\delta$. Denote $M=4n^a \ln(\frac{1}{\beta})$. 
The algorithm outputs a partition $I_1,\dots,I_M\subseteq[n]$, lists $L_1,\dots,L_M$, hash functions $h_1,\dots,h_M$, and sets of centers $Y_1,\dots,Y_M$, where for every $m\in[M]$ and $u\in L_m$, the set $Y_m$ contains a center $\hat{y}_{m,u}$. The following holds.
\begin{enumerate}
	\item With probability at least $1-\beta$, for every $m\in[M]$ and $u\in L_m$ we have $|\{i\in I_m: h_m(x_i)=u \text{ and } \|x_i-\hat{y}_{m,u}\|\leq5cr\}|\geq\frac{t}{16M}\cdot n^{-b}$.
	\item Denote $Y=\bigcup_{i\in[M]}Y_m$. Then $|Y|\leq\frac{512 \cdot n^{1+a+b}}{t}\ln(\frac{1}{\beta})$.
	\item Let $P\subseteq S$ be a set of $t$ points which can be enclosed in a ball of radius $r$. With probability at least $1-\beta$ there exists $\hat{y}\in Y$ such that the ball of radius $5cr$ around $\hat{y}$ contains all of $P$. 
\end{enumerate}
\end{mytheorem}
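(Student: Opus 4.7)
The plan is to view \texttt{GoodCenters} as a straightforward probability-amplification wrapper around \texttt{CentersProcedure}: Lemma~\ref{lem:CenterProcedureUtility} only captures a guaranteed cluster $P$ with probability $\approx n^{-a}$ per call, so I would run the procedure $M=4n^a\ln(1/\beta)$ times in parallel on a random disjoint partition of the users and rely on independence across the $M$ trials to drive the failure probability down to $\beta$. Privacy is the easiest piece: since the partition $I_1,\dots,I_M$ makes each user participate in exactly one call to \texttt{CentersProcedure}, parallel composition gives that \texttt{GoodCenters} inherits the $(\eps,\delta)$-LDP guarantee of a single call.

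For item~1, I would apply item~2 of Lemma~\ref{lem:CenterProcedureUtility} with parameter $\hat{t}=t/(2M)$ and failure probability $\hat{\beta}=\beta/(7M)$, so each execution guarantees the stated lower bound $\hat{t}/8\cdot n^{-b} = t/(16M)\cdot n^{-b}$ on the number of co-hashed nearby points, and a union bound over the $M$ calls gives overall failure probability $\leq \beta/7$. For item~2, I would simply sum the size bounds $|Y_m|\leq 32\,n^{1+b}/\hat{t}$ supplied by item~1 of the lemma: the sum over $M$ executions, combined with $\hat{t}=t/(2M)$ and $M=4n^a\ln(1/\beta)$, gives $|Y|\leq O\!\bigl(n^{1+a+b}\ln(1/\beta)/t\bigr)$, matching the claim (up to adjusting the arbitrarily small constants $a,b$ to absorb a second logarithmic factor if necessary).

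The main content is item~3, where the amplification happens. Fix a cluster $P\subseteq S$ of $t$ points inside some ball of radius $r$. I would first argue that the random partition spreads $P$ evenly across the $M$ subsets: each point of $P$ lands in $S_m$ independently with probability $1/M$, so $\E[|P\cap S_m|]=t/M$, and by a Hoeffding bound for sampling without replacement together with a union bound, with probability at least $1-\beta/7$ every $S_m$ satisfies $|P\cap S_m|\geq t/(2M)=\hat{t}$. Conditioned on this good partition, for each $m$ the subset $S_m$ contains a cluster of size $\geq\hat{t}$ enclosable in a ball of radius $r$, so Lemma~\ref{lem:CenterProcedureUtility} item~3 applies and produces (over the fresh internal randomness of call $m$) a center $\hat{y}\in Y_m$ whose $3cr$-ball meets $P$ with probability at least $n^{-a}/4$. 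Since the hash functions and LDP noise in the $M$ executions are drawn independently, conditional on the partition the success events are independent, so the probability that all $M$ calls fail is at most $(1-n^{-a}/4)^M\leq\exp(-Mn^{-a}/4)=\exp(-\ln(1/\beta))=\beta$. Finally, I would upgrade the captured point to the stronger statement of item~3: if some point $p_1\in P$ lies within $3cr$ of $\hat{y}$, then every other $p\in P$ satisfies $\|\hat{y}-p\|\leq 3cr+2r\leq 5cr$ by the triangle inequality (using $c>1$ and $\diam(P)\leq 2r$), so the $5cr$-ball around $\hat{y}$ contains all of $P$.

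The main obstacle I anticipate is keeping the independence bookkeeping honest: one must check that conditioning on the event ``every $S_m$ contains at least $\hat{t}$ points of $P$'' does not spoil the independence of the hash/noise randomness across the $M$ executions, which it does not, because those random choices are drawn after and independently of the partition. A secondary nuisance is the exact propagation of the ``$n$'' parameter inside each call (whether one uses $|S_m|\approx n/M$ or the ambient $n$ in the thresholds); both choices give the advertised bound after a cosmetic rescaling of the arbitrarily small constants $a,b$, so I would not belabor it and would simply collect the constants at the end with a generous union bound summing the $\beta/7$ failure probabilities from partition concentration, item~1's union bound, and any auxiliary concentration events invoked.
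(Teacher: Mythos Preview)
Your proposal is correct and follows essentially the same route as the paper: items~1 and~2 are reduced to Lemma~\ref{lem:CenterProcedureUtility} by a union bound and by summing the per-call size bounds, and item~3 is obtained by first using concentration (the paper says Chernoff rather than Hoeffding) to ensure each $S_m$ inherits at least $t/(2M)$ points of $P$, then invoking the $n^{-a}/4$ success guarantee of Lemma~\ref{lem:CenterProcedureUtility} independently across the $M$ executions, and finally upgrading the $3cr$ capture of one point of $P$ to a $5cr$ capture of all of $P$ via the triangle inequality. The paper's proof additionally bundles the event $n/(2M)\leq|I_m|\leq 2n/M$ into its concentration step to control the ``$n$'' parameter inside each call, which is exactly the secondary nuisance you flagged; beyond that, your argument matches the paper's.
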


\begin{proof}
Items~1 and~2 of the lemma follow directly from the properties of algorithm \texttt{CentersProcedure}. We now proceed with the analysis of item~3. 
Let $P\subseteq S$ be s.t.\ $|P|=t$ and $\diam(P)\leq r$, and consider the following good event, which happens with probability at least $1-\beta$ by the Chernoff bound (assuming that $t\geq 24 M \ln(\frac{eM}{\beta})$).

\begin{center}
\noindent\fboxother{
\parbox{.9\columnwidth}{
{\bf Event ${\boldsymbol{E_1}}$ (over partitioning ${\boldsymbol{S}}$ into ${\boldsymbol{S_1,\dots,S_M}}$): }
\begin{enumerate}[itemsep=0pt,topsep=4pt]
	\item[(a)] For every $m\in[M]$ we have $|P\cap S_m| \geq\frac{t}{2M}$.
	\item[(b)] For every $m\in[M]$ we have $\frac{n}{2M}\leq|I_m|\leq\frac{2n}{M}$.
\end{enumerate}

}}
\end{center}

We proceed with the analysis assuming that Event $E_1$ occurred. 
Let us say that the $m$th execution of \texttt{CentersProcedure} {\em succeeds} if $\exists y_m\in Y_m$ such that the ball of radius $3cr$ around $y_m$ contains at least one point from $P\cap S_m$. Recall that we assume that $t\geq O\left(\frac{n^{0.5+a+b}\cdot\sqrt{d}}{\eps}\log(\frac{1}{\beta})\log(\frac{dn}{\beta\delta})\right)$. Hence, by the properties of algorithm \texttt{CentersProcedure}, every single execution succeeds with probability at least $n^{-a}/4$. As the different executions of \texttt{CentersProcedure} are independent, when $M\geq4n^a \ln(\frac{1}{\beta})$, the probability that at least one execution succeeds is at least $1-\beta$. In this case, there is a point $y\in Y=Y_1\cup\dots\cup Y_m$ such that the ball of radius $3cr$ around it contains at least one point from $P$, and hence, the ball of radius $5cr$ around $y$ contains {\em all} of $P$.
\end{proof}

\vskip 0.2in

\end{document}